\newif\ifdraft
\newif\ifaccepted
\newif\ifarxiv
    \newlength\titlebox \setlength\titlebox{2.375in}
\newcommand*{\SavedLstInline}{}
\LetLtxMacro\SavedLstInline\lstinline
\DeclareRobustCommand*{\lstinline}{
  \ifmmode
    \let\SavedBGroup\bgroup
    \def\bgroup{
      \let\bgroup\SavedBGroup
      \hbox\bgroup
    }
  \fi
  \SavedLstInline
}
\newcommand{\code}[1]{\texttt{\lstinline[mathescape,classoffset=1,keywordstyle=\color{black},basicstyle=\color{black},classoffset=0,keywordstyle=\color{black}]{#1}}}
\crefname{listing}{Algorithm}{Algorithms}
\Crefname{listing}{Algorithm}{Algorithms}
\theoremstyle{definition}
\newcommand{\mynewtheorem}[3]{
\ifthenelse{\equal{#1}{theorem}}{
    \newtheorem{my#1}{#2}
}{
    \newtheorem{my#1}[mytheorem]{#2}
    \ifthenelse{\equal{#3}{}}{
        \crefname{my#1}{#2}{#2s}
    }{
        \crefname{my#1}{#2}{#3}
    }
}
}
\newenvironment{theorem}
  {\pushQED{\qed}\mytheorem}
  {\popQED\endmytheorem}
\newenvironment{lemma}
  {\pushQED{\qed}\mylemma}
  {\popQED\endmylemma}
\newenvironment{corollary}
  {\pushQED{\qed}\mycorollary}
  {\popQED\endmycorollary}
\newenvironment{proposition}
  {\pushQED{\qed}\myproposition}
  {\popQED\endmyproposition}
\newenvironment{definition}
  {\pushQED{\qed}\mydefinition}
  {\popQED\endmydefinition}
\newenvironment{assumption}
  {\pushQED{\qed}\myassumption}
  {\popQED\endmyassumption}
\newenvironment{remark}
  {\pushQED{\qed}\myremark}
  {\popQED\endmyremark}
\newenvironment{example}
  {\pushQED{\qed}\myexample}
  {\popQED\endmyexample}
\definecolor{darkred}{rgb}{.5,0,0}
\definecolor{darkgreen}{rgb}{0,.5,0}
\definecolor{darkblue}{rgb}{0,0,.5}
\definecolor{darkorange}{rgb}{.8,.4,0}
\newcommand{\todo}[1]{\textcolor{darkorange}{(\emph{TODO: #1})}}
\newcommand{\comment}[1]{\textcolor{gray}{(\emph{#1})}}
\newcommand{\warning}[1]{\textcolor{red}{(\emph{WARNING: #1})}}
\newcommand{\quest}[1]{\textcolor{darkgreen}{(\emph{Q: #1})}}
\newcommand{\todo}[1]{}
\newcommand{\comment}[1]{}
\newcommand{\warning}[1]{}
\newcommand{\quest}[1]{}
\newcounter{alphoversetcount}
\newcommand{\alphnextref}{\stepcounter{alphoversetcount}\text{(\alph{alphoversetcount})}}
\newcommand{\alphoverset}[1]{\overset{\alphnextref{}}{#1}}
\newcommand{\resetalph}{\setcounter{alphoversetcount}{0}}
\newenvironment{alphalign*}{
\csname align*\endcsname\resetalph{}
}{
\csname endalign*\endcsname\resetalph{}
}
\newcommand{\eg}{\emph{e.g.}, }
\DeclareMathOperator*{\expect}{\mathbb{E}}
\DeclareMathOperator*{\argmin}{argmin}
\DeclareMathOperator*{\arginf}{arginf}
\DeclareMathOperator*{\argmax}{argmax}
\newcommand{\Naturals}{{\mathbb N}_1}
\newcommand{\Nonnegints}{{\mathbb N}_0}
\newcommand{\Reals}{{\mathbb R}}
\newcommand{\indicator}[1]{\left[\!\left[#1\right]\!\right]}
\newcommand{\tuple}[1]{(#1)}
\newcommand{\innerprod}[1]{\langle #1 \rangle}
\newcommand{\probsimplex}[1]{\Delta_{\text{p}}(#1)}
\newcommand{\positive}[1]{\left(#1\right)_{\!+}}
\newcommand{\RE}[2]{\text{RE}(#1\|#2)}
\newcommand{\eps}{\varepsilon}
\DeclareMathOperator*{\iso}{iso}
\newcommand{\rate}{\eta}
\newcommand{\loss}{\ell}
\newcommand{\invrate}{\rate^{-1}}
\newcommand{\QQ}{q}
\newcommand{\diam}{D}
\newcommand{\ptx}{\bm{x}}
\newcommand{\ptxset}{\mathcal{X}}
\newcommand{\barT}{\mathcal{T}_{\emptyset}}
\newcommand{\breg}{B}
\newcommand{\regret}{\mathcal{R}}
\DeclareMathOperator*{\projection}{\ensuremath{\Pi}}
\newcommand{\proj}[2]{\projection_{#1}\left[#2\right]}
\begin{document}

\ifarxiv
    \title{Isotuning With Applications To Scale-Free Online Learning}
    \author{Laurent Orseau \and Marcus Hutter}
    \date{\{lorseau,mhutter\}@google.com\\
    DeepMind, London, UK}
    \maketitle
\else
    \twocolumn[
    \aistatstitle{Isotuning with Applications to Scale-Free Online Learning}
    \aistatsauthor{Laurent Orseau \And Marcus Hutter}
    \aistatsaddress{DeepMind, London, UK \And  DeepMind, London, UK} ]
\fi

\begin{abstract}
We extend and combine several tools of the literature to design fast, adaptive, anytime and scale-free online learning algorithms.
Scale-free regret bounds must scale linearly with the maximum loss, both toward large losses and toward very small losses.
Adaptive regret bounds demonstrate that an algorithm can take advantage of easy data and potentially have constant regret.
We seek to develop fast algorithms that depend on as few parameters as possible,
in particular they should be anytime and thus not depend on the time horizon.
Our first and main tool, \emph{isotuning}, is a generalization of the idea of
designing adaptive learning rates that balance the trade-off of the regret. 
We provide a simple and versatile theorem that can be applied to a wide range of settings, and competes with the best balancing in hindsight within a factor 2.
The second tool is an \emph{online correction}, which allows us to obtain centered bounds for many algorithms, to prevent the regret bounds from being vacuous when the domain is overly large or only partially constrained.
The last tool, \emph{null updates}, prevents the algorithm from performing overly large updates, which could result in unbounded regret, or even invalid updates.
We develop a general theory to combine all these tools and apply it to several standard algorithms.
In particular, we (almost entirely) restore the adaptivity to small losses of FTRL for unbounded domains, design and prove scale-free adaptive guarantees for a variant of Mirror Descent (at least when the Bregman divergence is convex in its second argument), extend Adapt-ML-Prod to scale-free guarantees, and provide several additional contributions about Prod, AdaHedge, BOA and Soft-Bayes.
\end{abstract}

\section{INTRODUCTION}

\warning{DRAFT MODE}

In online convex optimization~\citep{hazan2016oco}, at each round $t=1, 2, \dots$, the learner selects a point $\ptx_t\in\ptxset\subseteq \Reals^N$ and incurs a loss
$\loss_t(\ptx_t)$, with $\loss_t:\ptxset\to\Reals$ convex.
We also use $\loss_t\in\Reals^N$ as a loss vector, with incurred loss $\innerprod{\ptx_t, \loss_t}$ when considering online linear optimization.
The objective of the learner is to minimize its cumulative loss over $T$ rounds (with $T$ unknown in advance), and 
the difference with the cumulative loss incurred by the best constant point $\ptx^*\in\ptxset$ in hindsight is called the regret.
Consider the following simplified regret bound (\eg online gradient descent, \citet{Zin03}) featuring a time-varying learning rate $\{\rate_t\}_{t\in[T]}$:
\begin{align}\label{eq:regret_intro}
    \regret_T(\ptx^*) \leq \frac{\QQ}{\rate_T} + \sum_{t\in[T]} \rate_t a_t\,. 
\end{align}
We would like to design a sequence of (adaptive) learning rates $\rate_1, \rate_2, \dots$ without knowledge of the time horizon $T$ and with the $a_t$ discovered sequentially.
The usual process of the researcher is to first replace the learning rate with a constant one
which minimizes the trade-off $\QQ/\rate + \rate \sum_t a_t$, that is, $\rate = \sqrt{\QQ/\sum_t a_t}$ in hindsight.
The second step is to make it time varying: $\rate_t = \sqrt{\QQ/\sum_{s\leq t} a_s}$.
The last step is to analyze the regret, which often involves ad-hoc lemmas such as $\sum_t a_t/\sqrt{\sum_{s\leq t} a_s} \leq 2\sqrt{\sum_t a_t}$ (\eg \citet[Lemma 3.5]{auer2002adaptive}, \citet[Lemma 3]{orabona2018solo}, \citet[Lemma 4.8]{pogodin20first}).
Instead, following the ideas of \citet{bartlett2007adaptive,derooij2014follow}, 
we tune the learning rate so as to sequentially \emph{equate} the two terms of the regret --- in this particular case, $\QQ/\rate_T = \sum_t \rate_t a_t$.
Hence, instead of having to bound terms of the regret in interaction with a time-varying learning rate, we now only have to bound $2\QQ/\rate_T$, that is, only the last learning rate.
It can be shown that this provides a bound that is within a factor 2 of the bound that uses the constant optimal learning rate for the same observed quantities, independently of the regime of the regret.

In this paper we are particularly interested in designing learning algorithms with the following properties:
(i) anytime: the horizon $T$ is unknown,
(ii) scale-free: the losses are in $\Reals^N$ for $N\in\Naturals$ are only sequentially revealed, the range of the losses is unknown in advance, the regret bound must scale linearly with the maximum loss (whether large or very small) or the loss range (on the simplex),
(iii) adaptive: the regret bound should feature a main term like $\sqrt{\sum_t \|\loss_t\|^2}$ (instead of $L\sqrt{T}$ with $L=\max_t \|\loss_t\|$) so that a smaller bound can be shown for easy problems and the algorithm converges faster,
(iv) fast: we are only interested in algorithms that have $O(N)$ computation complexity per round.

While a desirable property, scale-free bounds and algorithms are not so easy to obtain (see for example the discussions in \citet{orabona2018solo,mhammedi2020freerange}):
in particular even a constant additive term makes the bound vacuous for very small 
losses, as $LT$ may be much smaller than 1.
Prod and AdaHedge are possibly the first adaptive scale-free algorithms for the probability simplex~\citep{cesabianchi2007prod,derooij2014follow}.
Important progress has been made by \citet{orabona2018solo},
who propose a Follow The Regularized Leader (FTRL) algorithm~\citep{cesabianchi2006prediction,shalev2007online} with a scale-free bound for any convex regularizer for both bounded and unbounded domains.
Its main drawback is that while it is adaptive to small loss vectors $\loss_t$ with a bound of the form $O(\sqrt{\sum_t \|\loss_t\|_*^2})$ for constrained domains,
this adaptivity is lost for unconstrained domains and the bound degrades to $O(L\sqrt{T})$.
\citet{orabona2018solo} also propose a scale-free variant of Mirror Descent (MD)~\citep{beck2003mirror}, but it requires a bound on the maximum Bregman divergence between any two points, which excludes the most interesting cases (`AdaHedge'-like and unconstrained `gradient descent'-like)---they also provide lower bounds on this algorithm demonstrating its failure cases.
(See more discussion in \cref{sec:related}.)

In this paper we combine and improve several tools of the literature (local regret balancing, online correction, null updates) to help design algorithms that fit our requirements detailed above.
In particular, we generalize the local balancing ideas of \citet{bartlett2007adaptive,derooij2014follow} into the isotuning framework,
which tools allow us to simplify and often strengthen the analysis of various algorithms, as well as analyze more adaptive learning rates --- see in particular the isoSoft-Bayes analysis in \cref{apdx:soft-bayes}, which would likely have been difficult without isotuning.
We derive a new FTRL algorithm with scale-free regret bounds, both for constrained and unconstrained domains and restore \emph{almost} entirely the adaptivity to small losses for the unconstrained case:
Instead of a $O(L\sqrt{T})$ additive term, we obtain a $O(L\sqrt{\tau})$ one,
where $\tau$ is the last step where the loss exceeds the \emph{sum of all} previously observed losses.
While in the worst case this term is still $O(L\sqrt{T})$, it remains $O(1)$ as long as losses do not keep increasing exponentially
(\eg i.i.d.\! losses).
In combination with isotuning, this enhancement is obtained by performing \emph{null updates}, which prevents the algorithm from performing large updates when the scale of the losses suddenly increases.
Such skipping of the updates has been used recently to tackle overly long feedback delays for multi-armed bandits \citep{thune2019delays,zimmert2020delays} --- but not to obtain scale-free properties.
We also derive a Mirror-Descent algorithm with the same properties as our FTRL variant for both constrained and unconstrained domains,
at least for Bregman divergences that are convex in their second argument---this includes the `AdaHedge' and unconstrained gradient descent cases.
This is made possible by generalizing the usage of the \emph{online correction} 
of \citet{orseau2017softbayes},
which `centers' the regret bound and replaces the term $\breg_R(\ptx^*, \ptx_T)$ with $\breg_R(\ptx^*, \ptx_1)$.
With the same tools we also derive new (anytime, adaptive) scale-free regret bounds for Prod~\citep{cesabianchi2007prod} (\cref{sec:prod}), Adapt-ML-Prod~\citep{gaillard2014secondorder} (\cref{sec:mlu-prod}),
BOA~\citep{wintenberger2016bernstein} (\cref{rmk:boa} in \cref{apdx:mlu-prod}).
We also provide a more adaptive bound for the Soft-Bayes algorithm~\citep{orseau2017softbayes} (\cref{apdx:soft-bayes}),
for which isotuning eliminates the need to track the set of `good' experts.
We also design a variant of AdaHedge that uses a Mirror-Descent-like update rather than the original FTRL-like update (\cref{apdx:adahedge}),
and even slightly improve the regret bound.
All algorithms retain their original time complexity of $O(N)$ per round.

To do so, we first develop a general theory (\cref{sec:theory}) and build our set of tools. We start with isotuning (\cref{sec:isotuning}), 
and demonstrate its use on a few examples, 
then introduce the online correction (\cref{sec:generic_ob1}
while using gradient descent as a rolling example,
followed by null updates (\cref{sec:null_updates}), before applying these tools to several algorithms.
We finish with more discussion about the literature.

\section{NOTATION AND BACKGROUND}

A table of notation is given on page \pageref{sec:notation_table}.
The set of positive integers is $\Naturals$,
and we write $[T] = \{1, 2, \dots, T\}$.
Define $\positive{x} = \max\{0, x\}$.
The scalar product of two vectors $a$ and $b$ is denoted $\innerprod{a, b}$.
Define $\probsimplex{N} = \{\ptx\in[0, 1]^N: \|\ptx\|_1=1\}$ be the probability simplex of $N$ coordinates, with $N\in\Naturals$.
Let $\ptxset\subseteq \Reals^N$ be a non-empty closed convex set.
Let $\ptx^* \in\ptxset$ be a fixed point chosen in hindsight.
Let $\ptx_t \in \ptxset$ for all $t\in\Naturals$ be the sequence of points chosen by the learner.
In online convex optimization, the loss function at step $t$ is $\loss_t:\ptxset\to\Reals$ and the instantaneous regret is $r_t = \loss_t(\ptx_t) - \loss_t(\ptx^*)$,
while in online linear optimization
the loss is a vector $\loss_t \in\Reals^N$ and the instantaneous regret is 
$r_t = \innerprod{\ptx_t - \ptx^*, \loss_t}$.
The cumulative regret up to step $T$ is $\regret_T(\ptx^*) = \sum_{t\in[T]} r_t$, which is the quantity of interest to minimize.
Let $\phi: \ptxset\to\Reals$ be a convex function, used for
analyzing the regret.
We abbreviate $\phi_t \equiv \phi(\ptx_t)$.
Let $\|\cdot\|$ be some norm on $\Reals^N$, then:
$\|\cdot\|_*$ is its dual norm;
$\diam = \sup_{x, y, \in\ptxset}\|x-y\|$ is the diameter of $\ptxset$;
$R:\ptxset\to\Reals^+$ is a regularizer function that is differentiable and 1-strongly convex with respect to $\|\cdot\|$
and $R^*$ is its Fenchel conjugate;
$\breg_R$ is a Bregman divergence associated with $R$, that is, 
$\forall (\ptx, \ptx')\in\ptxset^2,\breg_R(\ptx, \ptx') = R(\ptx) - R(\ptx') - \innerprod{\ptx - \ptx', \nabla R(\ptx')}.$
The relative entropy (Kullback-Leibler divergence)
of $(a, b)\in\probsimplex{N}^2$
is $\RE{a}{b} = \sum_{i\in[N]} a_i \ln (a_i/b_i)$.
The indicator function is $\indicator{test}$ and is equal to 1 if $test$ is true, 0 otherwise.
Let $\rate_t\geq 0$ for all $t\in\Naturals$ be the \emph{learning rate}.
Let $\QQ > 0$, which will be a parameter of most algorithms.
We will also often --- but not always --- use the following assumptions:
\begin{assumption}[Generic isotuning assumptions]\label{ass:generic_isotuning}
\begin{align*}
 &\QQ > 0\,, 
 \ \Delta_0 = 0\,, \\
 &\forall t\in\Naturals:
  \delta_t \geq 0\,,\ 
  \Delta_t = \frac{\QQ}{\rate_t} = \sum_{s\in[t]}\delta_s\,.
  \qedhere
\end{align*}
\end{assumption}
These assumptions are used for AdaHedge~\citep{derooij2014follow} for a specific definition of $\delta_t$ for the Hedge algorithm and with $q=\ln N$.
In what follows we will provide a more general definition of $\delta_t$ to accommodate
a wide range of situations.

\section{GENERAL THEORY}\label{sec:theory}

We describe isotuning, then online correction, and finally null updates,
and we provide generic regret bounds that we will use in our applications.

\subsection{Isotuning}\label{sec:isotuning}

As explained in the introduction, isotuning is based on the idea of sequentially balancing a trade-off between the two terms of the regret.
We first introduce a non-sequential form of this idea,
as depicted in \cref{fig:isopoint}, which we will build upon for the sequential form, and also re-use as-is when dealing with null updates.
We define the $\iso$ operator for convenience, and then give a very simple isopoint lemma that we will use multiple times.

\begin{definition}[Iso operator]
Let $\mathcal{S}\subseteq\Reals$ be a convex set.
Define the operator $\iso_{x\in\mathcal{S}}(f(x), g(x)) = f(s)$
where $s\in\mathcal{S}$ is such that $f(s) = g(s)$
for $f$ continuous and non-decreasing
(resp. non-increasing)
on $\mathcal{S}$, and $g$ continuous and non-increasing
(resp. non-decreasing)
on $\mathcal{S}$.
Then also $s=\arg\iso_\mathcal{S}(f, g)$ with ties broken in favour of small $s$.
If $s$ does not exist the result is undefined.
\end{definition}

\begin{lemma}[Isopoint]\label{thm:isopoint}
Let $\mathcal{S}\subseteq \Reals$ be convex, and let $f$
be continuous non-decreasing on $\mathcal{S}$ and let $g$
be continuous non-increasing on $\mathcal{S}$.
If $\iso_{s\in\mathcal{S}}(f(s), g(s))$ exists then
for all $x\in\mathcal{S}$,
\begin{align*}
    \min\{f(x),\ &g(x)\}\ \leq \\
        &\iso_{s\in\mathcal{S}}(f(s),\ g(s))\ \leq\ \max\{f(x),\ g(x)\}\,.
        \qedhere
\end{align*}
\end{lemma}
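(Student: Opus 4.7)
The plan is to argue by a case split on the position of $x$ relative to the isopoint $s^* \coloneqq \arg\iso_{s\in\mathcal{S}}(f(s), g(s))$, which exists by assumption and satisfies $f(s^*) = g(s^*)$. Since the value of the iso operator equals this common value, it suffices to show $\min\{f(x), g(x)\} \le f(s^*) \le \max\{f(x), g(x)\}$ for every $x\in\mathcal{S}$.

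First I would handle the case $x \le s^*$. Monotonicity of $f$ (non-decreasing) gives $f(x) \le f(s^*)$, and monotonicity of $g$ (non-increasing) gives $g(x) \ge g(s^*) = f(s^*)$. Hence $f(x) \le f(s^*) \le g(x)$, so taking the minimum on the left and the maximum on the right yields $\min\{f(x), g(x)\} = f(x) \le f(s^*) \le g(x) = \max\{f(x), g(x)\}$.

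Symmetrically, for $x \ge s^*$, we get $f(x) \ge f(s^*)$ and $g(x) \le g(s^*) = f(s^*)$, so $g(x) \le f(s^*) \le f(x)$, and the same conclusion follows with the roles of $f$ and $g$ swapped in the min/max. Combining the two cases and noting that $\mathcal{S}\subseteq\Reals$ is totally ordered so every $x$ falls into at least one of them, the double inequality holds for all $x\in\mathcal{S}$.

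The proof is essentially immediate from monotonicity; no obstacle is anticipated. The only subtle point is ensuring the symmetric version (where $f$ is non-increasing and $g$ non-decreasing, as allowed by the $\iso$ definition) is also covered; this follows from the exact same argument with the inequalities reversed, since the statement of the lemma is symmetric under swapping $f$ and $g$.
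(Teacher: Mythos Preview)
Your proof is correct and follows essentially the same approach as the paper: a case split on whether $x \le s^*$ or $x > s^*$, followed by a direct appeal to the monotonicity of $f$ and $g$. The paper's version is simply more terse, omitting the explicit unpacking of the $\min$/$\max$ and the remark about the symmetric variant of the $\iso$ operator.
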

\begin{proof}
Let $s$ be such that $f(s) = g(s)$.
Then by the monotonicity of $f$ and $g$, if $x \leq s$, then
$f(x) \leq f(s)=g(s) \leq g(x)$, and conversely for $x > s$.
\end{proof}

\begin{remark}
if $x^= = \arg\iso(f, g)$ and $x^* = \argmin f+g$
then $f(x^=)+g(x^=) \leq 2 \max\{f(x^*), g(x^*)\}$.
\end{remark}

\begin{figure}
    \centering
    \includegraphics[width=0.75\linewidth]{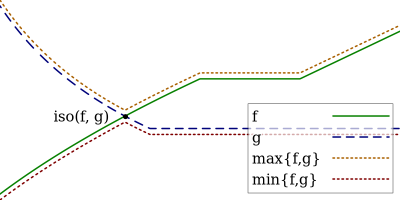}
    \caption{If $f$ increases and $g$ decreases, then 
    for all $x:\min\{f(x), g(x)\}\ \leq\ \iso(f, g)\ \leq\ \max\{f(x), g(x)\}$.}
    \label{fig:isopoint}
\end{figure}

Next, we consider a sequential form of the isopoint operator and of the isopoint lemma, for the special case
\footnote{The `more general' case of an increasing $f$ can still be obtained by
a change of variables.}
where $f(x) = x$ for $x \geq 0$.

\begin{definition}[Isotuning sequence]\label{def:isotuning_seq}
A pair $\tuple{X, g}$ is an \emph{isotuning sequence}
if the following conditions hold
\footnote{These conditions have been simplified at v3 of the paper.}
for all $t\in\Naturals$:
\begin{itemize}
    \item[(i)] $g_t: [0,\infty) \to [0, \infty]$ is continuous non-increasing,
    \item[(ii)] $\lim_{x\to\infty}g_t(x) < \infty$,
    \item[(iii)] $X_0 = 0$,
    \item[(iv)] $X_t\ =\ X_{t-1} + g_t(X_t)$, \qedhere
\end{itemize}
\end{definition}

Condition (ii) ensures that $X_t$ exists and is unique,
as shown in \cref{apdx:isotuning}.
Condition (iv) is an isotuning update: $X_t~=~\iso_{y\in\Reals}(y,\ X_{t-1} + g_t(y))$.

\begin{remark}
$\forall T:\ X_T + \sum_{t=1}^T g_t(X_t)\ =\ 2X_T$.
\end{remark}

We now give our central theorem, which bounds the isotuning sequence within a factor 2 of the optimal tuning in hindsight, for all sequences of positive non-increasing $\{g_t\}_t$.

\begin{theorem}[Isotuning]\label{thm:isotune}
Let $\tuple{X, g}$ be an isotuning sequence and define, for all $T\in\Nonnegints$,
\begin{align*}
    M_T(x) = x + \sum_{t\in[T]} g_t(x)\,,
    \quad\text{and}\quad
    M^*_T = \inf_{x>0} M_T(x)\,,
\end{align*}
then for all $T \in\Nonnegints$, 
\begin{equation*}
    \tfrac12 M^*_T \leq \tfrac12 M_T(X_T) \leq X_T= \sum_{t\in[T]}g_t(X_t) \leq M^*_T\,.
    \qedhere
\end{equation*}
\end{theorem}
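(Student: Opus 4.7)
First, telescoping the defining recursion $X_t = X_{t-1} + g_t(X_t)$ from $X_0 = 0$ immediately gives $X_T = \sum_{t \in [T]} g_t(X_t)$, which is the equality asserted in the statement. Since each $g_t(X_t) \geq 0$, the sequence $(X_t)_{t \geq 0}$ is non-decreasing, a fact I will use throughout. The middle inequality $\tfrac{1}{2} M_T(X_T) \leq X_T$ is then a one-line computation: expanding $M_T(X_T) = X_T + \sum_t g_t(X_T)$, monotonicity gives $X_t \leq X_T$, and $g_t$ non-increasing gives $g_t(X_T) \leq g_t(X_t)$; summing and substituting the telescoped identity yields $M_T(X_T) \leq 2 X_T$.

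For the leftmost inequality I would verify that $X_T$ lies in (the closure of) the domain over which $M^*_T$ is the infimum. Condition (iv) of \cref{def:isotuning_seq} gives $X_t \geq g^\low_t$ for all $t \in [T]$; combined with the monotonicity above and $X_T \geq 0$, this yields $X_T \geq g^{\inf}_T$. The boundary value $M_T(g^{\inf}_T)$ is well-defined thanks to the right-continuous extension $g_t(g^\low_t) = \lim_{\eps\to 0^+} g_t(g^\low_t + \eps)$ from condition (i), so $M_T(X_T) \geq M^*_T$.

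The main obstacle is the rightmost inequality $X_T \leq M^*_T$, for which I would argue $M_T(x) \geq X_T$ for every admissible $x > g^{\inf}_T$ and then take the infimum. If $x \geq X_T$, the bound follows immediately from $M_T(x) \geq x$. Otherwise $X_0 = 0 < x < X_T$, so by monotonicity of $(X_t)$ there is a least index $t^* \in [T]$ with $X_{t^*} \geq x$, and by minimality $X_{t^*-1} < x$ (with $X_0 = 0$ if $t^* = 1$). For $t \geq t^*$ we have $X_t \geq x$, hence $g_t(x) \geq g_t(X_t)$ by the non-increasingness of $g_t$. Dropping the non-negative terms with $t < t^*$ from $\sum_t g_t(x)$ and using the telescoped identity $\sum_{t \geq t^*} g_t(X_t) = X_T - X_{t^*-1}$, I obtain
\begin{align*}
    M_T(x) \;\geq\; x + \sum_{t \geq t^*} g_t(X_t) \;=\; x + X_T - X_{t^*-1} \;>\; X_T.
\end{align*}
Taking the infimum over $x$ yields $X_T \leq M^*_T$ and completes the proof.
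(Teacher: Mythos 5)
Your proof is correct and follows essentially the same strategy as the paper's: the middle inequality and lower bound are identical (telescoping, monotonicity of $(X_t)$, non-increasingness of $g_t$), and the upper bound uses the same key idea of splitting the sum at the first index where $X_t$ crosses the comparison point. The one (mild) improvement is that the paper's proof fixes $x^*_T = \arginf_{x > g^{\inf}_T} M_T(x)$, implicitly assuming the infimum is attained, whereas you prove $M_T(x) \geq X_T$ for every admissible $x$ and take the infimum afterwards, which sidesteps that attainment issue.
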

\begin{proof}
For the lower bound,
for all $t\in[T]$,
since $g_t$ is non-increasing and $X_T \geq X_t$, 
\begin{align*}
    2X_T &= X_T + \sum_{t\in[T]}g_t(X_t) \geq X_T + \sum_{t\in[T]} g_t(X_T) \\
    &= M_T(X_T) \geq M^*_T \,.
\end{align*}
For the upper bound, 
let $x^*_T = \arginf_{x \geq 0} M_T(x)$ and
let $\tau = \min\{t\in[T]: X_t \geq x^*_T\}$.
If $\tau$ does not exist then $X_T < x^*_T \leq M^*_T$, proving the claim.
Otherwise, since $g_t$ is non-increasing then for all $t\geq \tau$ we have 
$g_t(X_t) \leq g_t(x^*_T)$ and so
\begin{align*}
    X_T &= X_{\tau-1} + \sum_{t=\tau}^T g_t(X_t) \leq x^*_T + \sum_{t=\tau}^T g_t(x^*_T) \\
    &\leq M_T(x^*_T) = M^*_T\,.
    \qedhere
\end{align*}
\end{proof}

The r.h.s.\! of \cref{thm:isotune} can be written as follows.
\begin{mdframed}
For all $T\in\Nonnegints$,
if $X_0 = 0$
and $X_t = X_{t-1} + g_t(X_t)$
for all $t\in[T]$
with $g_t:\Reals\to[0, \infty]$ continuous non-increasing, then
\begin{align}\label{eq:isotuning_upper_bound}
    X_T = \sum_{t \in [T]} g(X_t)\ \leq\ \inf_{x \geq 0} \left\{x + \sum_{t\in[T]} g_t(x)\right\}
\end{align}
\end{mdframed}

\begin{example}
Consider the regret bound~\cref{eq:regret_intro} of the introduction.
We use \cref{ass:generic_isotuning} and so $\Delta_T = \QQ/\rate_T = \sum_t \delta_t$.
Simply define $\delta_t = \rate_t a_t$, then:
\begin{align*}
    \regret_T(\ptx^*) \leq \Delta_T + \sum_{t\in[T]} \delta_t = 2\Delta_T\,.
\end{align*}
Define $g_t(x) = \QQ a_t/x$ such that $\delta_t = g_t(\Delta_t)$ for all $t\in[T]$, 
and $g_t(0) = \infty$,
then by \cref{def:isotuning_seq}, $\tuple{\Delta, g}$ is an isotuning sequence
(we can extend trivially $g_t(\cdot)=0$ for $t > T$).
Thus, by \cref{thm:isotune} (and \cref{eq:isotuning_upper_bound}) where $X_t \leadsto \Delta_t$ we obtain immediately
\begin{equation*}
    \Delta_T \leq M^*_T
    = \inf_{y>0}\left\{y + \sum_{t\in[T]} \frac{\QQ a_t}{y}\right\} = 2\sqrt{\QQ\sum_{t\in[T]}a_t}\,.
    \qedhere
\end{equation*}
\end{example}

\begin{example}
To demonstrate the adaptivity of isotuning, assume instead that the regret bound is
\begin{align*}
    \regret_T(\ptx^*) \leq \frac{\QQ}{\rate_T} + \sum_{t\in[T]} \sqrt{\rate_t} a_t\,.
\end{align*}
For example, such a bound can be extracted from Eq. 10 and Prop. 7 by \citet{lattimore2020monitoring}, in the context of partial monitoring.
Then we also just set $\delta_t = \sqrt{\rate_t} a_t = \sqrt{\QQ/\Delta_t}a_t$
and use  \cref{ass:generic_isotuning}, so by \cref{thm:isotune} 
(and \cref{eq:isotuning_upper_bound}),
\begin{align*}
    \regret_T&(\ptx^*)\ \leq\ 2\Delta_T\ \leq\ 2M^*_T \\
    &= 2\inf_{y>0}\left\{y + \sum_{t\in[T]} \sqrt{\frac{\QQ}{y}}a_t\right\} 
    = 2c\QQ^{\tfrac13}\bigg(\sum_{t\in[T]}a_t\bigg)^{\tfrac23}
\end{align*}
where $c=2^{-\frac23} + 2^{\frac13}$.
Observe that with a change of variables we also have
\begin{align*}
    \regret_T&(\ptx^*) \leq
     2\inf_{\rate >0}\left\{\frac{\QQ}{\rate} + \sum_{t\in[T]} \sqrt{\rate}a_t\right\} 
\end{align*}
which again shows that the regret is within a factor 2 of the optimal constant learning rate in hindsight.
\end{example}

\subsection{Example: AOGD Revisited}\label{sec:aogd}

As mentioned earlier, our isotuning tools are inspired by the analysis of \citet{bartlett2007adaptive}, who propose an adaptive online gradient descent algorithm that automatically and sequentially adapts to the strong convexity of the loss functions,
by introducing an additional regularizer.
We show how to use our generic isotuning tools
to recover this adaptive result, while leading to a simplified analysis
and a slightly better and simpler bound, by tuning only the learning rate of online gradient descent without needing an additional regularizer.
See \cref{apdx:aogd} for additional information.

Assume that for all $t\in[T], \loss_t$ is $\alpha_t$-strongly convex, that is,
for all $(x, y) \in\ptxset^2$, 
$\loss_t(y) \geq \loss_t(x) + \innerprod{y -x, \nabla\loss_t(x)} + \frac{\alpha_t}{2}\|y-x\|_2^2$,
We write $\nabla_t \equiv \nabla \loss_t(\ptx_t)$.
Define $\alpha_{1:t} = \sum_{s\in[t]} \alpha_t$,
and $r_t = \loss_t(\ptx_t) - \loss_t(\ptx^*)$.
We use the standard online gradient descent update with projection onto $\ptxset$: $\ptx_{t+1} = \proj{\ptxset}{\ptx_t - \rate_t \nabla_t}$.

Asuming $1/\rate_0 = 0$ and $\max_t \|\ptx_t-\ptx^*\|_2^2 \leq \diam^2$ and taking $\QQ = \diam^2$,
then by \cref{lem:aogd} (see \cref{apdx:aogd})
\begin{align*}
    2\,\regret_T(\ptx^*) \leq
    \underbrace{\QQ\left(\frac{1}{\rate_T} - \alpha_{1:T}\right)}_{(A)} + 
    \underbrace{\sum_{t\in[T]} \rate_t \|\nabla_t\|^2}_{(B)}\,.
\end{align*}
The term $(A)$ is decreasing with the learning rate, while the term $(B)$ is increasing with it.
Hence, to balance the two terms, we define $\delta_t = \rate_t \|\nabla_t\|_2^2$ and $\Delta_T = \sum_{t\in[T]} \delta_t$,
and we also define $\rate_T$ (for all $T$) such that $\Delta_T = \QQ(\frac{1}{\rate_T} - \alpha_{1:T})$.
Thus,
\begin{align*}
    \regret_T(\ptx^*) \leq \Delta_T\,.
\end{align*}
Stepwise, this corresponds to the update
\begin{align*}
    \Delta_t &= \Delta_{t-1} + \delta_t\,, \\
    \QQ\left(\frac{1}{\rate_t} - \alpha_{1:t}\right) &= \QQ\left(\frac{1}{\rate_{t-1}} - \alpha_{1:t-1}\right) + \rate_t\|\nabla_t\|_2^2
\end{align*}
which is solved for $\invrate_{t} = \psi(\invrate_{t-1}+\alpha_t, 4\|\nabla_t\|_2^2/\QQ)$
with $\psi(a, b) = (a+ \sqrt{a^2 + b})/2$; the learning rate $\rate_t$ is thus nonnegative non-increasing with $t$.
To use \cref{thm:isotune}, we need to express $\delta_t$ as a function of $\Delta_t$ (as in \cref{def:isotuning_seq}, condition (iv)),
using $\Delta_t = \QQ(\frac{1}{\rate_t} - \alpha_{1:t})$:
\begin{align*}
    \delta_t = \rate_t \|\nabla_t\|_2^2 = \frac{\QQ\|\nabla_t\|_2^2}{\Delta_t + \QQ\alpha_{1:t}}
    \ =:\  g_t(\Delta_t)\,.
\end{align*}
We can now apply \cref{thm:isotune} (see \cref{eq:isotuning_upper_bound})
on the isotuning sequence $\tuple{\Delta, g}$:
\begin{align*}
    \regret_T(\ptx^*) &\leq \inf_{y \geq 0} \left\{y + \sum_{t\in[T]} \frac{\QQ\|\nabla_t\|_2^2}{y + \QQ\alpha_{1:t}}\right\}\,.
\end{align*}
Finally, making a change of variables $y = z\QQ$
and recalling that $\QQ = \diam^2$,
\newcommand{\isoaogdregret}{
\begin{align*}
    \regret_T(\ptx^*) &\leq  \inf_{z \geq 0} \left\{z\diam^2 + \sum_{t\in[T]} \frac{\|\nabla_t\|_2^2}{z + \alpha_{1:t}}\right\}\,.
\end{align*}}
\isoaogdregret
We have thus recovered (and slightly improved) the adaptivity to sequential strong-convexity of the original work (although in a simpler form): if $\alpha_{1:t}=\Omega(t)$ we can take $y=0$ and get a regret bound in $O(G^2\log T)$ where $G = \max_{t\in[T]}\|\nabla_t\|_2$, 
but even if $\alpha_{1:T} = 0$ we also can take $y = \diam\sqrt{\sum_{t\in[T]}\|\nabla_t\|_2^2}$
to obtain a regret bounded by $2y$.
All intermediate regimes can be retrieved as well.
See \cref{apdx:aogd} for more information.

\subsection{A Refined Bound}

\cref{thm:isotune} is fairly general and can accommodate many cases,
in particular since formulas of the form $X \leq \inf_{y} F(y)$ can easily be simplified by changes of variables, or by picking some good enough value for $y$.
However, the special case $g_t(x) = \frac{a_t}{x}$ is frequent enough that is deserves a slightly improved bound.
\begin{lemma}\label{lem:Delta_sqrt_short}
Let $\tuple{X, g}$ be an isotuning sequence where $g_t(x) = a_t/x$ with $a_t\geq0$ and $g_t(0) = \lim_{x\to 0^+} g_t(x)$, for all $t\in[T]$,
then
\begin{align*}
    X_T \leq \sqrt{2\sum_{t\in[T]}a_t}\,.
    &\qedhere
\end{align*}
\end{lemma}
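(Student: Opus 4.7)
The plan is to exploit the explicit recursion that the hypothesis $g_t(x)=a_t/x$ induces on $X_t$, and then telescope squared quantities, which is tighter than applying \cref{thm:isotune} directly (the generic bound would give a constant $2$ instead of $\sqrt{2}$).

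First, I would unfold \cref{def:isotuning_seq} in this special case. With $g_t(x)=a_t/x$ we have $g^\low_t=0$, and the defining equation $X_t = X_{t-1}+g_t(X_t)$ becomes $X_t = X_{t-1}+a_t/X_t$, or equivalently $X_t(X_t-X_{t-1}) = a_t$, whenever $X_t>0$. Since $a_t\geq 0$, this recursion also shows that $X_t\geq X_{t-1}\geq 0$, so $X$ is non-decreasing. (The degenerate case $X_t=0$ can only occur when $a_1=\dots=a_t=0$, in which case both sides of the claim are zero, so there is nothing to prove for that prefix; once some $a_s>0$ one gets $X_t>0$ for all $t\geq s$, and the recursion is well-defined thereafter.)

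Next, I would telescope the squares. Using the factorisation
\begin{align*}
X_t^2 - X_{t-1}^2 &= (X_t - X_{t-1})(X_t + X_{t-1}) \\
&\leq 2 X_t (X_t - X_{t-1}) = 2 a_t,
\end{align*}
where the inequality uses $X_{t-1}\leq X_t$ and the final identity uses the recursion. Summing from $t=1$ to $T$ and recalling $X_0 = 0$ gives $X_T^2 \leq 2\sum_{t\in[T]} a_t$, from which the claim follows by taking square roots.

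The only subtle point is handling the edge case where $X_t$ may vanish for small $t$; this is easily dispatched as above since then the corresponding $a_t$ vanish too. Everything else is a direct consequence of the defining recursion and monotonicity of $X$, so no additional machinery beyond \cref{def:isotuning_seq} is required.
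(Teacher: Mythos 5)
Your proof is correct and follows essentially the same route as the paper: both telescope $X_t^2 - X_{t-1}^2 \leq 2(X_t-X_{t-1})X_t = 2a_t$ and sum, yielding $X_T^2 \leq 2\sum_t a_t$. The paper packages this step through the generic \cref{lem:isotuning_bound_convex} applied with $f(x)=x^2$ (where the inequality $f(X_t)-f(X_{t-1})\le (X_t-X_{t-1})f'(X_t)$ is just convexity of the square), while you obtain the identical inequality directly from the factorisation $(X_t-X_{t-1})(X_t+X_{t-1})$ and monotonicity of $X$; your handling of the degenerate $X_t=0$ prefix is a small extra care point that the paper implicitly absorbs via the $g_t(g_t^{\low})=\lim_{\eps\to0^+}$ convention.
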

The result follows by \cref{lem:Delta_sqrt} in \cref{apdx:isotuning_refined},
and it can be easily adapted to various other cases.
When applicable, this improved bound saves a factor $\sqrt{2}$ over \cref{thm:isotune}.

\subsection{Online Correction}\label{sec:generic_ob1}

The standard analysis for online gradient descent (\eg \citet{hazan2016oco}) introduces a factor 
$\max_{t\in[T]} \|\ptx_t - \ptx^*\|^2\leq \diam^2$ in the regret bound.
However, $\diam=\infty$ for unbounded domains.
Some algorithms, such as FTRL, manage to have regret bounds that are centered on a fixed point (such as $\ptx_1$ or 0) and feature a term $\|\ptx_1 - \ptx^*\|^2$ or $\|\ptx^*\|^2$ instead.
(Even for bounded convex domains this is an improvement, see \cref{ex:md_ogd} in \cref{apdx:md}).
With constant learning rates, Mirror Descent is mostly equivalent to FTRL, but when using adaptive learning rates
Mirror Descent usually does not provide centered bounds, and features a leading factor of $\max_{\ptx_1, \ptx_2\in\ptxset} \breg_R(\ptx_1, \ptx_2)$ instead.
For bounded domains where the Bregman divergence is also bounded, this may be good enough and scale-free guarantees can be obtained~\citep{orabona2018solo}, but this factor can be infinite otherwise.

In this section we use the correction rule of \citet{orseau2017softbayes}
(see also \citet{pmlr-v48-gyorgy16,pmlr-v119-fang20a})
to derive general centered regret bounds for a large class of algorithms beyond FTRL.
While this correction rule was used for only one algorithm, we show that its usage can be extended to a much larger class, by contrast to other correction rules, such as the exponential correction of \citet{gyorfi2007sequential,gaillard2014secondorder}
(see also \citet{chen2021impossible,huang2021delays} for other specific corrections).
We use \cref{ass:generic_isotuning}.

When deriving centered bounds, an additional difficulty appears: the learning rate becomes `off-by-one'~\citep[see also \cref{lem:generic_ob1} below]{mcmahan2017survey}, that is, 
at step $t$ only the \emph{previous} learning rate $\rate_{t-1}$ may be used, 
which means that the update cannot take into account, and adapt to the feedback received at step $t$.

At each step, we assume that the point $\ptx_t$ is first updated to $\ptx_{t'}$
(in a problem-dependent way)
using the \emph{previous} learning rate $\rate_{t-1}$. Intuitively,
\begin{align}\label{eq:generic_ob1_generic_update}
    \ptx_{t'} = \code{update}_t(\ptx_t, \rate_{t-1})\,.
\end{align}
for some algorithm-specific $\code{update}$ function.
Then the following online correction (equivalent to that of ~\citet{orseau2017softbayes}) is applied:
\begin{align}\label{eq:generic_ob1_ocr}
    \ptx_{t+1} = \ptx_{t'} \frac{\Delta_{t-1}}{\Delta_{t}} +
    \frac{\delta_t}{\Delta_t}\ptx_1\,.
\end{align}

\begin{lemma}[Online correction]\label{lem:ocr_convexity}
Let $f: \ptxset\to\Reals$ be a convex function.
Then, under \cref{ass:generic_isotuning}, for all $t\in[T]$:
\begin{equation*}
    \Delta_t f(\ptx_{t+1}) \leq \Delta_{t-1}f(\ptx_{t'}) + \delta_t f(\ptx_1)\,,
\end{equation*}
and holds with equality if $f$ is affine.
\end{lemma}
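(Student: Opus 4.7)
The plan is to recognize the update rule~\cref{eq:generic_ob1_ocr} as writing $\ptx_{t+1}$ as a convex combination of $\ptx_{t'}$ and $\ptx_1$, and then invoke Jensen's inequality for the convex function $f$.

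More concretely, I would first check that the coefficients $\Delta_{t-1}/\Delta_t$ and $\delta_t/\Delta_t$ are indeed valid convex weights. By \cref{ass:generic_isotuning}, $\Delta_0 = 0$ and $\delta_s \geq 0$ for all $s$, which immediately gives $\Delta_{t-1} \geq 0$ and $\delta_t \geq 0$. Combined with $\Delta_t = \Delta_{t-1} + \delta_t$ (so the two weights sum to $1$), and assuming $\Delta_t > 0$ so that the formula is well-defined, this makes $\ptx_{t+1}$ a convex combination of $\ptx_{t'}$ and $\ptx_1$ lying in $\ptxset$ by convexity of $\ptxset$. (The edge case $\Delta_t = 0$, which forces $\Delta_{t-1} = \delta_t = 0$, trivially satisfies the claim as both sides vanish.)

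Second, I would apply Jensen's inequality to $f$ at this convex combination:
\begin{equation*}
f(\ptx_{t+1}) \;=\; f\!\left(\frac{\Delta_{t-1}}{\Delta_t}\ptx_{t'} + \frac{\delta_t}{\Delta_t}\ptx_1\right) \;\leq\; \frac{\Delta_{t-1}}{\Delta_t} f(\ptx_{t'}) + \frac{\delta_t}{\Delta_t} f(\ptx_1)\,.
\end{equation*}
Multiplying through by $\Delta_t \geq 0$ yields the claimed inequality. For the equality statement, note that Jensen's inequality holds with equality whenever $f$ is affine (as affine functions commute with convex combinations exactly), so the same multiplication gives equality.

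There is no real obstacle here: the lemma is essentially a repackaging of Jensen's inequality, with the only minor subtleties being the nonnegativity and normalization of the weights (handled by \cref{ass:generic_isotuning}) and the degenerate case $\Delta_t = 0$.
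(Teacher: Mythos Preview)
Your proposal is correct and matches the paper's own argument essentially verbatim: the paper's proof simply notes that the claim ``follows from $\Delta_{t-1} + \delta_t = \Delta_t$ and the convexity of $f$, then multiplying by $\Delta_t$ on both sides,'' which is exactly Jensen's inequality applied to the convex combination in~\cref{eq:generic_ob1_ocr}. Your handling of the degenerate case $\Delta_t = 0$ and the affine equality case is more explicit than the paper's, but there is no substantive difference in approach.
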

\begin{proof}
Follows from $\Delta_{t-1} + \delta_t = \Delta_t$ and the convexity of $f$,
then multiplying by $\Delta_t$ on both sides.
\end{proof}

Also observe that necessarily $\ptx_{t+1}\in\ptxset$ if $\ptx_{t'}\in\ptxset$ and $\ptx_1\in\ptxset$.

Recall that $\phi:\mathcal{X}\to\Reals$ is a convex function used to analyze the regret of the algorithm, and that $r_t$ is the instantaneous regret.
Define for all $t\in\Naturals$,
\begin{align}\label{eq:delta*t}
    \delta^*_t &= \positive{\frac{1}{\rate_{t-1}}(\phi(\ptx_{t'}) - \phi(\ptx_t)) + r_t}\,,
\end{align}
then we require
\begin{align}\label{eq:generic_deltat_ob1}   
    \delta_t \geq \delta^*_t \,.
\end{align}
In general, this inequality should be as tight as possible.
\footnote{
$\delta_t$ reduces after cancellations to the \emph{mixability gap} of AdaHedge~\citep{derooij2014follow} in the special case where
 $\phi(\ptx)=\RE{\ptx^*}{\ptx}$ and $\QQ=\ln N$. 
The main difference is that we explicitly incorporate the whole instantaneous regret $r_t$ inside $\delta_t$ so as to obtain a generic regret bound.}
See \cref{alg:generic_ob1}.
With these conditions we can provide a generic regret bound in this `off-by-one' setting.

\begin{algorithm}[htbp!]
\begin{lstlisting}
def generic_OCO_off_by_1($\ptx_1\!\in\!\ptxset, \QQ\!>\!0$):
  $\Delta_0 = 0$
  for t = 1, 2, ...:
    predict $\ptx_t$; observe $\loss_t$
    $\rate_{t-1} = \QQ/\Delta_{t-1}$ # off-by-one learning rate
    $\ptx_{t'}$ = update$_t$($\ptx_t, \rate_{t-1}$)  # off-by-one update
    set $\delta_t$ s.t. $\delta_t \geq \positive{\frac{1}{\rate_{t-1}}(\phi(\ptx_{t'}) - \phi(\ptx_t)) + r_t}$  # found by analysis
    $\Delta_t = \Delta_{t-1} + \delta_t$ # isotuning
    $\ptx_{t+1} =\frac{\Delta_{t-1}}{\Delta_{t}} \ptx_{t'} + \frac{\delta_t}{\Delta_t}\ptx_1$ # online correction
\end{lstlisting}
\caption{Generic online convex optimization algorithm sketch in the off-by-one setting.
The update of $\rate_t$ usually has a closed form since $\delta_t$ depends on $\rate_{t-1}$ only.
}
\label{alg:generic_ob1}
\end{algorithm}

\begin{lemma}[Generic regret bound, off-by-one setting]\label{lem:generic_ob1}
Consider \cref{ass:generic_isotuning}.
Let $\phi:\ptxset\to\Reals$ be a convex function,
and assume that $\delta_t$ satisfies \cref{eq:generic_deltat_ob1}.
The regret of \cref{alg:generic_ob1} is bounded by
\begin{align*}
    \regret_T(\ptx^*) &\leq
    \left(1+ \frac{\phi_1}{\QQ} - \frac{\phi_{T+1}}{\QQ}\right)\Delta_T\,.
    \qedhere
\end{align*}
\end{lemma}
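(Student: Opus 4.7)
The plan is to turn the bound on $\delta_t$ into a bound on the instantaneous regret $r_t$, then use the online correction lemma applied to $\phi$ to make the $\phi$-dependent terms telescope, and finally sum.

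First, I rewrite the hypothesis $\delta_t \geq \delta^*_t$ by dropping the positive part and substituting $\rate_{t-1} = \QQ/\Delta_{t-1}$:
\begin{align*}
    r_t \leq \delta_t - \frac{\Delta_{t-1}}{\QQ}(\phi_{t'} - \phi_t) = \delta_t + \frac{\Delta_{t-1}}{\QQ}\phi_t - \frac{\Delta_{t-1}}{\QQ}\phi_{t'}\,.
\end{align*}
The last term on the right has $\phi_{t'}$, which is not what I want for telescoping; I want $\phi_{t+1}$ (weighted by $\Delta_t$) instead. That is exactly what the online correction provides.

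Next, I apply \cref{lem:ocr_convexity} to the convex function $\phi$: it yields $\Delta_{t-1}\phi_{t'} \geq \Delta_t \phi_{t+1} - \delta_t \phi_1$. Substituting back gives
\begin{align*}
    r_t \leq \delta_t\left(1 + \frac{\phi_1}{\QQ}\right) + \frac{\Delta_{t-1}}{\QQ}\phi_t - \frac{\Delta_t}{\QQ}\phi_{t+1}\,.
\end{align*}

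Finally, I sum over $t\in[T]$. The sum $\sum_t \delta_t = \Delta_T$ by \cref{ass:generic_isotuning}, and the pair $(\Delta_{t-1}\phi_t/\QQ,\, \Delta_t \phi_{t+1}/\QQ)$ telescopes; using $\Delta_0 = 0$ the boundary term vanishes at $t=1$, leaving only $-\Delta_T \phi_{T+1}/\QQ$. Collecting terms yields
\begin{align*}
    \regret_T(\ptx^*) \leq \Delta_T + \frac{\phi_1}{\QQ}\Delta_T - \frac{\Delta_T}{\QQ}\phi_{T+1} = \left(1 + \frac{\phi_1 - \phi_{T+1}}{\QQ}\right)\Delta_T\,,
\end{align*}
as required.

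There is no real obstacle here: everything is algebraic. The conceptual point worth highlighting is that the role of the online correction \cref{eq:generic_ob1_ocr} is exactly to convert the `off-by-one' term $\frac{\Delta_{t-1}}{\QQ}\phi_{t'}$ (which resists telescoping since it involves the intermediate iterate $\ptx_{t'}$ at weight $\Delta_{t-1}$) into the telescoping pair involving $\phi_{t+1}$ at weight $\Delta_t$, at the cost of a single additive $\delta_t\phi_1/\QQ$ term—whose sum over $t$ contributes the $\phi_1/\QQ$ factor appearing in the final bound.
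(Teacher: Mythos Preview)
Your proof is correct and follows essentially the same approach as the paper's: both use the online correction lemma (\cref{lem:ocr_convexity}) applied to $\phi$ together with the inequality $\delta_t \geq \frac{1}{\rate_{t-1}}(\phi_{t'}-\phi_t)+r_t$, then telescope. The only cosmetic difference is the order of operations---you first isolate $r_t$ from the $\delta_t$ bound and then invoke the correction lemma, whereas the paper starts from the correction lemma and inserts $r_t$ afterward.
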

\begin{proof}
From \cref{lem:ocr_convexity}, since $\phi$ is convex, for all $t$,
\begin{align*}
    \Delta_t \frac{\phi_{t+1}}{\QQ} &\leq \Delta_{t-1}\frac{\phi_t}{\QQ} + \delta_t\frac{\phi_1}{\QQ} + \Delta_{t-1}\left(\frac{\phi_{t'}}{\QQ}-\frac{\phi_t}{\QQ}\right) \\
    \Leftrightarrow\quad
    r_t &\leq 
     \Delta_{t-1}\frac{\phi_t}{\QQ} - \Delta_t \frac{\phi_{t+1}}{\QQ}
     + \delta_t\frac{\phi_1}{\QQ} \\
     &\quad+ \underbrace{\Delta_{t-1}\frac{\phi_{t'}-\phi_t}{\QQ} + r_t}_{\leq \delta_t} \\
     &\leq \Delta_{t-1}\frac{\phi_t}{\QQ} - \Delta_t \frac{\phi_{t+1}}{\QQ}
     +\left(1+\frac{\phi_1}{\QQ}\right)\delta_t\,.
\end{align*}
Summing over $t$ and telescoping gives:
\begin{align*}
    \sum_{t\in[T]} r_t &\leq
    \Delta_0\frac{\phi_1}{\QQ} - \Delta_T\frac{\phi_{T+1}}{\QQ}
    + \left(1+\frac{\phi_1}{\QQ}\right)\sum_{t\in[T]}\delta_t
\end{align*}
and the result follows from \cref{ass:generic_isotuning}.
\end{proof}

Since, in the off-by-one setting, $\delta_t$ is a function of $\rate_{t-1}$ and thus of $\delta_{t-1}$ rather than $\Delta_t$, 
we provide a simple result to deal with this offset.
\begin{lemma}[Isotuning offset]\label{lem:isotuning_offset}
Assume that for all $t\in\Naturals$, $\Delta_t = \sum_{s\in[t]}\delta_s$
with $\delta_t \geq 0$.
For all $t\in\Naturals$, let $\hat{g}_t: \Reals \to [0, \infty]$ be continuous nonnegative non-increasing functions such that
$\delta_t \leq \hat{g}_t(\Delta_t - b_t)$ with $b_t \in\Reals$.
Let $\tuple{\hat\Delta, \hat{g}}$ be an isotuning sequence.
Then, for all $T\in\Naturals$, 
\begin{equation*}
    \Delta_T \ \leq\ \hat\Delta_T + \max_{t\in[T]}\positive{b_t}\,.
     \qedhere
\end{equation*}
\end{lemma}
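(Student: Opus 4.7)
The plan is to prove the bound by induction on $T$, setting $B_T = \max_{t\in[T]} b_t$ (with the convention $B_0 = 0$) and targeting the inequality $\Delta_T \leq \hat\Delta_T + B_T$. The base case $T=0$ is immediate from \cref{ass:generic_isotuning} and \cref{def:isotuning_seq}, which give $\Delta_0 = \hat\Delta_0 = 0$. The only non-trivial ingredient I need for the induction is that $B_T$ is non-decreasing in $T$ (obvious from its definition as a running maximum) and that $\hat\delta_T$ is non-increasing (given in \cref{def:isotuning_seq}).

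For the inductive step, I would assume $\Delta_{T-1} \leq \hat\Delta_{T-1} + B_{T-1}$ and argue by contradiction: suppose $\Delta_T > \hat\Delta_T + B_T$. Combining this strict inequality with the inductive hypothesis and $B_T \geq B_{T-1}$ yields
\[
\delta_T \;=\; \Delta_T - \Delta_{T-1} \;>\; \hat\Delta_T - \hat\Delta_{T-1} \;=\; \hat\delta_T(\hat\Delta_T),
\]
where the last equality is the defining property of the isotuning sequence $\tuple{\hat\Delta, \hat\delta}$. On the other hand, since $B_T \geq b_T$, the supposed inequality gives $\Delta_T - b_T > \hat\Delta_T$, so the assumed bound $\delta_T \leq \hat\delta_T(\Delta_T - b_T)$ together with the non-increasingness of $\hat\delta_T$ yields
\[
\delta_T \;\leq\; \hat\delta_T(\Delta_T - b_T) \;\leq\; \hat\delta_T(\hat\Delta_T),
\]
contradicting the previous display. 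Hence $\Delta_T \leq \hat\Delta_T + B_T$, completing the induction.

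The only potential subtlety is that the hypothesis $\delta_t \leq \hat\delta_t(\Delta_t - b_t)$ is implicit in $\Delta_t$, so one cannot directly solve for $\delta_t$; the contradiction argument neatly avoids this by working with the putative excess and exploiting monotonicity in both the isotuning update and the running maximum $B_T$. I should also make sure the evaluation $\hat\delta_T(\Delta_T - b_T)$ is well-defined, i.e.\ that $\Delta_T - b_T > 0$ whenever $\delta_T > 0$; this is guaranteed by \cref{ass:generic_isotuning} (which makes $\Delta_T$ a sum of nonnegative $\delta_s$ including $\delta_T$ itself) together with the hypothesis $b_T \geq 0$, so no boundary issues arise.
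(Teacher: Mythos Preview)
Your proof is correct and follows essentially the same inductive skeleton as the paper, but the inductive step is handled differently. The paper applies the isopoint lemma (\cref{thm:isopoint}) twice: first to sandwich $\Delta_t$ by $\iso_x(x,\Delta_{t-1}+\hat\delta_t(x-b_t))$, then to bound that iso value by evaluating at $x=\hat\Delta_t+b_t$. Your contradiction argument sidesteps the iso machinery entirely and is more elementary: you just compare increments and use monotonicity of $\hat\delta_T$ directly. This is arguably cleaner and makes the role of the running maximum $B_T$ more transparent.

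One small quibble: your last paragraph on well-definedness is a bit muddled. The claim that $\Delta_T - b_T > 0$ ``whenever $\delta_T > 0$'' does \emph{not} follow from \cref{ass:generic_isotuning} together with $b_T\geq 0$ in general (nothing prevents $b_T$ from exceeding $\Delta_T$). Fortunately you do not need this in general: inside the contradiction hypothesis you already have $\Delta_T > \hat\Delta_T + B_T \geq b_T$, so $\Delta_T - b_T > 0$ there, which is the only place the evaluation is used. You might tighten that remark accordingly.
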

\begin{proof}
Assume that $\Delta_{t-1} \leq \hat\Delta_{t-1} + \max_{s< t} \positive{b_t}$,
which is satisfied for $t=1$.
Then
\begin{alphalign*}
    \Delta_t 
    &\alphoverset{=}
    \min\{\Delta_t,\ \Delta_{t-1} + \delta_t\} \\
    &\alphoverset{=}
    \min\{\Delta_t,\ \Delta_{t-1} + \hat{g}(\Delta_t - b_t)\} \\
    &\alphoverset{\leq} \iso_{x\in\Reals}\big( x,\ \Delta_{t-1} + \hat{g}(x - b_t)\big) \\
    &\alphoverset{\leq} \max\{\hat\Delta_t + b_t,\ \Delta_{t-1} + \hat{g}_t(\hat\Delta_t)\} \\
    &=  \max\{\hat\Delta_t + b_t,\ \hat\Delta_{t-1} +
    \hat{g}_t(\hat\Delta_t) + \max_{s<t}\positive{b_s}\} \\
    &\alphoverset{=}  \max\{\hat\Delta_t + b_t,\ \hat\Delta_{t} + \max_{s<t}\positive{b_s}\} \\
    &= \hat\Delta_t + \max_{s\leq t} \positive{b_s}\,,
\end{alphalign*}
with
\alphnextref{} is by definition of $\Delta_t$,
\alphnextref{} is by assumption on $\delta_t$,
\alphnextref{} is by \cref{thm:isopoint},
\alphnextref{} is also by \cref{thm:isopoint} and taking $x = \hat\Delta_t + b_t$,
\alphnextref{} is because $\tuple{\hat\Delta, \hat{g}}$ is an isotuning sequence.
Finally, the result holds by induction.
\end{proof}

\begin{remark}
Such offsets can also be handled directly with \cref{thm:isotune} via a change of variables (see \cref{ex:x*_issues}).
\end{remark}

\begin{corollary}[Off-by-one]\label{lem:generic_ob1_DeltaT}
Consider the conditions of \cref{lem:isotuning_offset}, with $\delta_t \leq \hat{g}_t(\Delta_{t-1})$ for all $t\in[T]$. Then 
\begin{equation*}
    \Delta_T \leq \hat\Delta_T + \max_{t\in[T]} \delta_t\,.
    \qedhere
\end{equation*}
\end{corollary}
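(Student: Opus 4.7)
The plan is to reduce the corollary to a direct application of \cref{lem:isotuning_offset} by choosing the offset $b_t$ appropriately.

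First, I would unpack the hypothesis. \cref{ass:generic_isotuning} gives $\Delta_t = \Delta_{t-1} + \delta_t$ with $\delta_t \geq 0$, so $\Delta_{t-1} = \Delta_t - \delta_t$. The corollary's extra assumption $\delta_t \leq \hat\delta_t(\Delta_{t-1})$ can thus be rewritten as
\begin{equation*}
    \delta_t \ \leq\ \hat\delta_t(\Delta_t - \delta_t)\,.
\end{equation*}
This matches the form $\delta_t \leq \hat\delta_t(\Delta_t - b_t)$ required by \cref{lem:isotuning_offset} if we set $b_t := \delta_t$. Since $\delta_t \geq 0$ by \cref{ass:generic_isotuning}, the nonnegativity requirement $b_t \geq 0$ on the offset is automatically satisfied.

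Next, I would verify that the remaining hypotheses of \cref{lem:isotuning_offset} still hold: namely that each $\hat\delta_t$ is continuous, nonnegative, and non-increasing on $(0, \infty)$, and that $\tuple{\hat\Delta, \hat\delta}$ forms an isotuning sequence. These are precisely the standing assumptions inherited from the statement ``under the conditions of \cref{lem:isotuning_offset}'', so nothing further needs to be checked.

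Finally, applying \cref{lem:isotuning_offset} with $b_t = \delta_t$ yields
\begin{equation*}
    \Delta_T \ \leq\ \hat\Delta_T + \max_{t\in[T]} b_t \ =\ \hat\Delta_T + \max_{t\in[T]} \delta_t\,,
\end{equation*}
which is exactly the claim. I do not anticipate any real obstacle: the only subtlety is recognizing that the ``off-by-one'' dependence on $\Delta_{t-1}$ is the $b_t = \delta_t$ instance of the offset lemma, and that this is a self-referential choice (the offset equals the very quantity being bounded step by step), but the inductive proof of \cref{lem:isotuning_offset} has already absorbed this difficulty, so no circularity arises.
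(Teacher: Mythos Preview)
Your proposal is correct and follows exactly the paper's own proof: apply \cref{lem:isotuning_offset} with $b_t=\delta_t$, using $\Delta_{t-1}=\Delta_t-\delta_t$ to match the offset form. There is nothing to add.
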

\begin{proof}
Follows from \cref{lem:isotuning_offset} where $b_t=\delta_t \geq 0$.
\end{proof}

Note that \cref{lem:generic_ob1_DeltaT} does \emph{not} require that $\delta_t$ satisfies \cref{eq:generic_deltat_ob1}.

For bounded domains, even with \emph{unknown diameter}, we can already easily obtain scale-free adaptive regret bounds.
For example, consider online gradient descent
using \cref{alg:generic_ob1}
with the (off-by-one) update rule
$\ptx_{t'} = \ptx_{t} - \rate_{t-1}\nabla_t$.
Following \cref{eq:generic_deltat_ob1}
with $\phi(\ptx) = \|\ptx-\ptx^*\|^2/2$
(for some unknown $\ptx^*\in\ptxset$)
we can take $\delta_t = \rate_{t-1}\|\nabla_t\|^2/2 = \QQ\|\nabla_t\|^2/(2\Delta_{t-1})$.
Define $\hat{g}_t(z) = \QQ\|\nabla_t\|^2/(2z)$
such that $\delta_t \leq \hat{g}_t(\Delta_{t-1})$ for all $t\in[T]$,
and $\hat{g}_t(z) = \infty$ for $z \leq 0$.
Then from \cref{lem:generic_ob1,lem:generic_ob1_DeltaT,thm:isotune} we obtain
\begin{multline*}
    \regret_T(\ptx^*) \leq 
    \left(1+\frac{\phi_1}{\QQ}\right)
        \left(
        \sqrt{\QQ\sum_{t\in[T]}\|\nabla_t\|^2}
        +\max_{t\in[T]} \delta_t
        \right)
\end{multline*}
From the update rule, we also have $\delta_t = \|\ptx_{t'}-\ptx_t\|\|\nabla_t\|/2$,
so for bounded domains $\delta_t \leq \diam \max_t \|\nabla_t\|/2$ even if $\diam$ is unknown.
Using $\phi_1=\|\ptx_1-\ptx^*\|^2/2$ we thus have
\begin{multline*}
    \regret_T(\ptx^*) \leq \\ \left(1+\frac{\|\ptx^*-\ptx_1\|^2}{2\QQ}\right)\left(\sqrt{\QQ\sum_t\|\loss_t\|^2}
    + \frac{\diam}{2}\max_t \|\loss_t\|\right)
\end{multline*}
which fulfills our requirements of being centered, scale-free and adaptive to small losses.
More discussion is given in \cref{ex:md_ogd}.
With unknown $\diam$, interesting choices for $\QQ$ are $\QQ=1$, but also $\QQ=\|\mathbf{1}\|^2$,
$\QQ=\ln T$ or $\QQ=\|\mathbf{1}\|^2\ln T$.

\subsection{Null Updates}\label{sec:null_updates}

The additive term $\max_t \delta_t$
of \cref{lem:generic_ob1_DeltaT} is unfortunately not well bounded if $\diam$ is overly large or infinite (\eg, for partially constrained domains).
For example, with $\ptxset=\Reals^N$, both $\|\ptx_{t'}-\ptx_t\|$ and $\rate_{t-1}\|\nabla_t\|^2/2$
may be arbitrarily large, because $\rate_{t-1}$ may be too large compared to 
$\|\nabla_t\|_2$.
And enforcing $\rate_{t-1} \leq 1$ would not lead to scale-free bounds for either small or large loss scales. Note that small loss scales can matter for example when performing gradient descent on an almost flat surface.

At this point, we could use the loss clipping trick of \citet{cutkosky2019hints,mhammedi2019squint}
with off-by-one isotuning to obtain a bound very similar to the SOLO FTRL bound
(hence with the additive $O(L\sqrt{T})$ term in unbounded domains).
Instead, we consider another simple trick that will have additional benefits:
updates that are too large, leading to large instantaneous regret, are 
simply skipped, similarly to skipping updates when delays are too large for bandit algorithms~\citep{thune2019delays,zimmert2020delays}.
This corresponds to performing an update with a zero loss: $\ptx_{t'}=\ptx_t$, 
which we call a \emph{null update},
and, by contrast to using the doubling trick~\citep{cesabianchi1997expert}
(see also \citet{mhammedi2019squint}),
the algorithm does not need to be additionally restarted.
The instantaneous regret on null updates are dealt with separately (outside of $\Delta_T$),
as demonstrated in the following generic result.
We call $\barT$ the set of steps where null updates are performed.

\begin{theorem}[Generic regret bound, null updates]\label{thm:generic_regret_null}
Consider the assumptions of \cref{lem:generic_ob1}, but 
modify \cref{alg:generic_ob1} as follows:
When $t\in\barT$ set $\ptx_{t'} = \ptx_t$, and set $\delta_t \geq 0$.
Then the regret of the algorithm is at most
\begin{equation*}
    \regret_T(\ptx^*)
    \leq 
    \left(1+\frac{\phi_1}{\QQ} - \frac{\phi_{T+1}}{\QQ}\right)\Delta_T
    + \sum_{t\in\barT} (r_t - \delta_t)\,.
    \qedhere
\end{equation*}
\end{theorem}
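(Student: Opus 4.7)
The plan is to mirror the proof of \cref{lem:generic_ob1}, but split the summation over $t\in[T]$ into two cases according to whether $t\in\barT$ or not, and handle the null updates by paying the extra residual $r_t-\delta_t$ explicitly.

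For $t\notin\barT$, nothing changes from the original algorithm: the assumption \cref{eq:generic_deltat_ob1} still holds and the derivation in the proof of \cref{lem:generic_ob1} applies verbatim, yielding
\begin{align*}
    r_t \ \leq\ \Delta_{t-1}\tfrac{\phi_t}{\QQ} - \Delta_t\tfrac{\phi_{t+1}}{\QQ} + \Bigl(1+\tfrac{\phi_1}{\QQ}\Bigr)\delta_t\,.
\end{align*}
For $t\in\barT$, I would use the fact that $\ptx_{t'}=\ptx_t$ (so $\phi_{t'}=\phi_t$) and that $\ptx_{t+1}$ is still formed by the online correction as a convex combination of $\ptx_{t'}$ and $\ptx_1$. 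Hence \cref{lem:ocr_convexity} applied to $\phi$ gives $\Delta_t\phi_{t+1}/\QQ \leq \Delta_{t-1}\phi_t/\QQ + \delta_t\phi_1/\QQ$, i.e.\ $0 \leq \Delta_{t-1}\phi_t/\QQ - \Delta_t\phi_{t+1}/\QQ + \delta_t\phi_1/\QQ$. Adding $\delta_t$ on both sides and then writing $r_t = \delta_t + (r_t-\delta_t)$ gives exactly the same telescoping bound as above, plus the residual $(r_t-\delta_t)$:
\begin{align*}
    r_t \ \leq\ \Delta_{t-1}\tfrac{\phi_t}{\QQ} - \Delta_t\tfrac{\phi_{t+1}}{\QQ} + \Bigl(1+\tfrac{\phi_1}{\QQ}\Bigr)\delta_t + (r_t-\delta_t)\,.
\end{align*}

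Summing the two displays over all $t\in[T]$, telescoping the $\Delta_{t-1}\phi_t/\QQ - \Delta_t\phi_{t+1}/\QQ$ terms, using $\Delta_0=0$ from \cref{ass:generic_isotuning}, and recalling $\sum_{t\in[T]}\delta_t = \Delta_T$, yields
\begin{align*}
    \sum_{t\in[T]} r_t \ \leq\ \Bigl(1+\tfrac{\phi_1}{\QQ} - \tfrac{\phi_{T+1}}{\QQ}\Bigr)\Delta_T + \sum_{t\in\barT}(r_t-\delta_t)\,,
\end{align*}
which is the claimed bound.

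The only delicate point is the null-update branch: since the condition \cref{eq:generic_deltat_ob1} is \emph{not} assumed for $t\in\barT$ (we only assume $\delta_t\geq 0$), we cannot absorb $r_t$ into $\delta_t$ and must carry it as a residual instead. The key observation that makes this clean is that the online correction update \cref{eq:generic_ob1_ocr} is unchanged on null-update steps, so \cref{lem:ocr_convexity} still produces exactly the telescoping structure needed; the $\phi_{t'}=\phi_t$ identity then removes the only term that would otherwise require \cref{eq:generic_deltat_ob1}.
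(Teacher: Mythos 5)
Your proof is correct and takes essentially the same approach as the paper, which simply says the argument is "similar to the proof of Lemma~\ref{lem:generic_ob1}, except that for $t\in\barT$, $\phi_{t'}=\phi_t$". You fill in the details the paper leaves implicit: on null-update steps $\phi_{t'}=\phi_t$ kills the term that would otherwise require \cref{eq:generic_deltat_ob1}, the online correction still supplies the telescoping via \cref{lem:ocr_convexity}, and the residual $r_t-\delta_t$ appears by adding and subtracting $\delta_t$; summing and telescoping as in the original lemma gives the bound.
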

\begin{proof}
Similar to the proof of \cref{lem:generic_ob1}, except that for $t\in\barT$, $\phi_{t'}=\phi_t$, and in this case we may not have $\delta_t \geq r_t$
so we extract $r_t$, and add and subtract $\delta_t$.
\end{proof}
Note that when $t\in\barT$, it is important to set $\delta_t > 0$ to increase $\Delta_t$ (and thus decrease $\rate_t$) and avoid triggering repeated null updates.

Consider the function $\hat{g}_t(\cdot)$ used in \cref{lem:generic_ob1_DeltaT}.
We perform a null update at step $t$ when the instantaneous regret term coming from the update is larger than all previous ones added up, that is, 
when $\hat{g}_t(\Delta_{t-1}) \geq c\Delta_{t-1}$ for some $c > 0$.
Define the set of steps where a null update is performed:
\begin{equation*}
    \barT = \{t\in[T] : \hat{g}_t(\Delta_{t-1}) >  c\Delta_{t-1}\}\,.
\end{equation*}

For $t\notin\barT$, 
since $\delta_t \leq \hat{g}_t(\Delta_{t-1}) \leq c\Delta_{t-1}$, using \cref{thm:isopoint},
\begin{align}
    \delta_t\leq
    \hat{g}_t(\Delta_{t-1}) 
    &= \min\{\hat{g}_t(\Delta_{t-1}),\ c\Delta_{t-1}\} \notag\\
    &\leq \iso_{y\geq 0}(\hat{g}_t(y),\ c y)\,.
    \label{eq:isopoint_deltat_notinbarT}
\end{align}
This iso value has the particularity of being independent of $\Delta_{t-1}$ and thus of the learning rate $\rate_{t-1}$.
For $\tau=\max \barT$, the inequality is reversed, hence,
\begin{align}\label{eq:isopoint_Deltatau}
    c\Delta_{\tau-1} \leq \iso_{y\geq0}(\hat{g}_\tau(y),\ c y)\,.
\end{align}
Now, what value for $\delta_t$ should we choose when $t\in\barT$ is a null update step?
A good choice is again 
\begin{align*}
    \forall t\in\barT:&&
    \delta_t =& \iso_{y\geq0}(\hat{g}_\tau(y),\ c y)\,,
\intertext{which ensures that}
    \forall t\in\barT:&&
    c\Delta_{t-1} \leq {}& \delta_t\hphantom{(\Delta_{t-1})} \leq  \hat{g}_t(\Delta_{t-1})\,, \\
    \forall t\notin\barT:&&
    \delta_t \leq {}& \hat{g}_t(\Delta_{t-1}) \leq c\Delta_{t-1}\,,
\end{align*}
where the second line follows from \cref{thm:isopoint}.
We now provide a generic theorem that uses this particular choice of $\delta_t$.

\begin{lemma}[Isotuning with null updates]\label{thm:generic_DeltaT_null}
Under the same conditions as \cref{lem:generic_ob1_DeltaT}, and additionally
let $c > 0$ and 
for $t\in\barT$ set $\delta_t = \iso_{y \geq 0}(\hat{g}_t(y),\ c y)$.
Then for all $T\in\Naturals$ and with $\tau = \max \barT$,
\begin{align*}
    \Delta_T &\leq \hat\Delta_T
    + \max_{t\in[T]} \ \iso_{y \geq 0}(\hat{g}_t(y),\ c y)\,, &\text{(i)}\\
    \sum_{t\in\barT}\delta_t &\leq (\tfrac{1}{c}+1)\iso_{y \geq 0}(\hat{g}_\tau(y),\ c y)\,.&\text{(ii)}
    &\qedhere
\end{align*}
\end{lemma}
\begin{proof}
(i) For $t\in\barT, \delta_t=\iso_{y\geq0}(\hat{g}_t(y),\ c y) \leq \max\{\hat{g}_t(\Delta_{t-1}),\ c\Delta_{t-1}\}=\hat{g}_t(\Delta_{t-1})$ by \cref{thm:isopoint},
hence for all $t\in[T]$, $\delta_t \leq \hat{g}_t(\Delta_{t-1})$,
and we can then use \cref{lem:generic_ob1_DeltaT}.
Moreover, for $t\notin\barT, \delta_t \leq \hat{g}_t(\Delta_{t-1})
=\min\{\hat{g}_t(\Delta_{t-1}),\ c \Delta_{t-1}\}
\leq \iso_{y\geq0}(\hat{g}_t(y),\ c y)$,
and thus $\max_t \delta_t \leq \max_t \iso_{y\geq0}(\hat{g}_t(y),\ c y)$.

(ii) The result follows from
$\sum_{t\in\barT} \delta_t \leq \Delta_\tau = \Delta_{\tau-1} + \delta_\tau$
and \cref{eq:isopoint_deltat_notinbarT,eq:isopoint_Deltatau}.
\end{proof}
\begin{remark}
\Cref{thm:generic_DeltaT_null} can be straightforwardly generalized from $c y$ to any sequence of continuous non-decreasing functions $\{f_t(y)\}_t$, including constant functions.
\end{remark}

Continuing our online gradient descent example,
recall that $\hat{g}_t(y) = \QQ\|\nabla_t\|^2/(2y)$,
and thus $\iso_{y>0}(\QQ\|\nabla_t\|^2/(2y), y) = \sqrt{\QQ/2}\|\nabla_t\|$.
So if we perform null updates with $c=1$, that is,
when $\rate_{t-1}\|\nabla_t\|^2/2 > \Delta_{t-1}$
(or, equivalently, when $\sqrt{\QQ/2}\|\nabla_t\| > \Delta_{t-1}$),
then by \cref{thm:generic_DeltaT_null} (and \cref{lem:Delta_sqrt_short}) we have 
\begin{align*}
    \Delta_T \leq
    \sqrt{\QQ\sum_{t\in[T]}\|\nabla_t\|^2} + \sqrt{\QQ/2}\max_t \|\nabla_t\|\,.
\end{align*}
Thus, now the additive term is nicely bounded and does not depend on the diameter $\diam$.
It remains to bound the regret incurred on null update rounds
as per \cref{thm:generic_regret_null}:
\begin{align*}
    \sum_{t\in\barT} r_t &\leq \sum_{t\in\barT} \|\ptx^* -\ptx_t\|\|\nabla_t\| \\
    &=\sum_{t\in\barT} \|\ptx^* -\ptx_t\|\sqrt{2/\QQ}\delta_t \\
    &\leq 2\max_{t\in\barT}\|\ptx^* -\ptx_t\| \|\nabla_\tau\|
\end{align*}
where we used \cref{thm:generic_DeltaT_null} (ii) on the last line.
Finally, 
$\max_{t\in\barT}\|\ptx^* -\ptx_t\|$ is bounded by $D$,
but also by $\|\ptx^* - \ptx_1\| + \sqrt{2\QQ\tau}$
by \cref{lem:md_travel} (in \cref{apdx:md}),
where $\tau = \max \barT$.

Putting it all together we have
\begin{multline*}
    \regret_T(\ptx^*) \leq  \\
    \left(1+\frac{\|\ptx_1-\ptx^*\|^2}{2\QQ}\right)\left(\sqrt{2\QQ\sum_{t\in[T]}\|\nabla_t\|^2}
    + \sqrt{\frac{\QQ}{2}}\max_t \|\nabla_t\|
    \right) \\
    + 2\min\{\diam,\ \|\ptx_1-\ptx^*\|+ \sqrt{2\QQ\tau}\}\|\nabla_\tau\|\,.
\end{multline*}

Now the main difference with the SOLO-FTRL bound~\citep{orabona2018solo}
is that we have the term $\sqrt{\tau}\|\nabla_\tau\|$ instead of $\sqrt{T}\max_{t\in[T]}\|\nabla_t\|$.
Naturally, we wish to understand how much of an improvement this is.
First of all, since $\tau \leq T$, our bound is never worse.
Furthermore, since $\delta_\tau \geq \Delta_{\tau-1} \geq \sum_{t\in\barT\setminus\{\tau\}}  \delta_t$ we have $\|\nabla_\tau\| \geq \sum_{t\in\barT\setminus\{\tau\}} \|\nabla_t\|$.
This means that a null update is triggered only when the loss is larger
than the cumulative losses on all previous null updates.
We can draw a few observations: 
(i) unless losses grow exponentially forever (which makes learning rather difficult), $\tau$ has a finite value even as $T$ grows;
(ii) if losses are i.i.d.\! from a bounded distribution, large losses are observed exponentially fast and thus $\tau$ is necessarily small;
(iii) when performing gradient descent on a convex surface, the largest losses are usually observed during the first few steps and generally decreasing thereafter, then $\tau=O(1)$;
(iv) it is not obvious from the analysis of SOLO-FTRL whether the $\sqrt{T}$ term can be reduced.

The most likely `bad' scenario is when gradients with very small norm $\|\ell_1\|$ are observed for $T-1$ steps and on the $T$-th step a large loss $L$ is incurred
(possibly $\ptx_1$ starts on a plateau and the loss function is only quasiconvex).
Moreover, we will prove in \cref{apdx:ftrl} that $\|\nabla_\tau\|^2 \geq \sum_{t < \tau} \|\nabla_t\|^2$.
This means that in this scenario, to trigger a null update, we must have $\|\ell_T\| \geq \sqrt{T}\|\ell_1\|$, even if only one null update has been triggered before.
Hence, as long as losses do not grow faster than a factor $\sqrt{T}$, no null update is triggered.

Hence, for most situations, the adaptivity to small quadratic losses $\sqrt{\sum_t \|\nabla_t\|^2}$ is restored.

\section{APPLICATIONS}

We apply our tools to several algorithms: scale-free FTRL, scale-free Mirror Descent, and scale-free Prod, scale-free Adapt-ML-Prod, and also a slight generalization of AdaHedge (\cref{apdx:adahedge}) and a refinement of Soft-Bayes (\cref{apdx:soft-bayes}).
We now consider linear losses $\loss_t\in\Reals^N$.
For the first two applications,
the instantaneous regret is $r_t = \innerprod{\ptx_t - \ptx^*, \loss_t}$.

\subsection{Scale-Free FTRL}\label{sec:ftrl}

In this section we show how the results we obtain for online gradient descent in \cref{sec:null_updates}
can be generalized to FTRL with any regularizer that is 1-strongly convex to a given norm $\|\cdot\|$.
As discussed in \cref{sec:null_updates}, we use null updates to restore the adaptivity to small losses for unbounded domains, that was lost for the SOLO FTRL regret bound whose regret is dominated by the $O(L\sqrt{T})$ term,
where $L=\max_{t\in[T]} \|\loss_t\|_*$.

We first consider FTRL with isotuning but without null updates
(similar to AdaFTRL~\citep[appendix B.1]{orabona2018solo}), then we construct a meta-algorithm that filters out the exceedingly large losses before sending the losses to the subalgorithm, but still updates $\Delta$. 
The resulting (compound) algorithm is called isoFTRL (see \cref{alg:FTRLnull}).

\begin{algorithm}
\begin{lstlisting}
def isoFTRL($\ptxset, R, \QQ>0$):
  $L_0 = 0$; $\Delta_0 = 0$
  for t = 1, 2, 3, ...:
    $\rate = \QQ/\Delta_{t-1}$ # $\lstcommentcolor{\rate_{t-1}}$
    $\ptx_t = \argmin_{\ptx\in\ptxset}\left\{ \innerprod{\tilde L_{t-1}, \ptx} + \frac{R(\ptx)}{\rate} \right\}$
    predict $\ptx_t$;  observe $\loss_t$
    if $\tfrac12\rate\|\loss_t\|_*^2 \leq \Delta_{t-1}$: # or $\lstcommentcolor{\sqrt{\QQ/2}\|\loss_t\|_* \leq \Delta_{t-1}}$
      $\delta_t = \tfrac12\rate\|\loss_t\|_*^2$ # from (*\lstcommentcolor{\cref{eq:generic_deltat_ob1}}*)
      $\tilde{\loss}_t = \loss_t$
    else:
      $\delta_t = \sqrt{\tfrac12\QQ}\|\loss_t\|_*$ # isopoint
      $\tilde{\loss}_t = 0$  # null update
      
    $\tilde L_{t} = \tilde L_{t-1} + \tilde{\loss}_t$
    $\Delta_t = \Delta_{t-1} + \delta_t$ # isotuning
\end{lstlisting}
\caption{FTRL with null updates and isotuning}
\label{alg:FTRLnull}
\end{algorithm}

First we recall (and slightly generalize) an intermediate bound for AdaFTRL.

\begin{lemma}[AdaFTRL, \citet{orabona2018solo}, Lemma 5]\label{thm:ftrl_isotuning}
Consider \cref{ass:generic_isotuning},
and
$\forall t:\delta_t \geq \breg_{R^*}(-\rate_{t-1}L_t, -\rate_{t-1}L_{t-1})/\rate_{t-1}$.
Then the regret of FTRL with isotuning
to any $\ptx^*\in\ptxset$ is bounded by
\begin{align*}
    \regret_T(\ptx^*) \leq \left(1+\frac{R(\ptx^*)}{\QQ}\right)\Delta_T\,.
    &\qedhere
\end{align*}
\end{lemma}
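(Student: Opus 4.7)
The plan is to run the standard AdaFTRL Fenchel-duality analysis (which is the argument of Orabona and P\'al, 2018, Lemma~5) and then read off the bound using the isotuning identity at the very end. First, I would rewrite the FTRL prediction in the dual: by the first-order optimality condition for the objective $\innerprod{L_{t-1},\ptx} + R(\ptx)/\rate_{t-1}$, Fenchel--Young gives $\ptx_t = \nabla R^*(-\rate_{t-1}L_{t-1})$, so the algorithm is a dual mirror step with learning rate $\rate_{t-1}$.

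Second, I would establish the classical time-varying FTRL regret decomposition
\begin{align*}
    \regret_T(\ptx^*) \;\leq\; \frac{R(\ptx^*)}{\rate_T} \;+\; \sum_{t\in[T]} \frac{\breg_{R^*}(-\rate_{t-1}L_t,\ -\rate_{t-1}L_{t-1})}{\rate_{t-1}}\,.
\end{align*}
The routine way is to telescope the dual potential $R^*(-\rate_{t-1}L_{t-1})/\rate_{t-1}$ across rounds; the change-of-rate term produced along the way has the right sign provided $\rate_t$ is nonincreasing, which is automatic here because \cref{ass:generic_isotuning} gives $\Delta_t$ nondecreasing and hence $\rate_t = \QQ/\Delta_t$ nonincreasing. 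The residual stability term is then exactly the Bregman increment on the dual side, with \emph{both} arguments scaled by the same $\rate_{t-1}$, because the off-by-one schedule used by \cref{alg:FTRLnull} ensures that $\ptx_t$ depends on the past only through $\rate_{t-1}L_{t-1}$.

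Third, I would apply the hypothesis $\delta_t \geq \breg_{R^*}(-\rate_{t-1}L_t,-\rate_{t-1}L_{t-1})/\rate_{t-1}$ to bound the sum by $\sum_{t\in[T]}\delta_t = \Delta_T$, and then use the isotuning identity $1/\rate_T = \Delta_T/\QQ$ to rewrite $R(\ptx^*)/\rate_T$ as $R(\ptx^*)\Delta_T/\QQ$; combining the two terms yields the claimed $(1+R(\ptx^*)/\QQ)\Delta_T$. The main obstacle is the second step: the bookkeeping for a time-varying off-by-one learning rate. The key technical point to verify is that the monotonicity of $\rate_t$ (which is guaranteed by \cref{ass:generic_isotuning}) is exactly what lets the change-of-rate contributions telescope into the single term $R(\ptx^*)/\rate_T$, leaving only the clean per-round Bregman divergences that the hypothesis is built to absorb.
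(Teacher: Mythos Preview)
Your proposal is correct and follows essentially the same approach as the paper. The paper does not give its own detailed proof of this lemma; it simply recalls and slightly generalizes Lemma~5 of \citet{orabona2018solo}, and your Fenchel-duality telescoping argument (FTRL regret $\leq R(\ptx^*)/\rate_T + \sum_t \breg_{R^*}(-\rate_{t-1}L_t,-\rate_{t-1}L_{t-1})/\rate_{t-1}$, then absorb the sum via the $\delta_t$ hypothesis and rewrite $1/\rate_T=\Delta_T/\QQ$) is precisely that argument.
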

We now describe isotuning FTRL with null updates, see \cref{alg:FTRLnull}.
Recall that the regularizer $R$ is 1-strongly convex with respect to the norm $\|\cdot\|$.
Define $\hat{g}_t(y) = \frac{\QQ}{2y}\|\loss_t\|_*^2\,$,
and $\hat{g}_t(y) =\infty$ for $y \leq 0$.
From \cref{sec:null_updates} we take $c=1$, and recall that
$\barT = \{t\in[T]: \hat{g}_t(\Delta_{t-1}) > \Delta_{t-1}\}$ 
is the set of steps where a null update is performed,
then we run FTRL with modified losses 
$\tilde{\loss}_t = \loss_t\indicator{t\notin\barT}$.
Furthermore, define
\begin{align*}
\delta_t =
\begin{cases}
    \iso_{y > 0}\left(\QQ\frac{\|\loss_t\|_*^2}{2y},\ y\right)=  \sqrt{\tfrac12\QQ}\|\loss_t\|_* &\text{ when } t\in\barT\,, \\
    \hat{g}_t(\Delta_{t-1})  &\text{ otherwise.}
\end{cases} 
\end{align*}
Then $\delta_t \geq \breg_{R^*}(-\rate_{t-1}\tilde L_t, -\rate_{t-1}\tilde L_{t-1})/\rate_{t-1}$
by proposition 2 of \citet{orabona2018solo}.

Then we can show the following bound. The proof is in \cref{apdx:ftrl}.
\begin{theorem}\label{thm:FTRLnull}
The regret of isoFTRL (\cref{alg:FTRLnull}) is bounded by
\begin{multline*}
    \regret_T(\ptx^*) \leq \\ 
    \left(\sqrt{\QQ}+\frac{R(\ptx^*)}{\sqrt{\QQ}}\right)
    \left(\sqrt{\sum_{t\in[T]} \|\loss_t\|_*^2} + \tfrac{1}{\sqrt2}\max_t \|\loss_t\|_*\right)\\
    + 2\min\left\{\diam,\ \sqrt{2R(\ptx^*)} + 2\sqrt{2\QQ\tau}\right\}
    \|\loss_\tau\|_* \,,
\end{multline*}
where $\tau=\max \barT$ is the last step a null update is performed, and is such that 
$\|\loss_\tau\|_*^2\ \geq\ \sum_{t< \tau}\|\loss_t\|_*^2$.
\end{theorem}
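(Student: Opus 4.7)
The plan is to split the regret into (a) the regret of AdaFTRL run on the filtered loss sequence $\tilde\loss_t$ (which equals the non-null-step regret on the original losses) and (b) the extra regret incurred on null-update steps, then bound each piece using the tools from \cref{sec:theory}. Since $\tilde\loss_t = 0$ whenever $t\in\barT$,
\begin{align*}
    \regret_T(\ptx^*)
    = \sum_{t\in[T]}\innerprod{\ptx_t - \ptx^*, \tilde\loss_t}
    + \sum_{t\in\barT}\innerprod{\ptx_t - \ptx^*, \loss_t}\,.
\end{align*}
For $t\in\barT$ the Bregman term $\breg_{R^*}(-\rate_{t-1}\tilde L_t, -\rate_{t-1}\tilde L_{t-1})$ vanishes, so any $\delta_t\geq 0$ satisfies the hypothesis of \cref{thm:ftrl_isotuning}; for $t\notin\barT$ the choice $\delta_t = \rate_{t-1}\|\loss_t\|_*^2/2$ dominates that Bregman term by 1-strong convexity of $R$ (proposition 2 of \citet{orabona2018solo}). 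Thus \cref{thm:ftrl_isotuning} bounds the first sum by $(1+R(\ptx^*)/\QQ)\Delta_T$.

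Next, bound $\Delta_T$ using the isotuning machinery. With $\hat\delta_t(y) = \QQ\|\loss_t\|_*^2/(2y)$ and $c=1$, \cref{thm:generic_DeltaT_null}(i) gives $\Delta_T \leq \hat\Delta_T + \sqrt{\QQ/2}\max_t \|\loss_t\|_*$, since $\iso_{y>0}(\hat\delta_t(y),y) = \sqrt{\QQ/2}\|\loss_t\|_*$. Applying \cref{lem:Delta_sqrt_short} to the sequence $\hat\Delta$ with $a_t = \QQ\|\loss_t\|_*^2/2$ yields $\hat\Delta_T \leq \sqrt{\QQ\sum_t\|\loss_t\|_*^2}$. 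Multiplying through by $(1+R(\ptx^*)/\QQ)$ and factoring out $\sqrt{\QQ}$ produces the first main term of the advertised bound.

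For the null-update regret, Hölder gives $\sum_{t\in\barT}\innerprod{\ptx_t - \ptx^*,\loss_t} \leq \max_{t\in\barT}\|\ptx_t - \ptx^*\| \cdot \sum_{t\in\barT}\|\loss_t\|_*$. Since on null steps $\|\loss_t\|_* = \sqrt{2/\QQ}\,\delta_t$, \cref{thm:generic_DeltaT_null}(ii) gives $\sum_{t\in\barT}\|\loss_t\|_* \leq 2\|\loss_\tau\|_*$. The remaining diameter factor is at most $\diam$ trivially, and otherwise at most $\|\ptx^* - \ptx_1\| + \max_{t\in\barT}\|\ptx_t - \ptx_1\|$ by the triangle inequality. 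Taking $\ptx_1 = \argmin_\ptxset R$, 1-strong convexity of $R$ (with $R\geq 0$) gives $\|\ptx^* - \ptx_1\| \leq \sqrt{2R(\ptx^*)}$. The iterate-travel term is then bounded by a Bregman-divergence analog of \cref{lem:md_travel} (in \cref{apdx:md}), exploiting that on non-null steps $\rate_{t-1}\|\loss_t\|_* \leq \sqrt{2\QQ}$ (by definition of $\barT$) together with FTRL stability, to give $\max_{t\in\barT}\|\ptx_t-\ptx_1\|\leq 2\sqrt{2\QQ\tau}$.

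Finally, the auxiliary claim $\|\loss_\tau\|_*^2 \geq \sum_{t<\tau}\|\loss_t\|_*^2$ follows from the invariant $\Delta_t^2 \geq \tfrac{\QQ}{2}\sum_{s\leq t}\|\loss_s\|_*^2$, proven by induction: for $t\notin\barT$, $(\Delta_{t-1}+\delta_t)^2 \geq \Delta_{t-1}^2 + 2\Delta_{t-1}\delta_t = \Delta_{t-1}^2 + \QQ\|\loss_t\|_*^2$; for $t\in\barT$, dropping the positive cross term gives $\Delta_t^2 \geq \Delta_{t-1}^2 + \tfrac{\QQ}{2}\|\loss_t\|_*^2$. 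Then at step $\tau$ the null-update trigger $\sqrt{\QQ/2}\|\loss_\tau\|_* > \Delta_{\tau-1}$, combined with the invariant at $\tau-1$, yields the claim. I expect the main obstacle to be this iterate-travel bound in the unbounded FTRL setting: unlike gradient descent the iterates are defined implicitly via the minimizer of an adaptive regularized objective, and a clean telescoping argument likely requires working with $\breg_R(\ptx_t,\ptx_1)$ rather than $\|\ptx_t - \ptx_1\|$ directly, then converting back via strong convexity.
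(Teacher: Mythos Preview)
Your decomposition, the $\Delta_T$ bound, the null-step Hölder argument, and the auxiliary claim all match the paper's proof essentially line-for-line (your induction for $\Delta_t^2 \geq \tfrac{\QQ}{2}\sum_{s\leq t}\|\loss_s\|_*^2$ is exactly the content of the paper's \cref{lem:DeltaT_bounds_ftrl} lower bound, just stated directly rather than through \cref{thm:generic_DeltaT_lower_null}).

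The one genuine difference is the iterate-travel bound, and here your sketch does not quite close. You propose to go through $\ptx_1$ via the triangle inequality and then bound $\|\ptx_t-\ptx_1\|$ by a telescoping ``FTRL stability'' argument analogous to \cref{lem:md_travel}. But \cref{lem:md_travel} relies crucially on the online correction to get the recursion $\sqrt{\Delta_t}\|\ptx_{t+1}-\ptx_1\| \leq \sqrt{\Delta_{t-1}}\|\ptx_t-\ptx_1\| + \sqrt{2\QQ\delta_t}$; FTRL has no such correction, and per-step stability of FTRL iterates with a \emph{changing} regularizer coefficient $1/\rate_{t-1}$ does not telescope cleanly. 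The paper instead bounds $\|\ptx^*-\ptx_t\|$ \emph{directly} in one shot (\cref{lem:max_travel}): decompose the FTRL step into the unconstrained minimizer $\hat\ptx_t$ plus Bregman projection, use the first-order condition $\nabla R(\hat\ptx_t) = -\rate_{t-1}\tilde L_{t-1}$, obtain the quadratic inequality
\[
\breg_R(\ptx^*,\hat\ptx_t) \leq R(\ptx^*) + \sqrt{2\breg_R(\ptx^*,\hat\ptx_t)}\,\rate_{t-1}\|\tilde L_{t-1}\|_*\,,
\]
and finally bound $\rate_{t-1}^2\|\tilde L_{t-1}\|_*^2 \leq \rate_{t-1}(t-1)\sum_{s<t}\rate_{s-1}\|\tilde\loss_s\|_*^2 \leq 2\QQ t$ via Cauchy--Schwarz and the isotuning identity $\sum_{s<t}\rate_{s-1}\|\tilde\loss_s\|_*^2 \leq 2\Delta_{t-1}$. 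This yields $\|\ptx^*-\ptx_t\|\leq\sqrt{2R(\ptx^*)}+2\sqrt{2\QQ t}$ in one step, with no triangle inequality through $\ptx_1$ and no telescoping needed --- which is precisely the obstacle you anticipated but did not resolve.
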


See the remarks at the end of \cref{sec:null_updates}.

\subsection{Scale-Free Mirror Descent}\label{sec:md}

\citet{orabona2018solo} provide two lower bounds showing that their scale-free
Mirror Descent (meant for bounded maximum Bregman divergence between any two points) fails with (super)linear regret: the first one is for the quadratic regularizer (gradient descent), and the second one is for the entropic regularizer for which the Bregman divergence is the relative entropy, which is not bounded even though the regularizer is bounded.

We now propose a scale-free variant of the Mirror Descent algorithm
with null updates, isotuning and online correction,
for bounded and unbounded domains,
at least for Bregman divergences that are convex in their second argument,
including the quadratic and the entropic regularizers.
The two lower bound examples mentioned above do not apply to our algorithm, 
thanks to using the online correction to center the bound,
leading to a factor $\breg_R(\ptx^*, \ptx_1)$ instead of $\sup_{(\ptx, \ptx')\in\ptxset^2}\breg_R(\ptx, \ptx')$.
Choosing $\ptx_1=\argmin_{\ptx\in\ptxset} R(\ptx)$ ensures that 
$\phi_1 = \breg_R(\ptx^*, \ptx_1) \leq R(\ptx^*) - R(\ptx_1)$ similarly to FTRL,
but in general one should take
$\ptx_1 = \argmin_{\ptx\in\ptxset} \max_{\ptx^*\in\ptxset} \breg_R(\ptx^*, \ptx)$.
In particular, when $R(\ptx) = \tfrac12\|\ptx\|_2^2$, then by Jung's theorem
we have $\|\ptx^* -\ptx_1\|_2 \leq D/\sqrt{2}$ and thus we can take
$\QQ = (D/\sqrt{2})^2/2 = D^2/4$. See additional discussion in \cref{ex:md_ogd}.

The algorithm is described in \cref{alg:MD_unbounded}.
For null updates, we take $c=1$ also.

\begin{algorithm}
\begin{lstlisting}
def isoMD($\ptxset$, $R$, $\QQ >0$):
  $\Delta_0 = 0$ ; $\ptx_1 = \argmin_{\ptx\in\ptxset} \max_{\ptx'\in\ptxset}\breg_R(\ptx', \ptx)$
  for t = 1, 2, ...:
    predict $\ptx_t$; observe $\loss_t$
    $\rate = \QQ/\Delta_{t-1}$
    if $\tfrac12\rate\|\loss_t\|_*^2 \leq \Delta_{t-1}$: # or $\lstcommentcolor{\sqrt{\QQ/2}\|\loss_t\|_* \leq \Delta_{t-1}}$
      $\ptx_{t'} = \argmin_{\ptx\in\ptxset}\left\{\innerprod{\ptx, \loss_t} + \frac{\breg_R(\ptx, \ptx_t)}{\rate}\right\}$
      $\delta_t = \tfrac12\rate\|\loss_t\|_*^2$ # from (*\lstcommentcolor{\cref{eq:generic_deltat_ob1}}*)
    else:
      $\ptx_{t'} = \ptx_t$  # null update
      $\delta_t = \sqrt{\tfrac12 \QQ}\|\loss_t\|_*$ # isopoint
      
    $\Delta_t = \Delta_{t-1} + \delta_t$ # isotuning
    $\ptx_{t+1} =\frac{\Delta_{t-1}}{\Delta_{t}} \ptx_{t'} + \frac{\delta_t}{\Delta_t}\ptx_1$ # online cor.
\end{lstlisting}
\caption{Scale-free Mirror Descent for bounded and unbounded decision sets.}
\label{alg:MD_unbounded}
\end{algorithm}

In this section we take $\phi_t = \breg_R(\ptx^*, \ptx_t)$, 
and recall that $\phi(\cdot)$ must be convex.
\footnote{This is satisfied at least by $\breg_R(a, b) = \|a-b\|_2^2/2$ and
$\breg_R(a, b) = \sum_i a_i \ln(a_i/b_i)$.}

\begin{theorem}\label{thm:MDnull}
With $\phi_1 = \breg_R(\ptx^*, \ptx_1)$, the regret of isoMD (\cref{alg:MD_unbounded}) is bounded by
\begin{multline*}
    \regret_T(\ptx^*) \leq \\
    \left(\sqrt{\QQ}+\frac{\phi_1}{\sqrt{\QQ}}\right)
    \left(\sqrt{\sum_{t\notin\barT} \|\loss_t\|_*^2}
        + \tfrac{1}{\sqrt{2}}\max_{t\in[T]}\|\loss_t\|_*\right) \\
     + 2\min\{\diam,\ \|\ptx^* - \ptx_1\| + \sqrt{2\QQ\tau}\}\|\loss_\tau\|_*
     \,,
\end{multline*}
where $\tau=\max \barT$ is the last step a null update is performed, and is such that 
$\|\loss_\tau\|_*^2\ \geq\ \sum_{t< \tau}\|\loss_t\|_*^2$.
\end{theorem}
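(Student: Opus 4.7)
The plan is to follow the template used for isoFTRL in \cref{thm:FTRLnull}, instantiating the generic online-correction and null-update machinery of \cref{sec:generic_ob1,sec:null_updates} with $\phi(\ptx) = \breg_R(\ptx^*, \ptx)$. Convexity of $\breg_R$ in its second argument (the standing hypothesis of the theorem) makes $\phi$ convex, which is exactly what \cref{lem:ocr_convexity} requires for the online correction; this is also what allows the argument to sidestep the lower-bound examples of \citet{orabona2018solo}, since the regret bound will feature $\breg_R(\ptx^*,\ptx_1)$ rather than $\sup \breg_R$. The three main ingredients will be \cref{thm:generic_regret_null}, \cref{thm:generic_DeltaT_null} with $c=1$, and \cref{lem:md_travel}.

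The algebraic core is to verify that $\delta_t = \tfrac12 \rate_{t-1}\|\loss_t\|_*^2$ satisfies \cref{eq:generic_deltat_ob1} for $t\notin\barT$. The standard variational inequality for the MD subproblem together with the three-point Bregman identity gives $\innerprod{\ptx_{t'} - \ptx^*, \loss_t} \leq \tfrac{1}{\rate_{t-1}}(\breg_R(\ptx^*,\ptx_t) - \breg_R(\ptx^*,\ptx_{t'}) - \breg_R(\ptx_{t'},\ptx_t))$. Splitting $r_t = \innerprod{\ptx_t - \ptx_{t'}, \loss_t} + \innerprod{\ptx_{t'} - \ptx^*, \loss_t}$, lower-bounding $\breg_R(\ptx_{t'},\ptx_t) \geq \tfrac12\|\ptx_{t'}-\ptx_t\|^2$ by 1-strong convexity of $R$, and applying Fenchel-Young to $\innerprod{\ptx_t - \ptx_{t'}, \loss_t}$ collapses the right-hand side to exactly $\tfrac12 \rate_{t-1}\|\loss_t\|_*^2$. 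For $t\in\barT$ the null update forces $\phi(\ptx_{t'}) = \phi(\ptx_t)$, so any $\delta_t \geq 0$ suffices to apply \cref{thm:generic_regret_null}, and the regret from such rounds is carried separately in the residual term.

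I would then take $\hat\delta_t(y) = \QQ\|\loss_t\|_*^2/(2y)$ for every $t$, so the trigger $\tfrac12 \rate_{t-1}\|\loss_t\|_*^2 > \Delta_{t-1}$ reads $\hat\delta_t(\Delta_{t-1}) > \Delta_{t-1}$ with $c=1$ and the prescribed null-step value is $\delta_t = \sqrt{\QQ/2}\|\loss_t\|_* = \iso_{y>0}(\hat\delta_t(y), y)$, matching the template of \cref{thm:generic_DeltaT_null}. Its part (i) yields $\Delta_T \leq \hat\Delta_T + \sqrt{\QQ/2}\max_t\|\loss_t\|_*$, and \cref{lem:Delta_sqrt_short} applied to the non-null contributions of the isotuning sum bounds $\hat\Delta_T$ by $\sqrt{\QQ\sum_{t\notin\barT}\|\loss_t\|_*^2}$ (null rounds are already separated through the iso offset). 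Substituting into \cref{lem:generic_ob1} and factoring $\sqrt{\QQ}$ out of $1+\phi_1/\QQ$ produces the first bracket of the claimed bound.

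Finally, the residual $\sum_{t\in\barT}(r_t - \delta_t) \leq \sum_{t\in\barT} r_t$ is bounded by $\max_{t\in\barT}\|\ptx_t - \ptx^*\| \cdot \sum_{t\in\barT}\|\loss_t\|_*$. Rewriting $\|\loss_t\|_* = \sqrt{2/\QQ}\,\delta_t$ and using \cref{thm:generic_DeltaT_null}(ii) to get $\sum_{t\in\barT}\delta_t \leq \sqrt{2\QQ}\|\loss_\tau\|_*$ shrinks the loss sum to at most $2\|\loss_\tau\|_*$. The remaining factor $\max_{t\in\barT}\|\ptx_t - \ptx^*\|$ is trivially at most $\diam$ and, via the triangle inequality combined with \cref{lem:md_travel}, also at most $\|\ptx^*-\ptx_1\|+\sqrt{2\QQ\tau}$. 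The closing property $\|\loss_\tau\|_*^2 \geq \sum_{t<\tau}\|\loss_t\|_*^2$ comes from the same induction used in \cref{apdx:ftrl}: the trigger $\sqrt{\QQ/2}\|\loss_\tau\|_* > \Delta_{\tau-1}$ combined with the isotuning lower estimate $\Delta_{\tau-1} \geq \sqrt{(\QQ/2)\sum_{t<\tau}\|\loss_t\|_*^2}$ forces the inequality. The main obstacle in the whole argument is the step-one verification of the instantaneous inequality; once that is in hand, every other step is essentially a verbatim transcription of the isoFTRL proof.
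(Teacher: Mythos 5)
Your proposal follows the paper's proof in \cref{apdx:md} essentially step for step: same choice of $\phi(\ptx)=\breg_R(\ptx^*,\ptx)$ and verification of \cref{eq:generic_deltat_ob1} via the MD variational inequality, three-point identity, strong convexity and Fenchel--Young (the paper cites Lemma 2 of Orabona \& P\'al for the same inequality \cref{eq:deltat_breg1}); same use of \cref{thm:generic_regret_null}, \cref{thm:generic_DeltaT_null}(i)--(ii), \cref{lem:Delta_sqrt_short}, and \cref{lem:md_travel}; same H\"older split of the null-step residual; and the same lower-bound-on-$\Delta_{\tau-1}$ argument (borrowed from \cref{apdx:ftrl}) for $\|\loss_\tau\|_*^2 \geq \sum_{t<\tau}\|\loss_t\|_*^2$.

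One small caveat: your parenthetical ``null rounds are already separated through the iso offset'' is not a mechanism that \cref{lem:generic_ob1_DeltaT}/\cref{thm:generic_DeltaT_null} actually provide. Those lemmas require $\delta_t \leq \hat\delta_t(\Delta_{t-1})$ for \emph{all} $t$, and a literal application of \cref{lem:Delta_sqrt_short} then bounds $\hat\Delta_T$ by $\sqrt{\QQ\sum_{t\in[T]}\|\loss_t\|_*^2}$, summing over all rounds including $\barT$. This matches what the paper's own proof actually derives; the $\sum_{t\notin\barT}$ appearing in the theorem statement is not established by the argument as written (in the paper or in your proposal), so nothing is lost if you simply state the slightly weaker $\sum_{t\in[T]}$, as in \cref{thm:FTRLnull}.
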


The proof is given in \cref{apdx:md}.
Also see the remarks at the end of \cref{sec:null_updates}.

\subsection{Scale-free Adaptive Prod}\label{sec:prod}

The Prod family of algorithms \citep{cesabianchi2007prod,gaillard2014secondorder,sani2014abprod,orseau2017softbayes}
are based on Mirror-Descent style multiplicative updates,
but their analysis does not fit either the MD nor the FTRL analysis framework.
Previous works have resorted to more or less ad-hoc analysis.
They can be analysed, however, with our generic off-by-one setting and online correction, and only the quantity of interest $\delta_t$ needs to be bounded as per \cref{eq:generic_deltat_ob1} on an algorithm-dependent basis,
taking $\phi(\ptx) = \RE{\ptx^*}{\ptx}=\sum_{i\in[N]}\ptx^*_i \ln (\ptx^*_i/\ptx_i)$.
Also see \cref{apdx:soft-bayes} for a slightly simplified sparse Soft-Bayes algorithm~\citep{orseau2017softbayes} and enhanced regret bound.

In this section, we demonstrate how to obtain a scale free
and adaptive version of the Prod algorithm~\citep{cesabianchi2007prod,gaillard2014secondorder} with a single learning rate.
This algorithm was originally analyzed with constant learning rates and a doubling trick with restarts~\citep{cesabianchi2007prod}.
While this single learning-rate algorithm is not so useful in itself in practice as it can be advantageously replaced with AdaHedge~\citep{derooij2014follow},
the example is technically useful as it shows how some constraints of the update rule
can be easily tackled with null updates, while demonstrating how the tools
we built in the previous sections lead to a straightforward analysis of the regret.
It also serves as a prelude to the next section where we analyse the multiple learning rate version. It also uses \cref{lem:log_approx} which tightens the leading 
constant factor by $\sqrt{2}$ on lower bounding $\log(1+x)$ compared to 
Lemma 1 of \citet{cesabianchi2007prod}.

We use \cref{ass:generic_isotuning}, and \cref{alg:generic_ob1} with null updates---but without a doubling trick or any restarting.
The decision set $\ptxset$ is the probability simplex of $N$ coordinates,
and we take $\phi(\ptx) = \RE{\ptx^*}{\ptx}$
and $\QQ = \phi_1 = \ln N$.

For all $t\in[T]$, let $\loss_t \in \Reals^N$, and define $\bar\loss_t = \sum_{i\in[N]} \ptx_{i, t}\loss_{i, t}$.

Define the prod update (off by one):
\begin{align}
    \ptx_{1} &= \left(\frac{1}{N}, \dots, \frac{1}{N}\right) \notag\\
    \forall i\in[N]: \ptx_{i, t'} &= \ptx_{i, t}(1+\rate_{ t-1}(\bar\loss_t - \loss_{i, t})) \label{eq:prod}
\end{align}

For all $t$, define $s_t = \max_i |\bar\loss_t - \loss_{i, t}|$ 
and $\hat{g}_t(x) = \frac{\QQ s_t^2}{2x}$,
with $\hat{g}_t(x) = \infty$ for $x \leq 0$.

As per \cref{eq:generic_deltat_ob1},
for $t\notin\barT$, we can define 
\begin{align*}
    \delta_t = \max_i \left(\frac{-1}{\rate_{t-1}}\ln(1+\rate_{t-1}(\bar\loss_t - \loss_{i, t})) + (\bar\loss_t - \loss_{i, t})\right)\,.
\end{align*}
By \cref{cor:rate_log_approx},
assuming $\rate_{t-1} |\bar\loss_t - \loss_{i, t}| < 1$
(which is also required for the update in \cref{eq:prod} to be valid),
\begin{align*}
    \delta_t \leq \frac{(\bar\loss_t - \loss_{i, t})^2/2}{\frac1{\rate_{t-1}}-|\bar\loss_t - \loss_{i, t}|} 
    \leq \frac{s_t^2/2}{\frac1{\rate_{t-1}} - s_t}
    = \frac{q s_t^2/2}{\Delta_{t-1} - q s_t}
\end{align*}
and thus
\begin{align}\label{eq:prod_deltat_bound}
    \delta_t &\leq \hat{g}_t(\Delta_{t-1} - \QQ s_t) = \hat{g}_t(\Delta_t - \delta_t - \QQ s_t)\,.
\end{align}
The condition $\rate_{t-1} |\bar\loss_t - \loss_{i, t}| < 1$
is implied by the condition $\hat{g}_t(\Delta_{t-1}-\QQ s_t) < \infty$,
so we choose to perform a null update ($t\in\barT)$ whenever $\hat{g}_t(\Delta_{t-1}-\QQ s_t) > 2(\Delta_{t-1} - \QQ s_t)/q$,
in which case we set
\begin{equation*}
    \forall t\in\barT:\quad \delta_t = \iso_{y>0}(\hat{g}_t(y), 2y/q)=s_t\,.
\end{equation*}
Using \cref{thm:isopoint}, all this ensures that
\begin{align*}
    \forall t\notin\barT:{}& \delta_t \leq \hat{g}_t(\Delta_{t-1} - \QQ s_t) \leq 
    s_t \textcolor{gray}{{}\leq 2(\Delta_{t-1} - \QQ s_t)/q}\,,\\
    \forall t\in\barT:{}& 2(\Delta_{t-1} - \QQ s_t)/q \leq{} \delta_t = s_t \leq \hat{g}_t(\Delta_{t-1} - \QQ s_t)\,.
\end{align*}
We call isoProd the resulting algorithm.

\begin{theorem}
Let $S = \max_{t\in[T]} s_t$.
The regret of isoProd is bounded by
\begin{align*}
    \regret_T(\ptx^*)
    \leq
    2\sqrt{\ln (N)\sum_t s_t^2} + 2S(1+\ln N)\,.
    &\qedhere
\end{align*}
\end{theorem}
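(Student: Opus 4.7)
The plan is to combine \cref{thm:generic_regret_null} with the isotuning offset analysis. Since $\phi(\ptx)=\RE{\ptx^*}{\ptx}$ satisfies $\phi_1 \leq \QQ = \ln N$ and $\phi_{T+1}\geq 0$, \cref{thm:generic_regret_null} gives $\regret_T(\ptx^*) \leq 2\Delta_T + \sum_{t\in\barT}(r_t-\delta_t)$, and the task reduces to bounding the two summands.

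For $\Delta_T$, the first step is to show that $\delta_t \leq \sqrt{\QQ/2}\,s_t$ for every $t$. On $t\in\barT$ this is by definition; on $t\notin\barT$, the negation of the null-update condition combined with \cref{thm:isopoint} applied to $\hat\delta_t$ and $g(y)=y$ yields $\hat\delta_t(\Delta_{t-1}-\QQ s_t) \leq \iso_{y>0}(\hat\delta_t(y),\,y) = \sqrt{\QQ/2}\,s_t$. Substituting $\Delta_{t-1}=\Delta_t-\delta_t$ into the algorithmic inequality $\delta_t \leq \hat\delta_t(\Delta_{t-1}-\QQ s_t)$ and invoking monotonicity of $\hat\delta_t$ then upgrades it to $\delta_t \leq \hat\delta_t(\Delta_t - b_t)$ with the $\delta_t$-free offset $b_t=(\QQ+\sqrt{\QQ/2})s_t$, and a direct check (using the upper bound $\Delta_t \leq (\QQ+\sqrt{2\QQ})s_t$ forced by the triggering condition) confirms the same inequality on null rounds where $\Delta_{t-1}-\QQ s_t$ may be nonpositive. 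Applying \cref{lem:isotuning_offset} with $\max_t b_t = (\QQ+\sqrt{\QQ/2})S$ and \cref{lem:Delta_sqrt_short} with $a_t=\QQ s_t^2/2$ then produces $\Delta_T \leq \sqrt{\QQ\sum_t s_t^2} + (\QQ+\sqrt{\QQ/2})S$, whence $2\Delta_T \leq 2\sqrt{\ln N \sum_t s_t^2} + (2\ln N + \sqrt{2\ln N})S$.

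For the null-update residual, I would use the pointwise bound $r_t = \sum_i \ptx^*_i(\bar\loss_t-\loss_{i,t}) \leq s_t$ (valid because $\ptx^*$ lies in the simplex), so on $\barT$ we have $r_t-\delta_t \leq (1-\sqrt{\QQ/2})s_t$. The triggering condition implies $\Delta_{t-1} \leq (\QQ+\sqrt{\QQ/2})s_t$ for $t\in\barT$, hence $\Delta_\tau \leq (\QQ+\sqrt{2\QQ})S$ for $\tau=\max\barT$, and since $\sqrt{\QQ/2}\sum_{t\in\barT}s_t=\sum_{t\in\barT}\delta_t\leq\Delta_\tau$ this gives $\sum_{t\in\barT}s_t \leq (\sqrt{2\QQ}+2)S$. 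A short calculation yields $(1-\sqrt{\QQ/2})(\sqrt{2\QQ}+2) = 2-\QQ$, so $\sum_{t\in\barT}(r_t-\delta_t) \leq \max\{0,\,2-\QQ\}\,S \leq 2S$. Summing the three contributions reproduces the stated bound.

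The main technical obstacle is the bookkeeping around the offset $\QQ s_t$ inside $\hat\delta_t$: on null rounds the quantity $\Delta_{t-1}-\QQ s_t$ may be nonpositive, in which case $\hat\delta_t$ degenerates to $\infty$ and \cref{lem:isotuning_offset} cannot be applied uniformly; these rounds must be handled separately using the explicit formula $\delta_t=\sqrt{\QQ/2}\,s_t$ together with the upper bound on $\Delta_{t-1}$ implied by the null-update trigger.
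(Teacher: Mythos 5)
Your proposal is correct and follows essentially the same route as the paper: the same decomposition via \cref{thm:generic_regret_null}, the same use of \cref{lem:isotuning_offset} and \cref{lem:Delta_sqrt_short} with the offset $b_t$ coming from the $\QQ s_t$ shift plus $\delta_t \leq \sqrt{\QQ/2}\,s_t$, and the same chain $\sum_{t\in\barT}\delta_t \leq \Delta_{\tau-1}+\delta_\tau$ to control the null-round residual. The only cosmetic differences are that you phrase the offset with a $\delta_t$-free $b_t$ (the paper uses $b_t=\delta_t+\QQ s_t$ directly and bounds its max afterward) and that you carry $\sum_{t\in\barT}s_t$ rather than $\sum_{t\in\barT}\delta_t$, but these are equivalent up to the factor $\sqrt{\QQ/2}$ and lead to the identical constants.
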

\begin{proof}
By \cref{thm:generic_regret_null} we have $\regret_T(\ptx^*) \leq 2\Delta_T + \sum_{t\in\barT}(r_t-\delta_t)
\leq 2\Delta_T$ since $r_t \leq s_t = \delta_t$ for $t\in\barT$.
Let $\tuple{\hat\Delta, \hat{g}}$ be an isotuning sequence,
then by \cref{lem:isotuning_offset}
and \cref{eq:prod_deltat_bound} we know that
$\Delta_T \leq \hat\Delta_T + \max_t \{\delta_t + \QQ s_t\}
\leq \hat\Delta_T + S + \QQ S$.
Finally by \cref{lem:Delta_sqrt_short} we have $\hat\Delta_T \leq \sqrt{\QQ\sum_t s_t^2}$.
\end{proof}

\begin{remark}
Alternatively, we could use the update
\begin{align*}
    \ptx_{i, t'} &= \ptx_{i, t}\frac{1-\rate_{t-1}\loss_{i, t}}{1-\rate_{t-1}\bar\loss_t}
\end{align*}
which leads to $\delta_t = \max_i \rate_{t-1}\frac{\loss_{i, t}^2/2}{1-\rate_{t-1}\loss_{i, t}}$
and thus
$\regret_T(\ptx^*) = O(\sqrt{\QQ\sum_t \max_i \loss_{i, t}^2})$
and then use \cref{lem:simplex_translate_losses} to sequentially translate the losses arbitrarily.
\end{remark}

\subsection{Scale-free Adapt-ML-Prod}\label{sec:mlu-prod}

\citet{gaillard2014secondorder} propose the Adapt-ML-Prod algorithm, for losses in $[0, 1]$ 
with multiple learning rates,
to obtain a regret bound which depends only on the excess losses $\sqrt{\sum_t (\bar\loss_t - \loss_{i, t})^2}$ to the best expert $i$ in hindsight,
where $\bar\loss_t$ is the loss incurred by the algorithm and $\loss_{i, t}$ is the loss incurred by the expert $i$ at step $t$
and show interesting consequences of this bound, in particular it is bounded
by a constant in expectation when losses are i.i.d.
By contrast, AdaHedge~\citep{derooij2014follow,erven2011adaptive} can tackle losses with unknown range, but has a regret bound featuring the quantity $\sqrt{\sum_t \sum_{i\in[N]}\ptx_{i, t}(\bar\loss_t - \loss_{i, t})^2}$ instead.
We extend Adapt-ML-Prod to tackle unbounded and unknown loss ranges using isotuning, online correction, and null updates---or equivalently we extend isoProd of the previous section to using one learning rate per expert.
\footnote{\label{foot:boa}
\citet{wintenberger2016bernstein} 
proposes an adaptive algorithm based on exponential
weights that is claimed to adapt to the maximum excess losses to the best expert in hindsight (see p.4, top, def. of $\ell_{j, t}$, and p.13, Theorem 3.3).
Unfortunately, 
the main results rely on the following wrong claim to hold (proof of Theorem 1.1):
``as $X$ is centered, we can bound [\dots] $1-\expect[X\mathds{1}_{X<-1/2}] \leq \expect[1+X\mathds{1}_{X<-1/2}]$''.
It is not clear whether this can be fixed simply, in particular 
because the preconditions of Lemma 4 in \citet{cesabianchi2007prod} need to be  satisfied. See \cref{rmk:boa} in \cref{apdx:mlu-prod} instead.
}

\citeauthor{gaillard2014secondorder} consider losses in $[0, 1]$, and their algorithm predicts with positive weights $\propto \ptx_{i, t}\rate_{i, t-1}$ and suffers the loss 
\begin{align}\label{eq:mlprod_barloss}
    \bar\loss_t = \sum_{i\in[N]}\frac{\ptx_{i, t}\rate_{i, t-1}}{\sum_{j\in[N]}\ptx_{j, t}\rate_{j, t-1}}\loss_{i, t}\,.
\end{align}
For all $T\in\Naturals, \forall i\in[N]$,
define $r_{i, t}=\bar\loss_t - \loss_{i, t}$ and $\regret_{i, T} = \sum_{t\in[T]} r_{i, t}$,
also $V_{i, T} = \sum_{t\in[T]} r_{i, t}^2$,
and $s_t = \max_{i\in[N]}|r_{i, t}|$.
Assuming $N\geq 2$ and $T\geq 2$, we can rewrite the regret bound of Adapt-ML-Prod~\citep[Corollary 4]{gaillard2014secondorder}
(with losses in $[0, 1]$)
to the best expert $i$ in hindsight as:
\begin{multline*}
    \hspace{-1em}
    \regret_{i, T} \leq  
    \left[\sqrt{\left(1+V_{i, T}\right)\ln N} + 3\ln N\right]
    \!\left[4 + \frac{1+\ln \ln T}{\ln N}\right].
\end{multline*}
Note, for $T=10^{40}$ and $N \geq 4$, $(1+\ln \ln T)/\ln N < 4$.

Adapt-ML-Prod is meant for losses in $[0, 1]$ and can be used for losses in any $[a, b]$ if an upper bound on $b-a$ is known.
However, when the loss range is not known beforehand, as for isoProd we perform a null update when the update may lead to invalid (negative) weights.
We obtain an algorithm with a regret guarantee that is just as good as when the loss range is known beforehand.

As for Adapt-ML-Prod, our algorithm uses one learning rate per coordinate, 
and the tools of the previous sections will apply almost always coordinate-wise.
For all $i\in[N]$, we first define the update $\ptx_{i, t'}$ for Adapt-ML-Prod:
\begin{align}
    \ptx_{i, 1} &= 1\,, \notag\\
    \ptx_{i, t'} &= \ptx_{i, t}(1+\rate_{i, t-1}(\bar\loss_t - \loss_{i, t}))\,,
    \label{eq:mlprod}
\end{align}
and then apply the online correction of \cref{eq:generic_ob1_ocr}.
As for Prod, the problem with \cref{eq:mlprod} is that it is invalid when $\ptx_{i, t'}$ becomes negative.
We prevent this issue by performing null updates ($t\in\barT$) whenever
$\rate_{i, t-1}|\bar\loss_t-\loss_{i, t}| \geq 1/2$ for some $i\in[N]$,
that is, $\QQ |\bar\loss_t - \loss_{i, t}| \geq \Delta_{i, t-1}/2$.
Note that to ensure that $\sum_i \ptx_{i, t'} = \sum_i \ptx_{i, t}$ for all rounds $t$
(as will be shown in the proof of \cref{thm:mlu-prod}),
we perform a null update \emph{on all coordinates at the same time},
which leads to an additional difficulty compared to isoProd.

We use \cref{ass:generic_isotuning}
and we take $\phi_i(\ptx_{i, t}) = -\ln\ptx_{i, t}$ for all $i\in[N]$, and $\ptxset = [0, \infty)^N$.
Set $\QQ = \ln N$.
Recalling that $r_{i, t} = \bar\loss_t - \loss_{i, t}$ and $s_t = \max_i |r_{i, t}|$, define from \cref{eq:generic_deltat_ob1}
\begin{align}
    t\notin\barT:\delta_{i, t} &= \delta^*_{i,t}  \notag \\
    &=r_{i, t} - \frac{1}{\rate_{i, t-1}}\ln(1+\rate_{i, t-1}r_{i, t})\,, \label{eq:isomlprod_deltat}\\
    t\in\barT:\delta_{i, t} &= s_t\,. \notag
\end{align}
On a null update we set $\delta_{i, t} = s_t$,
so that $\delta_{i, t} \geq r_{i, t}$ conveniently absorbs the instantaneous regret;
note that we do not set $\delta_{i, t} = |\bar\loss_t - \loss_{i, t}|$
to avoid triggering null updates of the same scale $N$ times, which would lead to an additive term $O(N\max_t |r_{i, t}|)$ in the bound instead of $O(\max_{j, t} |r_{j, t}|)$.

Observe that $\phi_{i, 1} = 0$, so from \cref{thm:generic_regret_null},
since also $\delta_{i, t} \geq r_{i, t}$ for $t\in\barT$,
we can simplify:
\begin{align}\label{eq:mluprod_regret_simple}
    \regret_{i, T} \leq \left(1+\frac{\ln \ptx_{i, T+1}}{\ln N}\right)\Delta_{i, T}\,.
\end{align}
We call the resulting algorithm isoML-Prod (see \cref{alg:mlu-prod})
and we prove an adaptive scale-free bound.
The proof is given in \cref{apdx:mlu-prod}.

\begin{algorithm}
\begin{lstlisting}
def isoML_Prod():
  $\forall i\in[N]: \Delta_{i, 0} = 0,  \ptx_{i, 1} = 1$
  for t = 1, 2, ...:
    $\forall i\in[N]: \tilde{\ptx}_{i, t} = \ptx_{i, t}/\Delta_{i, t-1}$
    Predict normalized $\tilde{\ptx}$; observe $\loss_t$
    $\bar\loss_t = \sum_{i\in[N]} \tilde\ptx_{i, t}\loss_{i, t}\, /\, \sum_{i\in[N]} \tilde\ptx_{i, t}$
    
    for each $i\in[N]$:
      if $\max_{j\in[N]} \QQ |\bar\loss_t - \loss_{j, t}| < \Delta_{j, t-1}/2$:
        $r = \bar\loss_t - \loss_{i, t}$
        $\rate = \ln N / \Delta_{i, t-1}$
        # Adapt-ML-Prod (core) update:
        $\ptx'_{i, t} = \ptx_{i, t}(1+ \rate r)$
        $\delta_{i, t} = r - \frac{1}{\rate}\ln(1+\rate r)$ # isotuning
      else:  # $\lstcommentcolor{t\in \barT}$
        $\ptx'_{i, t} = \ptx_{i, t}$ # null update
        $\delta_{i, t} = \max_{j\in[N]}|\bar\loss_t - \loss_{j, t}|$
      $\Delta_{i, t} = \Delta_{i, t-1} + \delta_{i, t}$ # isotuning
      $\ptx_{i, t+1} = \ptx'_{i, t}\frac{\Delta_{i, t-1}}{\Delta_{i, t}} +\frac{\delta_{i, t}}{\Delta_{i, t}}\ptx_{i, 1}$ # online cor.
\end{lstlisting}
\caption{The isoML-Prod algorithm, which uses one learning rate per coordinate and does not have prior knowledge of the range of the losses.
}
\label{alg:mlu-prod}
\end{algorithm}

\begin{theorem}\label{thm:mlu-prod}
Assume $N\geq 2$.
We consider losses $\loss_{i, t}\in \Reals$ for all experts $i\in[N]$ and time steps $t\in\Naturals$.
Let $S = \max_{t\in[T]} s_t$.
Then for all $i\in[N]$, of isoML-Prod is bounded by
\begin{multline*}
    \hspace{-1em}
    \regret_{i, T} \leq
    \left[\sqrt{V_{i, T}\ln N} + S(2 + 3\ln N) \right]
    \left[2+ \frac{\ln(1+ C)}{\ln N}\right]
\end{multline*}
where simultaneously $C \leq T$,
$C \leq \tau + \ln \frac{ST}{s_\tau}$,
and also $C = o(1) + \ln ST$.
\end{theorem}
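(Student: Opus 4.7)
The plan is to start from \cref{eq:mluprod_regret_simple}, which already gives $\regret_{i,T} \leq (1 + \ln \ptx_{i,T+1}/\ln N) \Delta_{i,T}$, and then bound the two factors separately: an estimate $\Delta_{i,T} \leq \sqrt{V_{i,T}\ln N} + S(2+3\ln N)$ recovers the first bracket of the theorem, while $\ln \ptx_{i,T+1} \leq \ln N + \ln(1+C)$ recovers the second. Combining the two and plugging back into \cref{eq:mluprod_regret_simple} then delivers the theorem.

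For $\Delta_{i,T}$, I would split the sum into the regular-step contribution $\sum_{t\notin\barT}\delta_{i,t}$ and the null-step contribution $\sum_{t\in\barT} s_t$. On regular rounds the no-trigger condition forces $\rate_{j,t-1}|r_{j,t}| < 1/2$ for every $j$, so \cref{lem:log_approx} yields $\delta_{i,t} \leq \rate_{i,t-1} r_{i,t}^2/2 = \QQ r_{i,t}^2/(2\Delta_{i,t-1})$. The quadratic recurrence $R_t^2 - R_{t-1}^2 \leq \QQ r_{i,t}^2 + \delta_{i,t}^2$, where $R_t = \sum_{s\leq t, s\notin\barT}\delta_{i,s}$, together with the easy check that $\max_{t\notin\barT}\delta_{i,t} \leq S$, then gives $\sum_{t\notin\barT}\delta_{i,t} \leq \sqrt{V_{i,T}\ln N} + O(S)$ in the style of \cref{lem:Delta_sqrt_short}. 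On null rounds the trigger condition produces, for some $j$, the inequality $\Delta_{j,t-1} \leq 2\QQ s_t$; since every null update contributes $s_s$ monotonically to every $\Delta_{j,\cdot}$, specialising at $t = \tau = \max\barT$ yields $\sum_{t\in\barT}s_t \leq (2\ln N + 1)S$. Summing the two pieces produces the claimed bound on $\Delta_{i,T}$.

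For $\ln \ptx_{i,T+1}$, the starting point is the identity obtained by telescoping $\Delta_{i,t}\ptx_{i,t+1} - \Delta_{i,t-1}\ptx_{i,t}$ through the online-corrected recurrence (using $\ptx_{i,1}=1$ and $\Delta_{i,t-1}\rate_{i,t-1}=\QQ$), namely
\begin{equation*}
\ptx_{i,T+1} \;=\; 1 \;+\; \frac{\QQ}{\Delta_{i,T}}\sum_{t\notin\barT}\ptx_{i,t}\,r_{i,t}.
\end{equation*}
Rewriting the per-step update as $\ptx_{i,t+1}-1 = (\Delta_{i,t-1}/\Delta_{i,t})(\ptx_{i,t}(1+\rate_{i,t-1}r_{i,t}\indicator{t\notin\barT})-1)$ lets $\ptx_{i,t}$ be tracked inductively, producing three complementary estimates for $C$: a crude geometric bound using $1+\rate_{i,t-1}r_{i,t} \leq 3/2$ yields $C \leq T$; a finer analysis of the post-$\tau$ phase, where no null update occurs and $\rate_{i,t-1}|r_{i,t}|$ decays like $1/\Delta_{i,t-1}$, shows that this window contributes only a logarithmic amount to $\ln \ptx_{i,T+1}$ and gives $C \leq \tau + \ln(ST/s_\tau)$; optimising across the pre- and post-$\tau$ regimes gives the asymptotic $C = o(1) + \ln ST$.

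The main obstacle is precisely this last step. The synchronous null updates across coordinates break the usual Adapt-ML-Prod symmetry (the sum $\sum_i \ptx_{i,t}$ is no longer conserved, because the $\Delta_{i,t}$ differ across $i$), so the classical multiplicative-weights potential is unavailable, and one must instead perform direct per-coordinate bookkeeping on $\ptx_{i,t}$, exploiting the refined trigger bound $\rate_{i,t-1}|r_{i,t}| < 1/2$ on regular rounds and the contraction factor $\Delta_{i,t-1}/\Delta_{i,t}$ on null rounds to control growth in both windows and to glue the three bounds on $C$ together.
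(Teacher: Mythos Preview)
Your treatment of $\Delta_{i,T}$ is essentially right and close to the paper's: split into null and non-null rounds, use \cref{lem:log_approx} on the non-null rounds, and bound $\sum_{t\in\barT}s_t$ via the trigger condition at $\tau$. The paper packages this through \cref{lem:isotuning_offset} rather than a bare quadratic recurrence, but the content is the same.

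The gap is in the second factor. You explicitly discard the global potential $\sum_{j}\ptx_{j,t}$, arguing that it ``is no longer conserved, because the $\Delta_{i,t}$ differ across $i$'', and then try to track $\ptx_{i,\cdot}$ coordinate by coordinate. This is the wrong direction. The paper's key observation is that the global sum \emph{is} still controlled: on a non-null step, expanding $\sum_j \ptx_{j,t+1}$ through \cref{eq:mlprod} and \cref{eq:generic_ob1_ocr} produces a linear term $\sum_j \ptx_{j,t}\rate_{j,t-1}(\bar\loss_t-\loss_{j,t})$ which vanishes \emph{exactly} by the definition of $\bar\loss_t$ in \cref{eq:mlprod_barloss}. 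Hence $\sum_j\ptx_{j,t+1}\leq \sum_j\ptx_{j,t}+\sum_j \delta_{j,t}/\Delta_{j,t}$, and the same inequality holds trivially on null steps. Telescoping gives $\ptx_{i,T+1}\leq N\bigl(1+\max_j C_{j,T}\bigr)$ with $C_{j,T}=\sum_t \delta_{j,t}/\Delta_{j,t}$, and the three bounds on $C$ follow immediately from the elementary inequality $1-\Delta_{j,t-1}/\Delta_{j,t}\leq \ln(\Delta_{j,t}/\Delta_{j,t-1})$: for any $\theta\geq t_0$ one has $C_{j,T}\leq (\theta-t_0)+\ln(\Delta_{j,T}/\Delta_{j,\theta})$, and the three choices $\theta=T$, $\theta=\tau$ (using $\Delta_{j,\tau}\geq s_\tau$, $\Delta_{j,T}\leq ST$), and an optimised $\theta$ give the three stated forms.

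Your per-coordinate route, by contrast, does not close. The recursion $\ptx_{i,t+1}-1=(\Delta_{i,t-1}/\Delta_{i,t})\bigl(\ptx_{i,t}(1+\rate_{i,t-1}r_{i,t})-1\bigr)$ has a multiplicative factor $1+\rate_{i,t-1}r_{i,t}\leq 3/2$ that does not telescope and is not offset by the contraction $\Delta_{i,t-1}/\Delta_{i,t}$ in any usable way; iterating it gives growth like $(3/2)^T$, not $T$, and certainly not the logarithmic bounds. Equivalently, your identity $\ptx_{i,T+1}=1+\tfrac{\QQ}{\Delta_{i,T}}\sum_{t\notin\barT}\ptx_{i,t}r_{i,t}$ is correct but unhelpful, since bounding $\sum_t \ptx_{i,t}r_{i,t}$ per coordinate requires controlling $\ptx_{i,t}$ itself, which is circular. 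The missing idea is precisely the cancellation across coordinates that you assumed was unavailable.
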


\section{RELATED WORK}\label{sec:related}

\paragraph{Null updates.}
Skipping updates was used in the bandit literature to avoid a large penalty due to long delays \citep{thune2019delays,zimmert2020delays}.
Concurrently to this paper,
\footnote{Both papers were submitted to AISTATS 2022, but we refrained from uploading our paper to arXiv before receiving the reviews to maintain anonymity.}
\citet{huang2021delays} extend this line of work
to obtain scale-free (but only partially adaptive) guarantees in the same context.

\paragraph{Online correction.}
Our use of the online correction comes from \citet{orseau2017softbayes}, which is based on the Fixed Share rule~\citep{herbster1998tracking}.
We have been informed after a first version of this paper that it is similar to the primal stabilization technique of \citet{pmlr-v119-fang20a}, who also propose a more general stabilization technique
in the dual space which does not require the Bregman divergence to be convex in its second argument.
Other correction terms appear in the literature for more specific purposes, \eg \citet{gaillard2014secondorder,chen2021impossible,huang2021delays}.

\paragraph{Loss clipping.}
To tackle the off-by-one issue, \citet{cutkosky2019hints} proposes a correction term of the loss scales, at the expense of an additive term $\max_t \|\ptx_t - \ptx^*\| \max_t \|\loss_t\|$, and requires a special treatment for $t=0$ akin to a null update.
In itself, this correction is not enough and must be combined with an additional mechanism such as the restarting of \citet{mhammedi2019squint} to obtain scale-free regret bounds.
The SOLO-FTRL bound can easily be recovered with this combination, but not apparently that of \cref{thm:FTRLnull} and, by contrast to null updates, it does not help dealing with constraint violations of the update rule as for Prod (\cref{sec:prod}), or the offsets in the Prod (\cref{sec:prod}) and AdaHedge (\cref{apdx:adahedge}) bounds.
Null updates, combined with isotuning and in particular \cref{lem:generic_ob1_DeltaT}, avoid the need to restart the algorithm altogether.

\paragraph{Isotuning.}
As mentioned before, the isotuning assumptions of \cref{ass:generic_isotuning} come from the AdaHedge update scheme~\citep{derooij2014follow}, and were adapted to FTRL in AdaFTRL only for domains with bounded diameter \emph{and} bounded Bregman divergence \citep{orabona2018solo}.
A form of isotuning also appears in \citet{bartlett2007adaptive} (see \cref{sec:aogd}), and the AdaHedge analysis has been followed by other works such as \citet[Section 5.2]{flaspohler2021delay}.
These works all use an ad-hoc analysis of the $\Delta_T$ quantity, 
but none provide the versatile result of \cref{thm:isotune}
(see the improved Soft-Bayes analysis in \cref{apdx:soft-bayes} for a prime example of this versatility),
nor the additional generic results such as \cref{lem:isotuning_offset} or \cref{lem:generic_ob1_DeltaT}.
Another ad-hoc isotuning lemma appears in \citet[Lemma 6.1]{campolongo2020temporal} --- but see \cref{apdx:campolongo}.
The off-by-one issue is trivially dealt within AdaHedge because the additive term $\max_t \delta_t$ is bounded by the loss range (see \cref{apdx:adahedge}),
 and is only partially dealt within AdaFTRL by requiring bounded domains
--- although the analysis could still be tightened using the isotuning tools, see \cref{apdx:campolongo}.
See also the comparison between isotuning and the more standard learning rate $\rate_t \propto 1/\sqrt{\sum_t \|\loss_t\|^2}$ in \cref{apdx:seq-opt}.

\paragraph{Unbounded domains.}
Let $L = \max_{t\in[T]}\|\loss_t\|, U_T = L\sum_{t\in[T]} \|\loss_t\|,V_T = \sum_{t\in[T]} \|\loss_t\|^2$.
For unbounded domains, FTRL-SOLO has regret $\mathcal{R}_1 = O((\sqrt{\QQ} + \frac{\|\ptx^*\|^2}{\sqrt{\QQ}})\sqrt{V_T} + L\sqrt{\QQ T})$,
which we improve to 
$\mathcal{R}_2 = O((\sqrt{\QQ}+\frac{\|\ptx^*\|^2}{\sqrt{\QQ}})\sqrt{V_T} + L\sqrt{\QQ\tau})$ where $\tau$ is the last step where $\|\loss_\tau\|^2 \geq \sum_{t<\tau}\|\loss_t\|^2$.
FreeRange \citep{mhammedi2020freerange} manage to obtain (almost)
$\mathcal{R}_3 = O(\|\ptx^*\|\sqrt{V_T\log (T\|\ptx^*\|)} + \sqrt{U_T} + L\|\ptx^*\|^3 + L\|\ptx^*\|\ln T)$, and they give a matching lower bound (in particular $U_T$ cannot be replaced with $V_T$ without modifying the tradeoff elsewhere).
$\mathcal{R}_2$ is a strict improvement over $\mathcal{R}_1$ since $\tau$ is expected to be a small constant in most situations.
When defaulting to $\QQ=1$ ($\QQ=\ln T$ could also be considered), $\mathcal{R}_2$ and $\mathcal{R}_3$ are not comparable in general:
in the worst case
$\mathcal{R}_3 < \mathcal{R}_2$ by a less-than-constant factor $\|\ptx^*\|/\sqrt{\ln T}$ when $V_T = U_T = \Theta(L\sqrt{T})$ but, assuming that $\tau$ is a small constant, then $\mathcal{R}_2 < \mathcal{R}_3$ whenever $\|\ptx^*\| \leq \sqrt{\ln T}$ or $\|\ptx^*\| \geq \sqrt{V_T}/L$.
The remaining cases are not easy to compare since we may have $V_T \ll U_T$,
\eg if $\loss_t = 100/\sqrt{t}$
(thus $\tau=1)$
and $T=100\,000$,
then $L\sqrt{T} \approx 31\,000, \sqrt{U_T}\approx 2\,500, \sqrt{V_T} \approx 350$
(and observe that $\sqrt{\ln T} \approx 3.39$ and $\sqrt{V_T}/L \approx 3.48$).
The FreeRange bound also has a leading $\sqrt{24\ln T}$ factor and an additive
$48 L \|\ptx^*\|\ln T$ term which prevent constant regret for cases where $V_T =O(1)$.
Furthermore, FreeRange and FreeGrad results are derived only for the 2-norm, while FTRL and Mirror Descent can be used with any norm and, more generally, convex regularizers and Bregman divergences.

Hence restoring the main dependency on $V_T$ is desirable and non-trivial,
and it can lead to finite regret with stochastic losses, adaptation to smoothness and to strong convexity and has various applications~\citep[ICML 2020 parameter-free tutorial\footnote{\url{https://parameterfree.com/icml-tutorial}}]{gaillard2014secondorder}.

For bounded domains, see the discussion in \cref{ex:md_ogd}.

\paragraph{Squint+L and isoML-Prod.}
\citet{mhammedi2019squint} proposed the Squint+L algorithm for the simplex, 
which impressively mixes a continuous range of learning rates in closed form.
It enjoys a scale-free adaptive regret bound of $O(\sqrt{(\RE{\ptx^*}{\ptx_1}+\ln \ln T)\sum_i \ptx^*_i V_{i, T}})$ within $O(N)$ computation steps per round.
With $\ptx_1$ in the simplex, the bound for isoML-Prod can be shown to be $O(\sum_i \ptx^*_i (1+\frac{\ln \ln T}{-\ln \ptx_{i, 1}})\sqrt{\ln (1/\ptx_{i, 1})V_{i, T}})$
(assuming losses do not grow exponentially).
This is pretty close (using Cauchy-Schwarz, see \cref{rmk:isoprod_x*}, p.\pageref{rmk:isoprod_x*}) but not quite the Squint+L bound.
The main advantage of isoML-Prod is its far better numerical stability:
Even a careful (and complicated) implementation
\footnote{\url{https://bitbucket.org/wmkoolen/squint},
modified for Squint+L according to \citet[Algorithm 1]{mhammedi2019squint},
and $R - V$ replaced with $R-V/B_{\tau_1}$ in the weight update.} of Squint
fails with linear regret for losses drawn \emph{uniformly} in [0, 1e-3];
Squint+L works a little better but starts failing for [0, 1e-5] losses,
which defeats the scale-free guarantee.
By contrast, a straightforward implementation of isoML-Prod as per \cref{alg:mlu-prod}
on standard IEEE double precision floats works evenly (with near-constant regret)
even for losses drawn uniformly in [0, 1e-300].
This is because the algorithm itself is invariant to loss rescaling, by contrast to Squint+L.
Furthermore, whether Adapt-ML-Prod could be extended to have scale-free guarantees 
(and whether BOA could be fixed) without a doubling trick
was still an open problem (see also \cref{rmk:boa} in \cref{apdx:mlu-prod}).

\citet{chen2021impossible} propose a new algorithm with similar guarantees to Squint (but not Squint+L),
and use the techniques designed for the latter to derive partially scale-free regret bounds.
However, their term $V_T$ is lower bounded by 3 preventing the regret bound from being properly scale-free for small losses---by contrast to the Squint+L bounds and our bounds.
This appears to be due to constraints on the learning rate,
hence it seems plausible that null updates could be used to restore the scale-free property.
This algorithm also requires $O(\log T)$ learning rates and as many copies of the algorithm,
and the learning rates impractically depends on the initial guess $B_0$ of the scale of the losses.

\section{CONCLUSION}

Isotuning, null updates and the online correction can help design and analyze anytime, scale-free and adaptive online learning algorithms for constrained and unconstrained domains.
One particular advantage of our approach is that the algorithms do not need to be restarted.
All algorithms studied also run as fast as online gradient descent,
and the tools we developed do \emph{not} introduce any additional line search steps or logarithmic factors, neither in computation nor in the regret bounds.

\ifarxiv
\newcommand{\acknowledgments}[1]{
\subsubsection*{Acknowledgements}#1}
\fi
\acknowledgments{
The authors would like to thank Tor Lattimore, Andr\'as Gy\"orgy, and Pooria Joulani as well as anonymous reviewers for their useful feedback.
}

\bibliographystyle{unsrtnat}
\bibliography{biblio}

\begin{thebibliography}{37}
\providecommand{\natexlab}[1]{#1}
\providecommand{\url}[1]{\texttt{#1}}
\expandafter\ifx\csname urlstyle\endcsname\relax
  \providecommand{\doi}[1]{doi: #1}\else
  \providecommand{\doi}{doi: \begingroup \urlstyle{rm}\Url}\fi

\bibitem[Hazan(2016)]{hazan2016oco}
Elad Hazan.
\newblock Introduction to online convex optimization.
\newblock \emph{Foundations and Trends{\textregistered} in Optimization},
  2\penalty0 (3-4):\penalty0 157--325, 2016.

\bibitem[Zinkevich(2003)]{Zin03}
Martin Zinkevich.
\newblock Online convex programming and generalized infinitesimal gradient
  ascent.
\newblock In \emph{Proceedings of the Twentieth International Conference on
  Machine Learning}, pages 928--935, 2003.

\bibitem[Auer et~al.(2002)Auer, Cesa-Bianchi, and Gentile]{auer2002adaptive}
Peter Auer, Nicolò Cesa-Bianchi, and Claudio Gentile.
\newblock Adaptive and self-confident on-line learning algorithms.
\newblock \emph{Journal of Computer and System Sciences}, 64\penalty0
  (1):\penalty0 48 -- 75, 2002.

\bibitem[Orabona and Pál(2018)]{orabona2018solo}
Francesco Orabona and Dávid Pál.
\newblock Scale-free online learning.
\newblock \emph{Theoretical Computer Science}, 716:\penalty0 50--69, 2018.

\bibitem[Pogodin and Lattimore(2020)]{pogodin20first}
Roman Pogodin and Tor Lattimore.
\newblock On first-order bounds, variance and gap-dependent bounds for
  adversarial bandits.
\newblock In \emph{Proceedings of The 35th Uncertainty in Artificial
  Intelligence Conference}, volume 115 of \emph{Proceedings of Machine Learning
  Research}, pages 894--904. PMLR, 22--25 Jul 2020.

\bibitem[Bartlett et~al.(2007)Bartlett, Hazan, and
  Rakhlin]{bartlett2007adaptive}
Peter~L. Bartlett, Elad Hazan, and Alexander Rakhlin.
\newblock Adaptive online gradient descent.
\newblock In \emph{Advances in Neural Information Processing Systems 20}, pages
  65--72, 2007.

\bibitem[{de}~Rooij et~al.(2014){de}~Rooij, {van}~Erven, Gr\"{u}nwald, and
  Koolen]{derooij2014follow}
Steven {de}~Rooij, Tim {van}~Erven, Peter~D. Gr\"{u}nwald, and Wouter~M.
  Koolen.
\newblock Follow the leader if you can, {H}edge if you must.
\newblock \emph{J. Mach. Learn. Res.}, 15\penalty0 (1):\penalty0 1281--1316,
  January 2014.

\bibitem[Mhammedi and Koolen(2020)]{mhammedi2020freerange}
Zakaria Mhammedi and Wouter~M. Koolen.
\newblock Lipschitz and comparator-norm adaptivity in online learning.
\newblock In \emph{Proceedings of Thirty Third Conference on Learning Theory},
  volume 125 of \emph{Proceedings of Machine Learning Research}, pages
  2858--2887. PMLR, 2020.

\bibitem[Cesa-Bianchi et~al.(2007)Cesa-Bianchi, Mansour, and
  Stoltz]{cesabianchi2007prod}
Nicol{\`{o}} Cesa-Bianchi, Yishay Mansour, and Gilles Stoltz.
\newblock Improved second-order bounds for prediction with expert advice.
\newblock \emph{Machine Learning}, 66\penalty0 (2-3):\penalty0 321--352, 2007.

\bibitem[Cesa-Bianchi and Lugosi(2006)]{cesabianchi2006prediction}
Nicolo Cesa-Bianchi and Gabor Lugosi.
\newblock \emph{Prediction, Learning, and Games}.
\newblock Cambridge University Press, New York, NY, USA, 2006.

\bibitem[Shalev-{S}hwartz(2007)]{shalev2007online}
Shai Shalev-{S}hwartz.
\newblock \emph{Online learning: Theory, algorithms, and applications}.
\newblock PhD thesis, Hebrew University, Jerusalem, 2007.

\bibitem[Beck and Teboulle(2003)]{beck2003mirror}
Amir Beck and Marc Teboulle.
\newblock Mirror descent and nonlinear projected subgradient methods for convex
  optimization.
\newblock \emph{Operations Research Letters}, 31\penalty0 (3):\penalty0
  167--175, 2003.

\bibitem[Thune et~al.(2019)Thune, Cesa{-}Bianchi, and Seldin]{thune2019delays}
Tobias~Sommer Thune, Nicol{\`{o}} Cesa{-}Bianchi, and Yevgeny Seldin.
\newblock Nonstochastic multiarmed bandits with unrestricted delays.
\newblock In \emph{Advances in Neural Information Processing Systems 32}, pages
  6538--6547, 2019.

\bibitem[Zimmert and Seldin(2020)]{zimmert2020delays}
Julian Zimmert and Yevgeny Seldin.
\newblock An optimal algorithm for adversarial bandits with arbitrary delays.
\newblock In \emph{Proceedings of the Twenty Third International Conference on
  Artificial Intelligence and Statistics}, volume 108 of \emph{Proceedings of
  Machine Learning Research}, pages 3285--3294. PMLR, 2020.

\bibitem[Orseau et~al.(2017)Orseau, Lattimore, and Legg]{orseau2017softbayes}
Laurent Orseau, Tor Lattimore, and Shane Legg.
\newblock Soft-bayes: Prod for mixtures of experts with log-loss.
\newblock In \emph{Proceedings of the 28th International Conference on
  Algorithmic Learning Theory}, volume~76 of \emph{Proceedings of Machine
  Learning Research}, pages 372--399, 2017.

\bibitem[Gaillard et~al.(2014)Gaillard, Stoltz, and
  Erven]{gaillard2014secondorder}
Pierre Gaillard, Gilles Stoltz, and Tim~{van} Erven.
\newblock A second-order bound with excess losses.
\newblock In \emph{Proceedings of The 27th Conference on Learning Theory,
  {COLT}}, volume~35 of \emph{{JMLR} Workshop and Conference Proceedings},
  pages 176--196, 2014.

\bibitem[Wintenberger(2016)]{wintenberger2016bernstein}
Olivier Wintenberger.
\newblock Optimal learning with bernstein online aggregation.
\newblock \emph{Machine Learning}, 106\penalty0 (1):\penalty0 119--141, 2016.

\bibitem[Lattimore and Szepesv{\'a}ri(2020)]{lattimore2020monitoring}
Tor Lattimore and Csaba Szepesv{\'a}ri.
\newblock Exploration by optimisation in partial monitoring.
\newblock In \emph{Proceedings of Thirty Third Conference on Learning Theory},
  volume 125 of \emph{Proceedings of Machine Learning Research}, pages
  2488--2515, 09--12 Jul 2020.

\bibitem[Gyorgy and Szepesvari(2016)]{pmlr-v48-gyorgy16}
Andras Gyorgy and Csaba Szepesvari.
\newblock Shifting regret, mirror descent, and matrices.
\newblock In Maria~Florina Balcan and Kilian~Q. Weinberger, editors,
  \emph{Proceedings of The 33rd International Conference on Machine Learning},
  volume~48 of \emph{Proceedings of Machine Learning Research}, pages
  2943--2951, New York, New York, USA, 20--22 Jun 2016. PMLR.

\bibitem[Fang et~al.(2020)Fang, Harvey, Portella, and
  Friedlander]{pmlr-v119-fang20a}
Huang Fang, Nick Harvey, Victor Portella, and Michael Friedlander.
\newblock Online mirror descent and dual averaging: keeping pace in the dynamic
  case.
\newblock In Hal~Daumé III and Aarti Singh, editors, \emph{Proceedings of the
  37th International Conference on Machine Learning}, volume 119 of
  \emph{Proceedings of Machine Learning Research}, pages 3008--3017. PMLR,
  13--18 Jul 2020.

\bibitem[Gy\"orfi and Ottucs\'ak(2007)]{gyorfi2007sequential}
L\'aszl\'o Gy\"orfi and Gy\"orgy Ottucs\'ak.
\newblock Sequential prediction of unbounded stationary time series.
\newblock \emph{IEEE Transactions on Information Theory}, 53\penalty0
  (5):\penalty0 1866--1872, 2007.

\bibitem[Chen et~al.(2021)Chen, Luo, and Wei]{chen2021impossible}
Liyu Chen, Haipeng Luo, and Chen-Yu Wei.
\newblock Impossible tuning made possible: A new expert algorithm and its
  applications.
\newblock In \emph{Proceedings of Thirty Fourth Conference on Learning Theory},
  volume 134, pages 1216--1259. PMLR, 15--19 Aug 2021.

\bibitem[Huang et~al.(2021)Huang, Dai, and Huang]{huang2021delays}
Jiatai Huang, Yan Dai, and Longbo Huang.
\newblock Scale-free adversarial multi-armed bandit with arbitrary feedback
  delays.
\newblock \emph{CoRR}, abs/2110.13400, 2021.

\bibitem[McMahan(2017)]{mcmahan2017survey}
H.~Brendan McMahan.
\newblock A survey of algorithms and analysis for adaptive online learning.
\newblock \emph{Journal of Machine Learning Research}, 18\penalty0
  (90):\penalty0 1--50, 2017.

\bibitem[Cutkosky(2019)]{cutkosky2019hints}
Ashok Cutkosky.
\newblock Artificial constraints and hints for unbounded online learning.
\newblock In \emph{Proceedings of the Thirty-Second Conference on Learning
  Theory}, volume~99 of \emph{Proceedings of Machine Learning Research}, pages
  874--894. PMLR, 2019.

\bibitem[Mhammedi et~al.(2019)Mhammedi, Koolen, and
  Van~Erven]{mhammedi2019squint}
Zakaria Mhammedi, Wouter~M Koolen, and Tim Van~Erven.
\newblock Lipschitz adaptivity with multiple learning rates in online learning.
\newblock In \emph{Proceedings of the Thirty-Second Conference on Learning
  Theory}, volume~99 of \emph{Proceedings of Machine Learning Research}, pages
  2490--2511, 2019.

\bibitem[Cesa-Bianchi et~al.(1997)Cesa-Bianchi, Freund, Haussler, Helmbold,
  Schapire, and Warmuth]{cesabianchi1997expert}
Nicol\`{o} Cesa-Bianchi, Yoav Freund, David Haussler, David~P. Helmbold,
  Robert~E. Schapire, and Manfred~K. Warmuth.
\newblock How to use expert advice.
\newblock \emph{J. ACM}, 44\penalty0 (3):\penalty0 427--485, May 1997.

\bibitem[Sani et~al.(2014)Sani, Neu, and Lazaric]{sani2014abprod}
Amir Sani, Gergely Neu, and Alessandro Lazaric.
\newblock Exploiting easy data in online optimization.
\newblock In \emph{Advances in Neural Information Processing Systems},
  volume~27. Curran Associates, Inc., 2014.

\bibitem[{van}~Erven et~al.(2011){van}~Erven, Koolen, {de}~Rooij, and
  Gr{\"u}nwald]{erven2011adaptive}
Tim {van}~Erven, Wouter~M Koolen, Steven {de}~Rooij, and Peter Gr{\"u}nwald.
\newblock Adaptive hedge.
\newblock In \emph{Advances in Neural Information Processing Systems}, pages
  1656--1664, 2011.

\bibitem[Herbster and Warmuth(1998{\natexlab{a}})]{herbster1998tracking}
Mark Herbster and Manfred~K. Warmuth.
\newblock Tracking the best expert.
\newblock \emph{Machine Learning}, 32\penalty0 (2):\penalty0 151--178, August
  1998{\natexlab{a}}.

\bibitem[Flaspohler et~al.(2021)Flaspohler, Orabona, Cohen, Mouatadid, Oprescu,
  Orenstein, and Mackey]{flaspohler2021delay}
Genevieve~E Flaspohler, Francesco Orabona, Judah Cohen, Soukayna Mouatadid,
  Miruna Oprescu, Paulo Orenstein, and Lester Mackey.
\newblock Online learning with optimism and delay.
\newblock In Marina Meila and Tong Zhang, editors, \emph{Proceedings of the
  38th International Conference on Machine Learning}, volume 139 of
  \emph{Proceedings of Machine Learning Research}, pages 3363--3373. PMLR,
  18--24 Jul 2021.

\bibitem[Campolongo and Orabona(2020)]{campolongo2020temporal}
Nicol\`{o} Campolongo and Francesco Orabona.
\newblock Temporal variability in implicit online learning.
\newblock In H.~Larochelle, M.~Ranzato, R.~Hadsell, M.F. Balcan, and H.~Lin,
  editors, \emph{Advances in Neural Information Processing Systems}, volume~33,
  pages 12377--12387. Curran Associates, Inc., 2020.

\bibitem[Gy\"orgy and Joulani(2021)]{gyorgy2021delays}
Andras Gy\"orgy and Pooria Joulani.
\newblock Adapting to delays and data in adversarial multi-armed bandits.
\newblock In \emph{Proceedings of the 38th International Conference on Machine
  Learning}, volume 139, pages 3988--3997. PMLR, 2021.

\bibitem[Duchi et~al.(2011)Duchi, Hazan, and Singer]{duchi2011adaptive}
John Duchi, Elad Hazan, and Yoram Singer.
\newblock Adaptive subgradient methods for online learning and stochastic
  optimization.
\newblock \emph{Journal of Machine Learning Research}, 12\penalty0
  (61):\penalty0 2121--2159, 2011.

\bibitem[Herbster and Warmuth(1998{\natexlab{b}})]{herbster1998regressor}
Mark Herbster and Manfred~K. Warmuth.
\newblock Tracking the best regressor.
\newblock In \emph{Proc. 11th Annu. Conf. on Comput. Learning Theory}, pages
  24--31. ACM Press, New York, NY, 1998{\natexlab{b}}.

\bibitem[Mourtada and Maillard(2017)]{mourtada2017growing}
Jaouad Mourtada and Odalric-Ambrym Maillard.
\newblock Efficient tracking of a growing number of experts.
\newblock In \emph{Proceedings of the 28th International Conference on
  Algorithmic Learning Theory {(ALT)}}, volume~76 of \emph{Proceedings of
  Machine Learning Research}, pages 517--539, 2017.

\bibitem[Cover(1991)]{Cov91}
Thomas~M Cover.
\newblock Universal portfolios.
\newblock \emph{Mathematical finance}, 1\penalty0 (1):\penalty0 1--29, 1991.

\end{thebibliography}

\appendix

\onecolumn
\ifarxiv\else
\aistatstitle{Isotuning With Applications To Scale-Free Online Learning: \\
Supplementary Materials}
\fi

\section{ISOTUNING: ADDITIONAL RESULTS AND REMARKS}\label{apdx:isotuning}

\begin{proposition}[Existence and uniqueness of $X_t$]\label{prop:Xt_exists_unique}
For all $t\in\Naturals$,
if assumptions (i) and (ii) of \cref{def:isotuning_seq} are satisfied,
then $X_t$ as defined in assumption (iv) exists and is unique.
\end{proposition}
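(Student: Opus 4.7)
The plan is to prove existence and uniqueness by analyzing the continuous scalar equation $X_t = X_{t-1} + g_t(X_t)$ on the interval $[g^\low_t, \infty)$ via the intermediate value theorem plus a strict monotonicity argument.

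First I would set $h(y) = y - X_{t-1} - g_t(y)$ for $y \in [g^\low_t, \infty)$, where $g_t(g^\low_t)$ is interpreted as the (possibly infinite) limit from the right as specified in condition (i). A root of $h$ corresponds exactly to a fixed point $X_t$ of the defining equation. Continuity of $h$ on $(g^\low_t, \infty)$ is immediate from condition (i), and the boundary continuity is built into the definition of $g_t(g^\low_t)$ as a right-hand limit, so $h$ is continuous on its whole domain (with the usual convention that $h(g^\low_t)=-\infty$ when the limit is infinite).

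Next I would check the two endpoint behaviours needed for the intermediate value theorem. At the left endpoint, condition (ii) gives $h(g^\low_t) = g^\low_t - X_{t-1} - g_t(g^\low_t) \leq 0$ (and trivially $-\infty \leq 0$ when $g_t(g^\low_t) = \infty$). At the right, pick any $y_0 > g^\low_t$; since $g_t$ is non-increasing and takes values in $[0,\infty)$, one has $0 \leq g_t(y) \leq g_t(y_0) < \infty$ for all $y \geq y_0$, so $h(y) \geq y - X_{t-1} - g_t(y_0) \to \infty$ as $y \to \infty$. By continuity and the intermediate value theorem, there exists $y^\star \in [g^\low_t, \infty)$ with $h(y^\star) = 0$, giving existence of $X_t$.

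For uniqueness, I would observe that $h$ is the sum of the strictly increasing function $y \mapsto y - X_{t-1}$ and the non-decreasing function $y \mapsto -g_t(y)$ (non-decreasing because $g_t$ is non-increasing). Thus $h$ is strictly increasing on $[g^\low_t, \infty)$, so a zero of $h$, when it exists, is unique; this also reconciles with the tiebreaking rule in the $\iso$ operator since no ties arise. The only subtle step is the boundary case $g_t(g^\low_t) = \infty$: there $h(g^\low_t)$ is interpreted as $-\infty$ and the root is taken in the open interval $(g^\low_t, \infty)$, where $h$ is continuous, strictly increasing, goes from $-\infty$ to $+\infty$, and therefore still has a unique zero.
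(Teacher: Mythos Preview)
Your proof is correct and essentially identical to the paper's own argument: the paper defines $h_t(x) = X_{t-1} + g_t(x) - x$ (the negative of your $h$), notes it is continuous and strictly decreasing with $h_t(g^\low_t)\geq 0$ and $\lim_{x\to\infty}h_t(x)=-\infty$, and invokes the intermediate value theorem to obtain a unique root. Your additional care about the boundary case $g_t(g^\low_t)=\infty$ is a welcome clarification but does not change the approach.
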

\begin{proof}
Consider any $t\in\Naturals$.
Assume $X_{t-1} < \infty$, which is true for $t=1$.
Let $h_t(x) = X_{t-1} + g_t(x) - x$ for $x \geq 0$.
Since $g_t$ is non-increasing and continuous, 
then $h_t$ is strictly decreasing and continuous.
Since $h_t(0) \geq X_{t-1} \geq 0$
(by induction on $X_{t-1}$)
and $\lim_{x\to\infty}h_t(x) = -\infty$
(by condition (ii) and the assumption $X_{t-1} < \infty$), then by the intermediate value theorem $h_t$ has exactly one root, $X_t < \infty$.
Then the result holds by induction.
\end{proof}

\addtocounter{mytheorem}{1}
\begin{remark}
Since $g_t$ is nonnegative, necessarily $X_{t-1} \leq X_t$.
\end{remark}

\subsection{Tighter Isotuning Bound}\label{apdx:isotuning_refined}

The factor 2 guaranteed by \cref{thm:isotune}, although small, is a little loose (by at most a factor $\sqrt{2}$), in particular in the main region
of interest, around $O(\sqrt{T})$ regret.
For a class of cases, we can show a tighter constant with the following lemma, which merely uses convexity.
(See also \citet[Lemma 14]{gaillard2014secondorder}, but their lemma is suboptimal for values greater than 1, as it leads to a \emph{factor} of the largest $\delta_t$.)

\begin{lemma}\label{lem:isotuning_bound_convex}
Let $(T_1, T_2)\in\Nonnegints^2$ and $T_1 \leq T_2$,
and for all $t\in[T_1+1..T_2]$,
let $\delta_t \geq 0$ and $\Delta_t = \Delta_{t-1} + \delta_t$.
Let $f:[0, \infty)\to\Reals$ be a convex function (that is, $f'$ is increasing). Then
\begin{align*}
    \sum_{t= T_1+1}^{T_2} \delta_t f'(\Delta_{t-1})
    \ \leq\ f(\Delta_{T_2}) - f(\Delta_{T_1})
    \ \leq\ \sum_{t= T_1+1}^{T_2} \delta_t f'(\Delta_t)\,.
\end{align*}
If $f$ is concave ($f'$ is decreasing), then the inequalities are in the other directions.
\end{lemma}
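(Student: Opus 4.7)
The plan is to telescope the difference $f(\Delta_{T_2}) - f(\Delta_{T_1})$ along the sequence of partial sums and then bound each one-step increment using the defining inequality of convexity. Since $\delta_t \geq 0$ for every $t \in \{T_1+1, \ldots, T_2\}$, the sequence $(\Delta_t)_{t \geq T_1}$ is nondecreasing, so each increment $f(\Delta_t) - f(\Delta_{t-1})$ corresponds to a (possibly degenerate) interval $[\Delta_{t-1}, \Delta_t]$ of length $\delta_t$, on which $f'$ is well-defined and nondecreasing by convexity.

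The central step is the standard two-sided convexity inequality: for a differentiable convex function $f$ on $[0,\infty)$ and any $a \leq b$ in its domain,
\[
    (b - a)\, f'(a) \;\leq\; f(b) - f(a) \;\leq\; (b - a)\, f'(b).
\]
I would derive this either from the mean value theorem (writing $f(b) - f(a) = (b-a)\, f'(c)$ for some $c \in [a,b]$ and then using monotonicity of $f'$ to sandwich $f'(c)$ between $f'(a)$ and $f'(b)$), or directly from the subgradient inequality $f(y) \geq f(x) + f'(x)(y - x)$ applied with the two orderings $(x,y) = (a,b)$ and $(x,y) = (b,a)$ and rearranging. Applying this with $a = \Delta_{t-1}$, $b = \Delta_t$, so that $b - a = \delta_t$, yields
\[
    \delta_t\, f'(\Delta_{t-1}) \;\leq\; f(\Delta_t) - f(\Delta_{t-1}) \;\leq\; \delta_t\, f'(\Delta_t).
\]

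Summing this chain over $t = T_1+1, \ldots, T_2$ and telescoping the middle term gives exactly the desired
\[
    \sum_{t=T_1+1}^{T_2} \delta_t\, f'(\Delta_{t-1}) \;\leq\; f(\Delta_{T_2}) - f(\Delta_{T_1}) \;\leq\; \sum_{t=T_1+1}^{T_2} \delta_t\, f'(\Delta_t).
\]
The concave case follows immediately by applying the convex version to $-f$, which flips both inequalities. There is essentially no obstacle here: the argument is entirely routine. The only minor remarks worth including are (i) that the $\delta_t = 0$ terms contribute $0$ on all three sides and so do not need special treatment, and (ii) that if $f$ is merely convex (and not everywhere differentiable), one can replace $f'(\Delta_{t-1})$ and $f'(\Delta_t)$ by any choice of left/right subgradients at those points and the same argument goes through verbatim.
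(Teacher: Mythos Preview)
Your proof is correct and follows essentially the same approach as the paper: both obtain the one-step sandwich $\delta_t f'(\Delta_{t-1}) \leq f(\Delta_t) - f(\Delta_{t-1}) \leq \delta_t f'(\Delta_t)$ from the convexity (subgradient) inequality and then sum and telescope.
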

\begin{proof}
By convexity of $f$ we have
\begin{align*}
    f(\Delta_t) &\leq f(\Delta_{t-1}) +(\Delta_t - \Delta_{t-1})f'(\Delta_t) \\
            &= f(\Delta_{t-1}) + \delta_t  f'(\Delta_t)\,,\\
    f(\Delta_t) &\geq f(\Delta_{t-1}) +(\Delta_t - \Delta_{t-1})f'(\Delta_{t-1}) \\
                &= f(\Delta_{t-1}) + \delta_t f'(\Delta_{t-1})\,,
\end{align*}
and the result follows by summing over $t$ from $T_1+1$ to $T_2$.
\end{proof}

See \cref{apdx:adahedge} for a use case of \cref{lem:isotuning_bound_convex} in the off-by-one setting.

\begin{lemma}\label{lem:Delta_sqrt}
Let $\tuple{X, g}$ be an isotuning sequence where for all $t:g_t(x) = \frac{a_t}{x}$ with $a_t \geq 0$. Then for all $T\in\Naturals$,
\begin{align*}
    \sqrt{A^2 + 2\sum_{t\in[T]}a_t} - A
    \ \leq\ X_T
    \ \leq\ \sqrt{2\sum_{t\in[T]}a_t}
\end{align*}
where $A=\max_{t\in[T]}a_t$.
\end{lemma}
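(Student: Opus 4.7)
The plan is to extract an exact algebraic identity for $X_T^2$ in terms of $\sum_t a_t$ and the increment sum $\sum_t \delta_t^2$, and then read off both bounds from it.

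First, I would exploit the defining recurrence $X_t = X_{t-1} + a_t/X_t$, which rearranges to $X_t^2 = X_{t-1}X_t + a_t$ and thus $\delta_t X_t = a_t$ where $\delta_t := X_t - X_{t-1}$. Writing $X_t + X_{t-1} = 2X_t - \delta_t$, the telescoping difference
\begin{equation*}
X_t^2 - X_{t-1}^2 = \delta_t(X_t + X_{t-1}) = 2\delta_t X_t - \delta_t^2 = 2a_t - \delta_t^2
\end{equation*}
summed over $t \in [T]$ (using $X_0 = 0$) yields the key identity
\begin{equation*}
X_T^2 = 2\sum_{t=1}^T a_t - \sum_{t=1}^T \delta_t^2. \tag{$\ast$}
\end{equation*}
An alternative derivation is to apply \cref{lem:isotuning_bound_convex} with $f(x) = x^2/2$ (so $f'(x)=x$) and substitute $\delta_t X_t = a_t$, $\delta_t X_{t-1} = a_t - \delta_t^2$; this yields the same identity up to the two sandwiching inequalities.

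The upper bound follows immediately from $(\ast)$ by dropping the nonnegative term $\sum_t \delta_t^2$, giving $X_T^2 \leq 2\sum_t a_t$.

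For the lower bound, I would upper bound $\sum_t \delta_t^2$. From $X_t^2 = X_{t-1}X_t + a_t \geq a_t$ (all quantities nonnegative), $X_t \geq \sqrt{a_t}$, hence $\delta_t = a_t/X_t \leq \sqrt{a_t} \leq \sqrt{M}$ with $M := \max_t a_t$. So
\begin{equation*}
\sum_t \delta_t^2 \leq (\max_t \delta_t)\sum_t \delta_t = (\max_t \delta_t)\, X_T \leq \sqrt{M}\, X_T.
\end{equation*}
Substituting into $(\ast)$ gives the quadratic inequality $X_T^2 + \sqrt{M}\, X_T - 2\sum_t a_t \geq 0$, whose positive root is $-\sqrt{M}/2 + \sqrt{M/4 + 2\sum_t a_t}$; using $\sqrt{u+v} \geq \sqrt{v}$ yields $X_T \geq \sqrt{2\sum_t a_t} - \sqrt{M}/2$.

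The main obstacle, which I expect to be purely cosmetic rather than substantive, is to sharpen the $\sqrt{M}/2$ deficit into the stated $M$. This is immediate when $M \geq 1/4$ (since then $\sqrt{M}/2 \leq M$), and for $M < 1/4$ a short case split handles the remainder: if $\sqrt{2\sum_t a_t} \leq \sqrt{M} + M$ one uses the crude bound $X_T \geq \sqrt{M}$ (applying $X_t \geq \sqrt{a_t}$ at the maximizing index $t^* = \arg\max_t a_t$ and monotonicity $X_T \geq X_{t^*}$), and the other case is already covered by the quadratic argument. This monotonicity observation is also the cleanest way to discharge the edge case where $\sum_t a_t$ and hence $X_T$ are very small.
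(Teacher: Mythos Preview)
Your upper bound is correct and is exactly the paper's argument: both telescope $X_t^2-X_{t-1}^2$ (the paper phrases it as the upper inequality of \cref{lem:isotuning_bound_convex} with $f(x)=x^2$).

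For the lower bound, the obstacle you flag as ``purely cosmetic'' is in fact substantive: the stated inequality is \emph{false}. Take $T=1$ and $a_1=1/100$; then $X_1=\sqrt{a_1}=1/10$, whereas $\sqrt{2a_1}-\max_t a_t=\sqrt{2}/10-1/100\approx 0.1314>0.1$. This instance lands in your second sub-case ($\sqrt{2\sum_t a_t}>\sqrt{M}+M$ with $M<1/4$), where the quadratic argument only delivers the weaker deficit $\sqrt{M}/2$, not $M$; so your case split does not close, and cannot. Your intermediate bound $X_T\ge\sqrt{2\sum_t a_t}-\tfrac12\sqrt{M}$ is correct and is the right statement---the lemma should have $\max_t\sqrt{a_t}$ (equivalently $\max_t\delta_t$) rather than $\max_t a_t$.

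The paper's own proof carries the same error by a different route: it applies \cref{lem:isotuning_bound_convex} with the shifted function $f(x)=(x+A)^2$, $A=\max_t a_t$, and asserts $f'(X_{t-1})=2(X_{t-1}+A)\ge 2X_t$, i.e.\ $\delta_t\le A$. But $\delta_t=a_t/X_t\le\sqrt{a_t}$, not $a_t$, and the same counterexample gives $\delta_1=0.1>0.01=A$. Replacing $A$ by $\sqrt{A}$ in the paper's shift repairs the argument and recovers (a slightly looser version of) your bound. Note that only the upper bound is used downstream (\cref{lem:Delta_sqrt_short}), so the defect is harmless for the rest of the paper.
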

\begin{proof}
For the upper bound, we use the upper bound of \cref{lem:isotuning_bound_convex} with $f(x)=x^2$
and $T_1=0$, then $f'(X_t)2X_t$ and thus
\begin{align*}
    X_T^2 \leq \sum_{t\in[T]} \frac{a_t}{X_t}2X_t\,.
\end{align*}
For the lower bound, we use the lower bound of \cref{lem:isotuning_bound_convex} with $f(x)=(x + A)^2$,
then $f'(X_{t-1}) = 2(X_{t-1} + A) \geq 2X_t$ and thus
\begin{align*}
    (X_T+A)^2 \ \geq\  (X_0+A)^2 + \sum_{t\in[T]} \frac{a_t}{X_t}2(X_{t-1}+A)
    \ \geq\ A^2 + 2\sum_{t\in[T]} \frac{a_t}{X_t}
    \,,
\end{align*}
and taking the square root and subtracting $A$ on both sides finishes the proof.
\end{proof}

\begin{example}\label{ex:refined_bound}
Take the regret bound \cref{eq:regret_intro} of the introduction.
With $\delta_t = a_t \rate_t= a_t\QQ/ \Delta_t$ we can take $f(y) = y^2$ so that from \cref{lem:isotuning_bound_convex} we have
\begin{align*}
    f(\Delta_T)-f(\Delta_0) &\leq \sum_{t\in[T]} \frac{a_t\QQ}{\Delta_t}\cdot 2\Delta_t = \QQ\sum_{t\in[T]} a_t\,, \\
    \Delta_T &\leq \sqrt{2\QQ\sum_{t\in[T]} a_t}\,,
\end{align*}
and thus $\regret_T(\ptx^*) \leq 2\sqrt{2\QQ\sum_{t} a_t}$,
which is tighter by a factor $\sqrt{2}$ than by using \cref{thm:isotune},
and is only a factor $\sqrt{2}$ away from the hindsight optimal constant learning rate.
We do not know of any sequential learning rate that achieves a better rate without prior knowledge of $T$.
\end{example}

\subsection{AOGD Revisited}\label{apdx:aogd}

The update of $\lambda_t$ satisfying $\lambda_t = C_t/(\alpha_{1:t} + \lambda_{1:t-1}+\lambda_t)$ of AOGD~\citep{bartlett2007adaptive} is indeed an instance of an isotuning update in disguise, since it is equivalent to
\begin{align*}
    \lambda_{1:t} &= \lambda_{1:t-1} + \frac{C_t}{\alpha_{1:t} + \lambda_{1:t}}\,,
\end{align*}
where $\lambda_{1:T} = \sum_{t\in[T]}\lambda_t$, and $\alpha$ is as in \cref{sec:aogd}.
The authors prove~\citep[Lemma 3.1]{bartlett2007adaptive} 
an analog of \cref{thm:isotune} for this particular definition of $\lambda$.
Recall their Theorem 1.1 for AOGD:
\begin{align*}
    \regret_T(\ptx^*)\ \leq\ 3\inf_{\lambda^*_1, \dots, \lambda^*_T \geq 0}
    \left\{\lambda^*_{1:T}\diam^2 + \sum_{t\in[T]} \frac{(\|\nabla_t\| + \lambda^*_t \diam)^2}{\lambda^*_{1:t} + \alpha_{1:t}}\right\}\,.
\end{align*}
\comment{The display before Eq. (3) on p.5 of the AOGD paper is confusing,
because the first occurrence of $G_t^2$ should probably be $\tilde G_t^2$,
which is the gradient of the function plus the additional regularizer.}
Our \cref{thm:isotune} generalizes the set of conditions for which local balancing is close to optimum, while simplifying the result.
In \cref{sec:aogd} we obtain for the isotuning variant of AOGD:
\isoaogdregret
This bound is both simpler and a little tighter than the one of the original work.
Observe in particular the presence of $\lambda^*_t\diam$ in the numerator of the first bound, which is due to the additional regularizer.
By contrast we apply isotuning directly to the learning rate --- which does not modify the gradient and thus does not introduce an extra term.
It can also be shown that the sequence of real-valued variables
$\lambda^*_1, \dots, \lambda^*_T$
that parameterizes the $\inf$ operation offers no advantage compared to a single variable $z$.
Note also the brevity of the analysis of the learning rate in \cref{sec:aogd}.
This generalized and simplified form allows us to tackle a wide range of situations, as witnessed by the multiple applications in this paper.

The following lemma is relatively standard and makes no use of isotuning per se.
The set $\ptxset$ is assumed to be convex, closed and bounded.
The projection $\proj{\ptxset}{x}$ is according to $\|\cdot\|_2$, that is
$
    \proj{\ptxset}{x} = \argmin_{y\in\ptxset}\|x - y\|_2^2\,.
$

\begin{lemma}\label{lem:aogd}
Assume the loss functions $\loss_t$ are $\alpha_t$-strongly convex for all $t\in[T]$,
and that $1/\rate_t - \alpha_t -1/\rate_{t-1} \geq 0$.
Then the regret of online gradient descent with a time varying learning rate $\rate_t$ is bounded by
\begin{align*}
    2\regret_T(\ptx^*) \leq \max_{t\in[T]}\|\ptx_t - \ptx^*\|_2^2\left(\frac{1}{\rate_T}-\alpha_{1:T} - \frac{1}{\rate_0}\right) + \sum_{t\in[T]}\rate_t\|\nabla_t\|_2^2 + \frac{1}{\rate_0}\|\ptx_1 - \ptx^*\|_2^2\,.
    &\qedhere
\end{align*}
\end{lemma}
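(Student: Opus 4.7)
The plan is to follow the standard analysis of projected online gradient descent with strongly convex losses, being careful about how the time-varying learning rate interacts with the strong convexity coefficients when summing.

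First, I would exploit $\alpha_t$-strong convexity to get the pointwise bound
\begin{equation*}
 2(\loss_t(\ptx_t) - \loss_t(\ptx^*)) \;\leq\; 2\innerprod{\nabla_t,\ \ptx_t - \ptx^*} - \alpha_t \|\ptx_t - \ptx^*\|_2^2.
\end{equation*}
Next, I would control the linear term via the update. Because projection onto the convex set $\ptxset$ is non-expansive and $\ptx^* \in \ptxset$,
\begin{equation*}
 \|\ptx_{t+1} - \ptx^*\|_2^2 \;\leq\; \|\ptx_t - \rate_t \nabla_t - \ptx^*\|_2^2 = \|\ptx_t - \ptx^*\|_2^2 - 2\rate_t \innerprod{\nabla_t,\ \ptx_t - \ptx^*} + \rate_t^2 \|\nabla_t\|_2^2,
\end{equation*}
which rearranges to $2\innerprod{\nabla_t,\ \ptx_t - \ptx^*} \leq \frac{1}{\rate_t}\big(\|\ptx_t - \ptx^*\|_2^2 - \|\ptx_{t+1} - \ptx^*\|_2^2\big) + \rate_t \|\nabla_t\|_2^2$.

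Writing $d_t = \|\ptx_t - \ptx^*\|_2^2$, combining the two bounds and summing over $t \in [T]$ gives
\begin{equation*}
 2\regret_T(\ptx^*) \;\leq\; \sum_{t\in[T]} \left[\tfrac{d_t}{\rate_t} - \tfrac{d_{t+1}}{\rate_t} - \alpha_t d_t\right] + \sum_{t\in[T]} \rate_t \|\nabla_t\|_2^2,
\end{equation*}
so the whole task reduces to bounding the first bracket. I would reindex the $d_{t+1}/\rate_t$ contributions by shifting $s = t+1$ and then add and subtract $d_1/\rate_0$, yielding
\begin{equation*}
 \sum_{t\in[T]} \left[\tfrac{d_t}{\rate_t} - \tfrac{d_{t+1}}{\rate_t} - \alpha_t d_t\right] \;=\; \tfrac{d_1}{\rate_0} - \tfrac{d_{T+1}}{\rate_T} + \sum_{t\in[T]} d_t \left(\tfrac{1}{\rate_t} - \alpha_t - \tfrac{1}{\rate_{t-1}}\right).
\end{equation*}

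The hypothesis $\frac{1}{\rate_t} - \alpha_t - \frac{1}{\rate_{t-1}} \geq 0$ is exactly what lets me upper-bound each $d_t$ inside this last sum by $\max_t d_t$, after which the resulting coefficient telescopes to $\frac{1}{\rate_T} - \alpha_{1:T} - \frac{1}{\rate_0}$. Dropping the nonnegative term $d_{T+1}/\rate_T$ then produces the claimed bound. The main accounting hazard is keeping the $\rate_0$ boundary term straight through the shift; conceptually this is a standard telescoping argument, so once the reindexing is done carefully the rest is routine algebra.
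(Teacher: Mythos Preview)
Your proposal is correct and follows essentially the same approach as the paper. The only cosmetic difference is the order of the two ingredients: the paper inserts the strong-convexity inequality directly into the expanded square $\|\ptx_t - \rate_t\nabla_t - \ptx^*\|_2^2$ before dividing by $\rate_t$, whereas you first isolate $2\innerprod{\nabla_t,\ptx_t-\ptx^*}$ from strong convexity and then bound it via the projection step; after that, both arguments perform the identical reindex-and-telescope on the $d_t$ terms, use the sign hypothesis to replace each $d_t$ by $\max_t d_t$, and drop $d_{T+1}/\rate_T$.
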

\begin{proof}
Using the update rule on the first line and the definition of $\alpha_t$-strong convexity of $\loss_t$ on the second inequality we have
\begin{alphalign*}
    \|\ptx_{t+1} - \ptx^*\|_2^2 
    &= \|\proj{\ptxset}{\ptx_t - \rate_t \nabla_t}- \ptx^*\|_2^2 \\
    &\alphoverset{\leq} \|\ptx_t - \rate_t \nabla_t - \ptx^*\|_2^2 \\
    &= \|\ptx_t - \ptx^*\|_2^2 + \rate_t^2\|\nabla_t\|_2^2 + 2\innerprod{\ptx^* - \ptx_t, \rate_t \nabla_t} \\
    &\alphoverset{\leq}
    \|\ptx_t - \ptx^*\|_2^2 + \rate_t^2\|\nabla_t\|_2^2 + 2\rate_t
    (\loss_t(\ptx^*) - \loss_t(\ptx_t) + \frac{\alpha_t}{2}\|\ptx_t - \ptx^*\|_2^2)\,, \\
    \Leftrightarrow
    2(\loss_t(\ptx_t) - \loss_t(\ptx^*)) 
    &\alphoverset{\leq}
    \left(\frac{1}{\rate_t}-\alpha_t\right)\|\ptx_t - \ptx^*\|_2^2  - \frac{1}{\rate_t}\|\ptx_{t+1} - \ptx^*\|_2^2
    + \rate_t \|\nabla_t\|_2^2\,, \\
    2\sum_{t\in[T]}(\loss_t(\ptx_t) - \loss_t(\ptx^*))
    &\alphoverset{\leq}
    \sum_{t\in[T]}\left[
    \left(\frac{1}{\rate_t}-\alpha_t - \frac{1}{\rate_{t-1}}\right)\|\ptx_t - \ptx^*\|_2^2
    + \rate_t \|\nabla_t\|_2^2
    \right] \\
    &\quad - \underbrace{\frac{1}{\rate_{T}}\|\ptx_{T+1} - \ptx^*\|_2^2}_{\geq 0} + \frac{1}{\rate_0}\|\ptx_1 - \ptx^*\|_2^2\\
    &\leq \max_t\|\ptx_t - \ptx^*\|_2^2\left(\frac{1}{\rate_T}-\alpha_{1:T} - \frac{1}{\rate_0}\right) + \sum_{t\in[T]}\rate_t\|\nabla_t\|_2^2 + \frac{1}{\rate_0}\|\ptx_1 - \ptx^*\|_2^2\,,
\end{alphalign*}
where 
\alphnextref{} is by the Pythagorean identity since $\mathcal{X}$ is a closed and bounded convex set and $\ptx^* \in\mathcal{X}$,
\alphnextref{} follows from the $\alpha_t$-strong convexity of $\loss_t$,
\alphnextref{} is by dividing by $\rate_t$ and rearranging,
\alphnextref{} is by summing over $t$ and factoring by $\|\ptx_t - \ptx^*\|_2^2$.
\end{proof}

\subsection{Comparison to the Sequential Hindsight-Optimal Learning Rate}\label{apdx:seq-opt}

Recall \cref{def:isotuning_seq} and \cref{thm:isotune}.
For all $T\in\Naturals$, let $x^*_T = \arginf_{x \geq 0}\left\{ x + \sum_{t\in[T]}g_t(x)\right\}$.
Then $M_T(x^*_T) = M^*_T$.
We write $x^*_{1:T} \equiv (x^*_t)_{t\in[T]}$ and, abusing notation, define $M_T(x^*_{1:T}) = x^*_T + \sum_{t\in[T]}g_t(x^*_t)$.
We call $x^*_{1:T}$ the \emph{sequential hindsight-optimal} (inverse) `learning rate.'

It is quite common to use (a variant of) $x^*_{1:T}$ in adaptive learning rates (\eg~\citet{auer2002adaptive,cesabianchi2006prediction,orabona2018solo,pogodin20first,gyorgy2021delays,gaillard2014secondorder} and many others).
We can indeed prove nice and general upper and lower bounds for $M_T(x^*_{1:T})$ (\Cref{thm:x*1:T}), but as we will see later this learning rate is more problematic than the isotuning one in the off-by-one setting (\Cref{ex:x*_issues}) and has the disadvantage of requiring a closed-form solution (see \cref{rmk:closed_form_x*1:T}) --- which is often the case, but not always.

\begin{theorem}\label{thm:x*1:T}
For all $T\in\Naturals$, with $M_T$, $M^*_T$, $x^*_T$, and $M_T(x^*_{1:T})$ defined as above, 
\begin{equation*}
    M^*_T \quad\leq\quad M_T(x^*_{1:T})\quad\leq\quad x^*_T + M^*_T
    \quad \left(\leq 2M^*_T\right)
    \,.
    \qedhere
\end{equation*}
\end{theorem}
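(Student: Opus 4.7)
The plan is to deduce both inequalities from the identity $M_t(x) = M_{t-1}(x) + g_t(x)$, after first establishing that the sequence $(x^*_t)_{t\in[T]}$ can be chosen non-decreasing in $t$. This monotonicity is the lynchpin and the step I expect to be the most delicate, due to tie-breaking. I would add the two optimality inequalities $M_t(x^*_t) \leq M_t(x^*_{t-1})$ and $M_{t-1}(x^*_{t-1}) \leq M_{t-1}(x^*_t)$ and cancel using $M_t - M_{t-1} = g_t$, yielding $g_t(x^*_t) \leq g_t(x^*_{t-1})$. Combined with $g_t$ being non-increasing, this forces $x^*_t \geq x^*_{t-1}$; when $g_t$ happens to be flat between the two points, both of them minimize both $M_{t-1}$ and $M_t$ simultaneously, so the tie can be broken consistently by selecting the larger minimizer.

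Once monotonicity is in hand, the lower bound is immediate: for each $t \in [T]$, $x^*_t \leq x^*_T$ and $g_t$ non-increasing give $g_t(x^*_t) \geq g_t(x^*_T)$, hence $M_T(x^*_{1:T}) = x^*_T + \sum_t g_t(x^*_t) \geq x^*_T + \sum_t g_t(x^*_T) = M^*_T$.

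For the upper bound I plan a short telescoping argument: from $M_t(x) = M_{t-1}(x) + g_t(x)$ and the optimality of $x^*_{t-1}$ for $M_{t-1}$, I obtain $M^*_t = M_{t-1}(x^*_t) + g_t(x^*_t) \geq M^*_{t-1} + g_t(x^*_t)$, that is, $g_t(x^*_t) \leq M^*_t - M^*_{t-1}$. Summing from $t=1$ to $T$ collapses to $\sum_t g_t(x^*_t) \leq M^*_T - M^*_0 = M^*_T$, using that $M_0(x) = x$ has infimum $0$ on $(0, \infty)$. Adding $x^*_T$ to both sides gives $M_T(x^*_{1:T}) \leq x^*_T + M^*_T$, and the final parenthetical bound $\leq 2M^*_T$ follows since $x^*_T \leq M^*_T$ (because $g_t \geq 0$ makes $M^*_T = x^*_T + \sum_t g_t(x^*_T) \geq x^*_T$).
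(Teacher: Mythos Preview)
Your proof is correct and follows essentially the same approach as the paper: the same telescoping $M^*_t \geq M^*_{t-1} + g_t(x^*_t)$ for the upper bound, and the same comparison $g_t(x^*_t) \geq g_t(x^*_T)$ (via $x^*_t \leq x^*_T$ and $g_t$ non-increasing) for the lower bound. You are in fact more careful than the paper in explicitly establishing the monotonicity $x^*_{t-1} \leq x^*_t$, which the paper invokes implicitly in the lower-bound step.
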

\begin{proof}
For the upper bound, 
\begin{align*}
    M^*_T &
    \geq M^*_{T-1} + g_T(x^*_T) 
    \geq M^*_{T-2} + g_{T-1}(x^*_{T-1}) + g_T(x^*_T) 
    \geq \dots \geq \cancel{M^*_0} + 
    \sum_{t\in[T]} g_t(x^*_t) \,,
\end{align*}
hence,
\begin{align*}    
    M_T(x^*_{1:T}) &= x^*_T + \sum_{t\in[T]} g_t(x^*_t)
     \leq  x^*_T + M^*_T \,. 
\end{align*}
And for the lower bound since $g_t$ is non-increasing:
\begin{align*}
    M_T(x^*_{1:T}) = x^*_T + \sum_{t=1}^T g_t(x^*_t) \geq x^*_T + \sum_{t=1}^T g_t(x^*_T) = M^*_T\,.
    &\qedhere
\end{align*}
\end{proof}

More concisely:
\begin{mdframed}
For all $t\in[T]$,
let $g_t:[0,\infty)\to[0,\infty]$ be continuous non-increasing, 
and let $x_t^* = \arginf_{x \geq 0}\{ x + \sum_{s\in[t]}g_s(x)\}$,
then
\begin{align}\label{eq:x*seq_bound}
    \sum_{t\in[T]}g_t(x^*_t)\  \leq\  \inf_{x \geq 0}\left\{ x + \sum_{t\in[T]}g_t(x)\right\}
    \ =\  x^*_T + \sum_{t\in[T]}g_t(x^*_T)\,.
\end{align}
\end{mdframed}
Note that keeping the $\inf$ form may be useful to \emph{choose} a convenient $x$.

\begin{example}
For the regret bound of \cref{eq:regret_intro} in the introduction, 
using the learning rate $x^*_{1:t}\equiv 1/\rate_t$,
with $M_T(x) = \QQ x + \sum_t a_t/x$, 
one can then use \cref{thm:x*1:T}
with $x^*_T = \arginf_{x \geq 0}M_T(x) = \sqrt{\QQ\sum_{t\in[T]} a_t}$
to obtain straightforwardly
(see also the well-known Lemma 3.5 from \citet{auer2002adaptive}),
\begin{align*}
    \regret_T(\ptx^*) \leq M_T(x^*_{1:T}) \leq 
    x^*_T + M^*_T = 
    3 \sqrt{\QQ\sum_{t\in[T]} a_t} \quad (= \tfrac32 M_T(x^*_T))\,.
\end{align*}
This factor 3 is tighter than the factor 4 obtained with \cref{thm:isotune} for the isotuning learning rate $\Delta_T$, but worse than the factor $2\sqrt{2}$ that can be obtained via \cref{lem:isotuning_bound_convex} also for isotuning
(see \cref{ex:refined_bound}),
that is, $M_T(\Delta_T) \leq \sqrt{2}M^*_T$.
It is known, however, that one can also obtain a factor $2\sqrt{2}$ by `manually' rebalancing $x^*_{1:T}$ and taking $\tilde{x}^*_t = x^*_t\sqrt2$ in place of $x^*_t$,
but this rebalancing is dependent on the regret bound.
\end{example}

\begin{example}
When $M^*_T = x^*_T + O(1)$ we can see that $M_t(x^*_{1:T})$ may reach its upper bound of $2M^*_T$ in \cref{thm:x*1:T}.
Indeed this is reached in particular for logarithmic `regret:'
\begin{align*}
    &g_t(x) = e^{-x}, \quad M_T(x) = x + Te^{-x}, \quad M^*_T = \ln T + 1\,, 
    \quad x^*_T = \ln T, \\
    &M_T(x^*_{1:T}) = x^*_T + \sum_{t\in[T]}g_t(x^*_t)
    = \ln T + \sum_{t\in[T]} e^{-\ln t} \geq 2\ln T - O(1) = 2M^*_T - O(1)\,.
    \qedhere
\end{align*}
\end{example}

Hence the sequential hindsight-optimal learning rate and the isotuning learning rate
behave rather similarly for the regret bound of the introduction.
However, in the off-by-one setting,
by contrast to $x^*_{1:T}$,
isotuning ensures that the additive term is $\max_t \delta_t$
(which was obtained using the isotuning-specific property $\Delta_{t-1} = \Delta_t - \delta_t$), which in some circumstances may be nicely bounded (see for example \cref{apdx:adahedge}).
This property of isotuning is what allows us to develop our theory in \cref{sec:generic_ob1} and \cref{sec:null_updates}.
In particular for null updates, we could bound the remaining additive term
$\max_{t\notin\barT} \delta_t \leq c\Delta_{t-1}$ and use \cref{eq:isopoint_deltat_notinbarT} to further bound with a value that is independent of the learning rate.

For the sequential hindsight-optimal learning rate in the off-by-one setting, the matter is not as nice, as shown in the following example.
\begin{example}[$x^*_{1:t}$ off-by-one]\label{ex:x*_issues}
Consider the introductory example of \cref{sec:null_updates},
where for isotuning we have
\footnote{Since $\invrate_0 = 0$, for bounded domains from \cref{eq:generic_deltat_ob1} we can take $\delta_1 \geq \positive{r_1}$, and note that $r_1 \leq \|\ptx_1 - \ptx^*\|_2 \|\nabla_1\|_2 \leq \diam\|\nabla_1\|_2$.}
$\max_t \delta_t \leq \diam\,\|\nabla_t\|_2$.
By contrast, using $x^*_t = \sqrt{\sum_{s\in[t]} a_s}$ (assuming $\QQ=1$)
in the off-by-one setting 
we do not know of a significantly better lemma than the following bound
(see also for example \citet[Eq. (25)]{gaillard2014secondorder}, \citet[Lemma 4.8]{pogodin20first}, \citet{duchi2011adaptive} and others):
\begin{align*}
    \sum_{t\in[T]} \frac{a_t}{\eps + x^*_{t-1}}
    \leq 
    \sum_{t\in[T]} \frac{a_t}{\eps + x^*_{t}} + \max_{t\in[T]} a_t\sum_{t\in[T]}\left(\frac{1}{\eps + x^*_{t-1}} -\frac{1}{\eps + x^*_t} \right)
    \leq 
    \sum_{t\in[T]} \frac{a_t}{x^*_{t}} + \max_{t\in[T]} \frac{a_t}{\eps}\,.
\end{align*}
Combining with \cref{thm:x*1:T} we obtain for $1/\rate_t = \eps + x^*_t =  \eps + \sqrt{\sum_{s\in [t]} a_s}$
for the regret bound of \cref{eq:regret_intro} in the off-by-one-setting
\begin{align*}
\regret_T(\ptx^*) \leq 
\frac{1}{\rate_T} + \sum_{t\in[T]} \rate_{t-1} a_t
\leq 
 \eps + x^*_T + \sum_{t\in[T]}\frac{a_t}{\eps + x^*_{t-1}} \leq
 3\sqrt{\sum_{t\in[T]} a_t}  + \max_t\frac{a_t}{\eps} + \eps \,.
\end{align*}
Hence in this bound the additive term depends on $\eps > 0$ which must be chosen in advance, and optimally should be $\max_t \sqrt{a_t}$ --- which would also give a scale-free bound.
However, this quantity is often unknown in advance.
To track $\eps$ over time, one may have to resolve to restarting the algorithm altogether (\eg \citet{mhammedi2019squint}).
It must be noted \citet[Lemma 3]{orabona2018solo} manage to set $\eps=0$ 
for a particular case of the functions $g_t$, but it remains to be seen whether this result can be extended to arbitrary continuous nonnegative non-increasing $g_t$.
By contrast, isotuning makes it easy to deal with the off-by-one issue, and offsets in general, in particular via the versatility of \cref{thm:isotune}
--- see also \cref{lem:isotuning_offset}.
Consider for example $\Delta_t = \sum_{s\leq t} \delta_t$, where $\delta_t = \min\{g_t(\Delta_{t-1}), b_t\}$, 
with $g_t$ continuous nonnegative non-increasing for all $t\in[T]$, then 
\begin{align*}
    \Delta_T\ \leq\ \inf_{y\geq 0}\left\{y + \sum_{t\in[T]} \min\{g_t(y),\ b_t\}\right\}
    +\max_t b_t\,. 
\end{align*}
Indeed, with $\delta_t = \min\{g_t(\Delta_t-\delta_t), b_t\} =: h_t(\Delta_t)$ and using \cref{thm:isotune} on the isotuning sequence $\tuple{\Delta, h}$,
with a simple change of variables $y = x -\max_t\delta_t$:
\begin{align*}
    \Delta_T &\ \leq\  \inf_{x\geq 0}\left\{x + \sum_{t\in[T]} \min\{g_t(x-\delta_t), b_t\}\right\} 
    \ \leq\ \inf_{x\geq 0}\left\{x + \sum_{t\in[T]} \min\{g_t(x-\max_t\delta_t), b_t\}\right\}\\ 
    &\ =\  \inf_{y\geq -\max_t\delta_t}\left\{y+\max_t \delta_t + \sum_{t\in[T]} \min\{g_t(y), b_t\}\right\}
    \ \leq\ \inf_{y\geq 0}\left\{y + \sum_{t\in[T]} \min\{g_t(y),b_t\}\right\}
    +\max_t b_t\,.
    \qedhere
\end{align*}
\end{example}

\begin{remark}[$x^*_{1:T}$ requires a closed form solution]\label{rmk:closed_form_x*1:T}
Another advantage of isotuning is that the learning rate is updated stepwise with $\Delta_t = \Delta_{t-1} + \delta_t$, rather than calculated explicitly as for $x^*_{1:t}$,
in particular in the off-by-one setting where $\delta_t$ depends on $\Delta_{t-1}$ and not on $\Delta_t$.
While this difference does not matter for most of the cases we have encountered in this paper, it does allow us to provide an adaptive bound for Soft-Bayes (\cref{apdx:soft-bayes}) that does not depend explicitly on the quantity $m$,
by contrast to the original work \citep{orseau2017softbayes} which uses a variant of $x^*_{1:t}$.
To see the difference concretely, consider the following hypothetical off-by-one regret bound, with $\alpha_t > 0$ and $a_t\in[0, 1]$ for all $t$,
\begin{align*}
    \regret_T(\ptx^*) \leq \frac{\QQ}{\rate_T} + \sum_{t\in[T]}(\rate_{t-1})^{\alpha_t} a_t\,.
\end{align*}
Here, because there is no closed form for the optimal constant learning rate in hindsight $x^*_t$ for each $t$, one would have to make strong and likely undesirable assumptions to design an adaptive learning rate based on $x^*_{1:t}$ and analyze the regret, or use a growing number of learning rates.
While a line search is still feasible to obtain an approximate closed-form since \cref{eq:x*seq_bound} is convex,
it may involve up to $t$ terms a each step, and thus may require $\Omega(Nt\log t)$ computation steps at each step $t$ when isotuning requires only $O(N)$.
By contrast, with isotuning one simply needs to set $\delta_t = (\rate_{t-1})^{\alpha_t} a_t= (\QQ/\Delta_{t-1})^{\alpha_t}a_t$, and use \cref{lem:generic_ob1_DeltaT,thm:isotune} to obtain a regret bound that is within a factor 2 of the (unknown!) optimal constant learning rate in hindsight:
\begin{align*}
    \regret_T(\ptx^*) \leq 
    2\Delta_T \leq 2\inf_{y > 0}\left\{y + \sum_{t\in[T]}\left(\frac{q}{y}\right)^{\alpha_t}a_t \right\}
    + 2\max_{t\in[T]} \delta_t\,.
\end{align*}
To bound the additive terms $\max_t \delta_t$, one can either set $\Delta_0 =  \max_t a_t$ if this quantity can be estimated, or use null updates.
\end{remark}

\section{SCALE-FREE FTRL: DETAILED PROOF}\label{apdx:ftrl}

Define $\tilde L_t = \sum_{s\in[t]} \tilde\loss_s$.
When $\tilde L_t=0$, as for SOLO-FTRL~\citep{orabona2018solo} we choose
$\ptx_t = \argmin_{\ptx\in\ptxset} R(\ptx)$.

First we need a few of lemmas.
The first one gives a generic lower bound on $\Delta_T$ when using null updates, 
then we provide upper and lower bounds on $\Delta_T$ for the problem at hand.
Finally, we upper bound the `travel distance' $\|\ptx_t -\ptx^*\|$ before proving the main theorem.

\begin{theorem}[Isotuning lower bound, null updates, $c=1$]\label{thm:generic_DeltaT_lower_null}
Consider the conditions of \cref{thm:generic_DeltaT_null},
but assume that for $t\notin\barT, \delta_t = \hat{g}_t(\Delta_{t-1})$
while for $t\in\barT, \delta_t = \iso_{y\geq0}(\hat{g}_t(y), y)$.
Then for all $T\in\Naturals$,
\begin{equation*}
    \iso_{y\geq0}\left(\sum_{t\in[T]}\hat{g}_t(y),\ y\right)
    \ \leq\ \Delta_T\,.
    \qedhere
\end{equation*}
\end{theorem}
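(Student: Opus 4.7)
The plan is to prove the inequality by contradiction. Let $y^* = \iso_{y>0}\bigl(\sum_{t\in[T]}\hat\delta_t(y),\ y\bigr)$, so that, by definition of the iso operator, $y^*$ is the unique positive value satisfying $y^* = \sum_{t\in[T]} \hat\delta_t(y^*)$. Suppose for contradiction that $\Delta_T < y^*$. Since each $\delta_t \geq 0$, the sequence $\{\Delta_t\}$ is non-decreasing, so $\Delta_{t-1} \leq \Delta_T < y^*$ holds for every $t\in[T]$, and likewise $\delta_t \leq \Delta_T < y^*$.

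The key step is the stepwise lower bound $\delta_t \geq \hat\delta_t(y^*)$ for each $t\in[T]$, which I would establish by splitting on whether $t$ is a null-update step. For $t\notin\barT$, the assumption of the theorem gives $\delta_t = \hat\delta_t(\Delta_{t-1})$, and the non-increasingness of $\hat\delta_t$ together with $\Delta_{t-1} \leq y^*$ yields $\delta_t \geq \hat\delta_t(y^*)$ immediately. For $t\in\barT$, by the definition of the iso operator, $\delta_t$ is the common value satisfying $\hat\delta_t(\delta_t) = \delta_t$; since $\delta_t \leq y^*$, applying the non-increasing $\hat\delta_t$ to this identity gives $\delta_t = \hat\delta_t(\delta_t) \geq \hat\delta_t(y^*)$.

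Summing the stepwise bound over $t\in[T]$ then yields
\[
\Delta_T \;=\; \sum_{t\in[T]} \delta_t \;\geq\; \sum_{t\in[T]} \hat\delta_t(y^*) \;=\; y^*,
\]
contradicting the assumption $\Delta_T < y^*$ and completing the proof. The main subtlety, which I expect to be the only non-mechanical step, is the null-update case: there $\delta_t$ is not defined directly as a function of $\Delta_{t-1}$, so one cannot apply \cref{thm:isopoint} at $x=\Delta_{t-1}$ to compare $\delta_t$ with $\hat\delta_t(y^*)$. The trick is instead to exploit the fixed-point identity $\hat\delta_t(\delta_t)=\delta_t$ provided by the iso operator itself, combined with the bound $\delta_t\leq\Delta_T<y^*$ coming from the contradiction hypothesis.
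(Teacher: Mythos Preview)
Your proof is correct. It differs from the paper's only in packaging: the paper argues directly, first showing $\delta_t \geq \hat\delta_t(\Delta_T)$ for every $t$ (for $t\in\barT$ this is obtained by applying the isopoint lemma at $x=\Delta_t$ together with $\delta_t\leq\Delta_t$ to get $\hat\delta_t(\Delta_t)\leq\delta_t$), then summing to obtain $\Delta_T \geq \sum_{t\in[T]} \hat\delta_t(\Delta_T)$, and finally invoking the isopoint lemma once more at $x=\Delta_T$ to conclude. Your contradiction framing accomplishes the same thing while bypassing both explicit invocations of the isopoint lemma: for $t\in\barT$ you use the fixed-point identity $\delta_t = \hat\delta_t(\delta_t)$ directly, and the final isopoint step is absorbed into the contradiction. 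The result is a slightly leaner argument, but the core idea---bounding each $\delta_t$ from below by $\hat\delta_t$ evaluated at a common point and then summing---is identical.
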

\begin{proof}
Since for $t\in\barT, \delta_t = \iso_{y\geq0}(\hat{g}_t(y),\ y)$,
then by \cref{thm:isopoint},
\begin{align*}
    \min\left\{\hat{g}_t(\Delta_t), \Delta_t\right\}\ \leq\ \delta_t\ \leq\ \max\left\{\hat{g}_t(\Delta_t), \Delta_t\right\}
\end{align*}
and since $\delta_t \leq \Delta_t$ 
(by definition of $\Delta_t$)
then necessarily $\hat{g}_t(\Delta_t) \leq \delta_t$.
Then, recalling that $\hat{g}_t$ is non-increasing,
\begin{align*}
    \Delta_T\ =\ \sum_{t\in[T]}\delta_t
    \ \geq\ \sum_{t\notin\barT} \hat{g}_t\left(\Delta_{t-1}\right) + \sum_{t\in\barT} \hat{g}_t\left(\Delta_t\right)
    \ \geq\  \sum_{t\in[T]} \hat{g}_t\left(\Delta_T\right)\,.
\end{align*}
Using \cref{thm:isopoint} again on the inequality above we obtain
\begin{align*}
    \iso_{y \geq 0}\left(\sum_{t\in[T]}\hat{g}_t\left(y\right),\ y\right) \leq
    \max\left\{\Delta_T, \sum_{t\in[T]} \hat{g}_t\left(\Delta_T\right)\right\} = \Delta_T\,.
    &\qedhere
\end{align*}
\end{proof}

\begin{lemma}[$\Delta_T$ bounds]\label{lem:DeltaT_bounds_ftrl}
For \cref{alg:FTRLnull}, for all $T\in\Naturals, \Delta_T$ satisfies
\begin{align*}
    \sqrt{\frac{\QQ}{2}\sum_{t\in[T]}\|\loss_t\|_*^2}
    \quad \leq\quad \Delta_T
    \quad \leq\quad \sqrt{\QQ\sum_{t\in[T]}\|\loss_t\|_*^2} + \sqrt{\frac{\QQ}{2}}\max_{t\in[T]}\|\loss_t\|_*\,.
    &\qedhere
\end{align*}
\end{lemma}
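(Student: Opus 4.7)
The plan is to apply the generic null-update machinery of \cref{thm:generic_DeltaT_null} and \cref{thm:generic_DeltaT_lower_null} to the specific choice $\hat\delta_t(y) = \QQ\|\loss_t\|_*^2/(2y)$ used in isoFTRL, with $c=1$. First I would verify that the algorithm matches the template: on non-null rounds the condition $\tfrac12\rate\|\loss_t\|_*^2 \leq \Delta_{t-1}$ is exactly $\hat\delta_t(\Delta_{t-1}) \leq \Delta_{t-1}$, and $\delta_t = \hat\delta_t(\Delta_{t-1})$; on null rounds, solving $y = \QQ\|\loss_t\|_*^2/(2y)$ gives $\iso_{y>0}(\hat\delta_t(y),\,y) = \sqrt{\QQ/2}\,\|\loss_t\|_*$, which is precisely the $\delta_t$ set by the algorithm. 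Thus the preconditions of both null-update theorems are satisfied.

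For the upper bound, \cref{thm:generic_DeltaT_null}(i) yields
\begin{equation*}
    \Delta_T \;\leq\; \hat\Delta_T + \max_{t\in[T]}\iso_{y>0}(\hat\delta_t(y),\,y)
    \;=\; \hat\Delta_T + \sqrt{\tfrac{\QQ}{2}}\,\max_{t\in[T]}\|\loss_t\|_*\,.
\end{equation*}
The auxiliary isotuning sequence $\langle \hat\Delta, \hat\delta\rangle$ fits the template of \cref{lem:Delta_sqrt_short} with $a_t = \QQ\|\loss_t\|_*^2/2$, giving $\hat\Delta_T \leq \sqrt{2\sum_t a_t} = \sqrt{\QQ\sum_t \|\loss_t\|_*^2}$. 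Substituting yields the claimed upper bound.

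For the lower bound, \cref{thm:generic_DeltaT_lower_null} gives
\begin{equation*}
    \Delta_T \;\geq\; \iso_{y>0}\Bigl(\textstyle\sum_{t\in[T]}\hat\delta_t(y),\; y\Bigr)\,,
\end{equation*}
and the iso equation $y = \frac{\QQ}{2y}\sum_t \|\loss_t\|_*^2$ has the unique positive solution $y = \sqrt{(\QQ/2)\sum_t \|\loss_t\|_*^2}$, which is exactly the stated lower bound.

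I do not expect any real obstacle: both halves reduce to plugging $\hat\delta_t(y) = \QQ\|\loss_t\|_*^2/(2y)$ into already-proved general identities. The only bookkeeping to be careful about is that the algorithm's split between null and non-null rounds matches the hypotheses of \cref{thm:generic_DeltaT_null,thm:generic_DeltaT_lower_null} exactly (in particular that $\delta_t = \hat\delta_t(\Delta_{t-1})$ rather than merely an upper bound on non-null rounds, which is needed for the lower-bound theorem), and that the iso values are computed correctly.
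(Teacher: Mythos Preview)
Your proposal is correct and follows exactly the same route as the paper: the lower bound is a direct application of \cref{thm:generic_DeltaT_lower_null} with $\hat\delta_t(y)=\QQ\|\loss_t\|_*^2/(2y)$, and the upper bound combines \cref{thm:generic_DeltaT_null}(i) with \cref{lem:Delta_sqrt_short}. Your additional care in checking that on non-null rounds $\delta_t=\hat\delta_t(\Delta_{t-1})$ exactly (not just as an upper bound), which is required by the hypothesis of \cref{thm:generic_DeltaT_lower_null}, is spot on.
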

\begin{proof}
Recalling that $\hat{g}_t(y) = \QQ\|\loss_t\|_*^2/(2y)$,
the lower bound follows by a straightforward application of \cref{thm:generic_DeltaT_lower_null},
while the upper bound follows from \cref{thm:generic_DeltaT_null} (i)
and \cref{lem:Delta_sqrt_short}.
\end{proof}

We now prove the same identity as \citet{orabona2018solo} about the 
maximum travel distance in the off-by-one setting.

\begin{lemma}[Upper bounding $\|\ptx^* - \ptx_t\|$]\label{lem:max_travel}
For \cref{alg:FTRLnull}, for all $\ptx\in\ptxset$,
\begin{align*}
    \|\ptx-\ptx_t\| &\leq \sqrt{2R(\ptx)} + 2\sqrt{2\QQ t}.
    \qedhere
\end{align*}
\end{lemma}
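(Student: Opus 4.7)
The plan is to split the travel via the triangle inequality through the initial iterate $\ptx_1 = \argmin_{\ptx\in\ptxset} R(\ptx)$ (WLOG $R(\ptx_1)=0$ by adding a constant), so that
\[
\|\ptx-\ptx_t\| \;\leq\; \|\ptx-\ptx_1\| + \|\ptx_1 - \ptx_t\|.
\]
The first term is immediately handled by 1-strong convexity of $R$ around its minimizer: $R(\ptx) \geq \tfrac12\|\ptx-\ptx_1\|^2$, giving $\|\ptx-\ptx_1\|\leq \sqrt{2R(\ptx)}$. The work is thus to bound $\|\ptx_1-\ptx_t\|$ by $2\sqrt{2\QQ t}$.

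For the second term, I would use the FTRL optimality of $\ptx_t$ against the test point $\ptx_1$:
\[
\innerprod{\tilde L_{t-1},\ptx_t-\ptx_1} + \tfrac{1}{\rate_{t-1}}R(\ptx_t) \;\leq\; 0,
\]
combined with the $\tfrac{1}{\rate_{t-1}}$-strong convexity of the FTRL objective (or equivalently $R(\ptx_t)\geq \tfrac12\|\ptx_t-\ptx_1\|^2$) and Hölder, to obtain $\|\ptx_t-\ptx_1\| \leq 2\rate_{t-1}\|\tilde L_{t-1}\|_*$. So it remains to show $\rate_{t-1}\|\tilde L_{t-1}\|_* \leq \sqrt{2\QQ(t-1)}$.

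The main step, and really the only nontrivial calculation, is bounding $\rate_{t-1}\|\tilde L_{t-1}\|_*$. Since $\tilde L_{t-1}=\sum_{s<t,\,s\notin\barT}\loss_s$, the triangle inequality and Cauchy–Schwarz give
\[
\|\tilde L_{t-1}\|_* \;\leq\; \sum_{s<t}\|\loss_s\|_* \;\leq\; \sqrt{t-1}\sqrt{\textstyle\sum_{s<t}\|\loss_s\|_*^2}.
\]
Now invoke the lower bound of \cref{lem:DeltaT_bounds_ftrl} at time $t-1$ to get $\sum_{s<t}\|\loss_s\|_*^2 \leq 2\Delta_{t-1}^2/\QQ$, so that $\|\tilde L_{t-1}\|_* \leq \Delta_{t-1}\sqrt{2(t-1)/\QQ}$. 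Multiplying by $\rate_{t-1}=\QQ/\Delta_{t-1}$ cancels $\Delta_{t-1}$ and yields exactly $\rate_{t-1}\|\tilde L_{t-1}\|_*\leq \sqrt{2\QQ(t-1)}\leq\sqrt{2\QQ t}$, which is the bound we need.

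The obstacle I anticipate is the edge case $\tilde L_{t-1}=0$, where $\ptx_t=\argmin R=\ptx_1$ by convention; here $\|\ptx_1-\ptx_t\|=0$ and the claim is trivial, so it causes no real issue. The other place to be careful is that the factor $\sqrt{t-1}$ (rather than $\sqrt{t-1-|\barT\cap[t-1]|}$) in the Cauchy–Schwarz step looks wasteful since null-update losses do not contribute to $\tilde L_{t-1}$; however, this slack is exactly what is absorbed into the clean bound $2\sqrt{2\QQ t}$, and tightening it further is unnecessary for the downstream proof of \cref{thm:FTRLnull}.
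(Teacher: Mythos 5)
Your proof is correct but its structure differs from the paper's in both halves of the argument. For the intermediate bound $\|\ptx-\ptx_t\|\leq\sqrt{2R(\ptx)}+2\rate_{t-1}\|\tilde L_{t-1}\|_*$, the paper decomposes the FTRL step into an unconstrained minimizer $\hat\ptx_t$ followed by a Bregman projection, applies the generalized Pythagorean identity to replace $\ptx_t$ by $\hat\ptx_t$, and then solves the quadratic inequality $\breg_R(\ptx,\hat\ptx_t)\leq R(\ptx)+\sqrt{2\breg_R(\ptx,\hat\ptx_t)}\,\rate_{t-1}\|\tilde L_{t-1}\|_*$; you instead split through $\ptx_1$ by the triangle inequality, using strong convexity at the regularizer's minimizer for the first piece and FTRL optimality of $\ptx_t$ against the test point $\ptx_1$ for the second. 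Your route is more elementary (no projection decomposition or Pythagorean identity) at the small cost of the normalization $R(\ptx_1)=0$, which is harmless because first-order optimality at $\ptx_1$ is exactly what is needed for $R(\ptx)-R(\ptx_1)\geq\tfrac12\|\ptx-\ptx_1\|^2$. For the remaining step $\rate_{t-1}\|\tilde L_{t-1}\|_*\leq\sqrt{2\QQ t}$, the paper threads $\rate_{t-1}\leq\rate_{s-1}$ through the Cauchy--Schwarz sum and uses the isotuning identity $\rate_{s-1}\|\tilde\loss_s\|_*^2=2\delta_s$ directly, whereas you invoke the already-established lower bound of \cref{lem:DeltaT_bounds_ftrl} (hence \cref{thm:generic_DeltaT_lower_null}) and cancel $\Delta_{t-1}$ against $\rate_{t-1}$. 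Both yield $\sqrt{2\QQ(t-1)}$; yours is more modular, the paper's more self-contained, and your handling of the $\tilde L_{t-1}=0$ edge case matches the paper's convention.
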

\begin{proof}
Recall that $\tilde L_t = \sum_{s\in[t]} \tilde\loss_s$, and that if $\tilde L_t = 0$, then $\ptx_t = \ptx_1$.
We decompose the update 
$\ptx_{t} = \argmin_{\ptx\in\ptxset} \left\{\innerprod{\tilde L_{t-1}, \ptx} + \frac{R(\ptx)}{\rate_{t-1}}\right\}$
into 
the unconstrained update
$\hat \ptx_t = \argmin_{\ptx\in\Reals^N} \left\{\innerprod{\tilde L_{t-1}, \ptx} + \frac{R(\ptx)}{\rate_{t-1}}\right\}$
and the projection onto $\ptxset$,
$\ptx_t = \argmin_{\ptx\in\ptxset}\breg_R(\ptx, \hat \ptx_t)$.
Hence by the generalized Pythagorean identity
(\eg \citet{herbster1998regressor} and references therein),
for all $\ptx\in\ptxset$,
\begin{align*}
    \breg_R(\ptx, \ptx_t)  \leq \breg_R(\ptx, \hat \ptx_t)\,.
\end{align*}
Furthermore, for all $\ptx\in\ptxset$,
\begin{alphalign*}
    \breg_R(\ptx, \hat \ptx_t) &= R(\ptx) - R(\hat \ptx_t) - \innerprod{\ptx - \hat \ptx_t, \nabla R(\hat \ptx_t)} \\
    &\alphoverset{\leq}  R(\ptx)  + \|\ptx-\hat \ptx_t\|\,\|\nabla R(\hat \ptx_t)\|_* \\
    &\alphoverset{\leq} R(\ptx) + 
    \sqrt{2\breg_R(\ptx, \hat \ptx_t)} \rate_{t-1}\|\tilde L_{t-1}\|_* \,,
\end{alphalign*}
where
\alphnextref{} since $R(\cdot) \geq 0$ by assumption and using H\"older's inequality (generalized to dual norms),
\alphnextref{} using the 1-strong convexity of $R$ with respect to $\|\cdot\|$
and $\nabla R(\hat \ptx_t) = - \rate_{t-1}\tilde L_{t-1}$.
This inequality is of the form $x^2 \leq c + bx$ with $x, b, c \geq 0$, which implies 
$x \leq \sqrt{c} + b$ and thus
\begin{align*}
    \|\ptx-\ptx_t\| &\leq \sqrt{2\breg_R(\ptx, \ptx_t)}
    \leq \sqrt{2\breg_R(\ptx, \hat \ptx_t)}
    \ \leq\ \sqrt{2R(\ptx)} + 2\rate_{t-1}\|\tilde L_{t-1}\|_*\,.
\end{align*}
Finally, 
using Cauchy-Schwarz on the first inequality,
and \cref{ass:generic_isotuning},
\begin{align*}
    \rate_{t-1}^2\|\tilde L_{t-1}\|_*^2 &
    \ \leq\  \rate_{t-1}^2 (t-1)\sum_{s < t}\|\tilde{\loss}_{s}\|_*^2
    \ \leq\ \rate_{t-1}(t-1)\sum_{s < t}\rate_{s-1}\|\tilde{\loss}_{s}\|_*^2
    \ \leq\ \rate_{t-1}(t-1)2\Delta_{t-1} \leq 2\QQ t\,,
\end{align*}
which finishes the proof.
\end{proof}

We are now ready to prove the main result.

\begin{proof}[Proof of \cref{thm:FTRLnull}]
We split the regret in two terms:
\begin{align*}
    \regret_T(\ptx^*)
    &= \sum_{t\in[T]} \innerprod{\ptx_t - \ptx^*, \loss_t}
    \ =\  \sum_{t\in[T]\setminus\barT} \innerprod{\ptx_t - \ptx^*, \loss_t}
    + \sum_{t\in\barT}\innerprod{\ptx_t - \ptx^*, \loss_t}\,.
\end{align*}
The first term corresponds to the regret of FTRL with a time-varying learning rate
running on the pseudo-losses $\tilde{\loss}_t$, as per \cref{thm:ftrl_isotuning}:
\begin{align*}
    \sum_{t\in[T]\setminus\barT} \innerprod{\ptx_t - \ptx^*, \loss_t} =
    \sum_{t\in[T]} \innerprod{\ptx_t - \ptx^*, \tilde{\loss}_t} &\leq 
    \left(1+\frac{R(\ptx^*)}{\QQ}\right)\Delta_T\,,
\end{align*}
and $\Delta_T$ is bounded from above in \cref{lem:DeltaT_bounds_ftrl}.

The second term corresponds to the losses incurred on null updates, which are not accounted for by FTRL on the pseudo losses:
\begin{alphalign*}
    \sum_{t\in\barT}\innerprod{\ptx_t - \ptx^*, \loss_t}
    &\alphoverset{\leq}
    \sum_{t\in\barT}\|\ptx_t - \ptx^*\|\,\|\loss_t\|_* 
    \ \leq\  \max_{t\in\barT}\|\ptx_t - \ptx^*\|
    \sum_{t\in\barT}\|\loss_t\|_* \\
    &\alphoverset{=} \max_{t\in\barT}\|\ptx_t - \ptx^*\|
    \sum_{t\in\barT}\sqrt{\frac{2}{\QQ}}\delta_t
    \ \leq\ \max_{t\in\barT}\|\ptx_t - \ptx^*\|
    \sqrt{\frac{2}{\QQ}}\Delta_{\tau}\\
    &\alphoverset{\leq}
    2\min\{\diam\,, \sqrt{2R(\ptx^*)} + 2\sqrt{2\QQ \tau}\}\|\loss_\tau\|_*\,.
\end{alphalign*}
where
\alphnextref{} follows from H\"older's inequality (generalized to dual norms),
\alphnextref{} by definition of $\delta_t$ for $t\in\barT$,
and
\alphnextref{} from \cref{lem:max_travel} and \cref{thm:generic_DeltaT_null} (ii).

\paragraph{Lower bounding $\|\loss_\tau\|_*$.}
Since $\tau\in\barT$, 
from \cref{eq:isopoint_Deltatau}
we have $\Delta_{\tau-1} \leq \sqrt{\QQ/2}\|\loss_\tau\|_*$, 
hence $\|\loss_\tau\|_*^2 > \frac{2}{\QQ}\Delta_{\tau-1}^2$,
and the claim is proven by using the lower bound in \cref{lem:DeltaT_bounds_ftrl}.
\end{proof}

\section{SCALE-FREE MIRROR DESCENT: DETAILED PROOF}\label{apdx:md}

The proof is very similar to the FTRL case in \cref{apdx:ftrl}.

First we show that $\delta_t$ satisfies \cref{eq:generic_deltat_ob1} for $t\notin\barT$:
\begin{align}
    \delta^*_t = \positive{\frac1{\rate_{t-1}}(\phi_{t'}-\phi_t) + r_t} 
    &= \positive{\frac{1}{\rate_{t-1}}(\breg_R(\ptx^*, \ptx_{t'}) - \breg_R(\ptx^*, \ptx_t))
    + \innerprod{\ptx_t - \ptx^*, \loss_t}} \notag\\
    &\leq \innerprod{\ptx_t - \ptx_{t'}, \loss_t} - \frac{1}{\rate_{t-1}}\breg_R(\ptx_{t'}, \ptx_t) 
    \label{eq:deltat_breg1}\\
    &\leq \|\ptx_t - \ptx_{t'}\|\,\|\loss_t\|_* - \frac{1}{2\rate_{t-1}}\|\ptx_t - \ptx_{t'}\|^2 \notag\\
    &\leq \max_{z\in\Reals}\left\{
    \|\loss_t\|_*z - \frac{1}{2\rate_{t-1}}z^2
    \right\}
    \ = \ \frac{\rate_{t-1}}{2}\|\loss_t\|_*^2\ =\ \delta_t\,.\notag
\end{align}
where on the second line we used the proof of Lemma 2 from \citet{orabona2018solo}, 
where we take $w_t \leadsto \ptx_t, w_{t+1}\leadsto \ptx_{t'}, u\leadsto \ptx^*, \lambda\leadsto 1$ and $\breg_{R_t}(\cdot, \cdot) \leadsto \breg_R(\cdot, \cdot)/\rate_{t-1}$, 
and recalling that the regularizer $R$ is 1-strongly convex with respect to the  norm $\|\cdot\|$,
that is, $\breg_R(a, b) \geq \|a-b\|^2/2$.

Also note that from \cref{eq:deltat_breg1} we can deduce for $t\notin\barT$,
\begin{align}\label{eq:delta_breg2}
    \delta_t \geq 
    \frac{1}{\rate_{t-1}}\breg_R(\ptx_t, \ptx_{t'})\geq 0\,.
\end{align}

\begin{remark}
We could actually take $\delta_t = \frac{1}{\rate_{t-1}}\breg_R(\ptx_t, \ptx_{t'})$
without changing the regret bound.
\end{remark}

Before proving the main result, as for the FTRL case we first bound the travel distance.

\begin{lemma}[{Bounding $\|x_t - x_1\|$}]\label{lem:md_travel}
For \cref{alg:MD_unbounded}, for all $t\in\Naturals$, 
\begin{equation*}
    \|\ptx_{t+1} - \ptx_1\| \leq \sqrt{2\QQ t}\,.
    \qedhere
\end{equation*}
\end{lemma}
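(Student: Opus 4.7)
The plan is to track the scaled deviation $b_t := \Delta_t\|\ptx_{t+1} - \ptx_1\|$ and prove the stronger inequality $b_t \leq \sqrt{2\QQ t}\,\Delta_t$, from which the claim follows by dividing through by $\Delta_t$ (handling the degenerate case $\Delta_t = 0$ trivially, since then $\delta_s = 0$ for all $s \leq t$ and the online correction forces $\ptx_{t+1} = \ptx_1$). The key observation is that the online correction step in \cref{alg:MD_unbounded} gives the exact identity $\Delta_t(\ptx_{t+1} - \ptx_1) = \Delta_{t-1}(\ptx_{t'} - \ptx_1)$. Applying the triangle inequality on the right-hand side through $\ptx_t$ then yields $b_t \leq b_{t-1} + \Delta_{t-1}\|\ptx_{t'} - \ptx_t\|$.

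I would then bound the increment $\Delta_{t-1}\|\ptx_{t'} - \ptx_t\|$ case by case. If $t\in\barT$, then $\ptx_{t'} = \ptx_t$ and the increment vanishes. If $t\notin\barT$, then \cref{eq:delta_breg2} gives $\delta_t \geq \breg_R(\ptx_t, \ptx_{t'})/\rate_{t-1}$, and combining with the 1-strong convexity of $R$ yields $\|\ptx_{t'} - \ptx_t\|^2 \leq 2\rate_{t-1}\delta_t = 2\QQ\delta_t/\Delta_{t-1}$, so $\Delta_{t-1}\|\ptx_{t'} - \ptx_t\| \leq \sqrt{2\QQ\Delta_{t-1}\delta_t}$. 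Either way, the recurrence $b_t \leq b_{t-1} + \sqrt{2\QQ\Delta_{t-1}\delta_t}$ holds, and unrolling from the base case $b_0 = 0$ gives $b_t \leq \sqrt{2\QQ}\sum_{s=1}^t \sqrt{\Delta_{s-1}\delta_s}$.

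To finish, I would bound $\Delta_{s-1} \leq \Delta_t$ inside the sum and apply Cauchy--Schwarz in the form $\sum_s \sqrt{\delta_s} \leq \sqrt{t\sum_s \delta_s}$:
\begin{equation*}
    \sum_{s=1}^t \sqrt{\Delta_{s-1}\delta_s} \ \leq\ \sqrt{\Delta_t}\sum_{s=1}^t \sqrt{\delta_s} \ \leq\ \sqrt{\Delta_t}\sqrt{t\,\Delta_t} \ =\ \sqrt{t}\,\Delta_t\,,
\end{equation*}
which gives $b_t \leq \sqrt{2\QQ t}\,\Delta_t$ as desired. I do not anticipate a serious obstacle; the main subtlety is ensuring that \cref{eq:delta_breg2} yields the clean constant (without an extra factor of $2$), which relies on picking the correct ordering of the Bregman arguments $\breg_R(\ptx_t, \ptx_{t'})$ and invoking the generalized Pythagorean property against the unconstrained mirror step, but this is already packaged into \cref{eq:delta_breg2} as stated in the excerpt.
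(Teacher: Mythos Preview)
Your proof is correct and follows essentially the same route as the paper: online correction to relate $\ptx_{t+1}-\ptx_1$ to $\ptx_{t'}-\ptx_1$, triangle inequality through $\ptx_t$, the bound $\|\ptx_{t'}-\ptx_t\|\leq\sqrt{2\QQ\delta_t/\Delta_{t-1}}$ from strong convexity and \cref{eq:delta_breg2}, unrolling, and Cauchy--Schwarz. The only cosmetic difference is that the paper tracks $c_t=\sqrt{\Delta_t}\,\|\ptx_{t+1}-\ptx_1\|$ (obtained by applying \cref{lem:ocr_convexity} to the \emph{squared} norm and then taking a square root), which gives the cleaner recursion $c_t\leq c_{t-1}+\sqrt{2\QQ\delta_t}$ and avoids the extra step of bounding $\Delta_{s-1}\leq\Delta_t$ inside the sum; your version instead exploits the exact affine identity $\Delta_t(\ptx_{t+1}-\ptx_1)=\Delta_{t-1}(\ptx_{t'}-\ptx_1)$ and tracks $b_t=\Delta_t\|\ptx_{t+1}-\ptx_1\|$, which is equally valid.
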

\begin{proof}
Using \cref{lem:ocr_convexity} with $f(\ptx) = \|\ptx - \ptx_1\|^2$,
\begin{align*}\resetalph{}
    \Delta_t\|\ptx_{t+1} - \ptx_1\|^2
    &\leq \Delta_{t-1} \|\ptx_{t'} - \ptx_1\|^2\,, \\
    \sqrt{\Delta_t}\|\ptx_{t+1} - \ptx_1\|
    &\alphoverset{\leq} \sqrt{\Delta_{t-1}} (\|\ptx_{t'} - \ptx_t\| + \|\ptx_t - \ptx_1\|) \\
    &\alphoverset{\leq} \sqrt{\Delta_{t-1}} (\sqrt{2\breg_R(\ptx_t, \ptx_{t'})} + \|\ptx_t - \ptx_1\|) \\
    &\alphoverset{\leq} \sqrt{2\QQ\delta_t} + \sqrt{\Delta_{t-1}}\|\ptx_t - \ptx_1\| \\
    &\alphoverset{\leq}
    \sum_{s\in[t]} \sqrt{2\QQ\delta_s}\\
    &\alphoverset{\leq}
    \sqrt{t\sum_{s\in[t]} 2\QQ\delta_s} = \sqrt{2\QQ t\Delta_t}\,,\\
    \|\ptx_{t+1} - \ptx_1\|
    &\leq \sqrt{2\QQ t}\,,
\end{align*}\resetalph{}
where
\alphnextref{} follows by $\Delta_t = \QQ/\rate_t$,
\alphnextref{} by 1-strong convexity of $\breg_R$ with respect to $\|\cdot\|$,
\alphnextref{} 
using \cref{eq:delta_breg2} for $t\notin\barT$
with $\Delta_{t-1} = \QQ/\rate_{t-1}$,
and $\breg_R(\ptx_t, \ptx_t') = 0$ for $t\in\barT$,
\alphnextref{} by induction, 
\alphnextref{} by Cauchy-Schwarz.
\end{proof}

We can now prove the main result.

\begin{proof}[Proof of \cref{thm:MDnull}]
From \cref{thm:generic_regret_null}, since $\phi_{T+1} \geq 0$ we can simplify to
\begin{align*}
    \regret_T(\ptx^*) \leq \left(1+\frac{\phi_1}{\QQ}\right)\Delta_T + \sum_{t\in\barT}r_t\,.
\end{align*}
Define $\hat{g}_t(x) = \QQ\frac{\|\loss_t\|_*^2}{2x}$
and so $\iso_{y > 0}(\hat{g}_t(y),\ y) = \sqrt{\QQ/2}\|\loss_t\|_*$.
Then from \cref{thm:generic_DeltaT_null} (i) and \cref{lem:Delta_sqrt_short} we have
\begin{align*}
    \Delta_T &\leq 
    \hat\Delta_T
    + \sqrt{\frac{\QQ}{2}}\max_{t\in[T]}\|\loss_t\|_*\,,
    &
    \hat\Delta_T &\sqrt{\QQ\sum_{t\in[T]} \|\loss_t\|_*^2}
    \,.
\end{align*}

Since $\delta_t = \sqrt{\QQ/2}\|\loss_t\|_*$ for $t\in\barT$,
we have,
using \cref{thm:generic_DeltaT_null} (ii) on the inequality, 
\begin{align*}
    \sum_{t\in\barT} \|\loss_t\|
    &=\sqrt{\frac{2}{\QQ}} \sum_{t\in\barT} \delta_t 
    \leq 
    2\|\loss_\tau\|_*\,.
\end{align*}
Therefore, using H\"older's inequality we have
\begin{align*}
    \sum_{t\in\barT} r_t 
    &\ =\ \sum_{t\in\barT}\innerprod{\ptx_t - \ptx^*, \loss_t}
    \ \leq\  \max_{t\in\barT}\|\ptx^* - \ptx_t\| \sum_{t\in\barT}\|\loss_t\|_*
    \ \leq\  2\max_{t\in\barT}\|\ptx^* - \ptx_t\| \|\loss_\tau\|_* \\
    &\ \leq\  2\min\{\diam,\ \|\ptx^* - \ptx_1\| + \max_{t\in\barT}\|\ptx_1 - \ptx_t\|\}\|\loss_\tau\|_*\,.  \\
    &\ \leq\ 2\min\{\diam,\ \|\ptx^* - \ptx_1\| + \sqrt{2\QQ\tau}\}\|\loss_\tau\|_*\,.
\end{align*}
where we used \cref{lem:md_travel} on the last line.
Finally, the result follows by putting it all together.
\end{proof}

\begin{example}[Scale-Free Online Gradient Descent, MD style]\label{ex:md_ogd}
When $\breg_R(x, y) = \|x - y\|_2^2/2$ with $\|\cdot\| = \|\cdot\|_*= \|\cdot\|_2$
in a bounded decision set of diameter $\diam$,
then by choosing $\ptx_1$ at the center of $\mathcal{X}$,
using Jung's theorem we have
$\phi_1 \leq (\diam/\sqrt{2})^2/2$, so taking $\QQ = \diam^2/4$ gives,
taking $\loss_t = \nabla_t$,
\begin{align*}
    \regret_T(\ptx^*) \leq
    \diam\left(\sqrt{\sum_{t\in[T]} \|\nabla_t\|_2^2}
      + (2+\tfrac{1}{\sqrt{2}})\max_{t\in[T]}\|\nabla_t\|_2
    \right)
    \,.
\end{align*}
Let us call isoGD this instance of our algorithm.
As far as we are aware, this bound has the best leading factor for bounded-domain OGD with an adaptive learning rate when using a quadratic regularizer.
SOLO-FTRL has a leading factor of $\sqrt{2.75}$,
for AdaFTRL it is $5.3/2$,
while uncentered bounds (including that of Scale-Free Mirror Descent~\citep{orabona2018solo})
have at least a leading factor of $\sqrt{2}$ (\eg \citet{hazan2016oco}).
\citet{mcmahan2017survey} also obtains a centered bound with the optimal leading factor, but it is not adaptive and it is assumed that the largest gradient is known to tune a (non-adaptive) learning rate so as to avoid the off-by-one issue.

For non-quadratic regularizers and for bounded domains, it must be noted that the FreeGrad algorithm~\citep{mhammedi2020freerange} achieves a scale-free regret of $O(\|\ptx^* - \ptx_1\|\sqrt{V_T\ln (\|\ptx^*-\ptx_1\|\sqrt{V_T})} + \diam\max_t \|\loss_t\|)$, which is qualitatively better than our quadratic-regularizer bounds but features a (slowly) growing $\sqrt{\ln T}$ term.
In practice, the exponential component in the FreeRange update does make it move fast toward $\ptx^*$, but it also tends to make it take large strides back and forth once in the vicinity of $\ptx^*$.
Let us take a concrete example.
Assume $\ptxset$ is an unknown but bounded superset of $[-10, 10]$, and
let $\ptx^* = 3$ and $\ptx_1=-10$,
define the loss function $\loss(\ptx)= \ptx^*-\ptx$ if $\ptx < \ptx^*$ and 
$\loss(\ptx) = \exp(\ptx-\ptx^*)-1$ otherwise.
We consider FreeGrad with loss-clipping and restarting
with hints being the current maximum observed absolute gradient~\citep{mhammedi2020freerange}
(but without projection within an increasing domain).
For isoGD we take $\QQ=1$ agnostically so the leading factor of isoGD is  $|\ptx-\ptx^*|^2 = 13^2$.
See the results in \cref{fig:isogd_freerange} after $T=100\,000$.
Despite coming close to $x^*$ very early ($t<20$), FreeGrad incurs a large loss, restarts, moves more carefully then oscillates around $\ptx^*$ with large strides,
even after $10^5$ rounds, incurring a large cumulative regret, both in the loss function and in the distance to $\ptx^*$, even on average.
By contrast, isoGD moves `slowly' toward $\ptx^*$ but once in the vicinity of $\ptx^*$ ($t < 180$) it converges to $\ptx^*$ significantly faster than FreeGrad.
While this is of course just a single example, 
and that without doubt there are examples where FreeGrad significantly outperforms isoGD (say, probably, if $\|\ptx^*-\ptx_1\|$ is very large),
it does demonstrate that FreeGrad is not \emph{strictly} better than isoGD even for bounded domains.
\end{example}

\begin{figure}[htbp!]
    \centering
\includegraphics[width=0.49\textwidth]{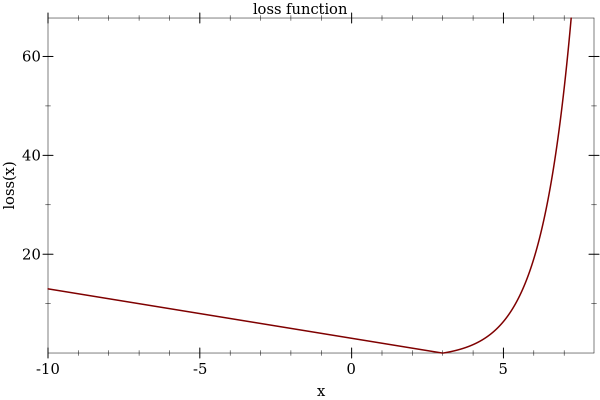}
\includegraphics[width=0.49\textwidth]{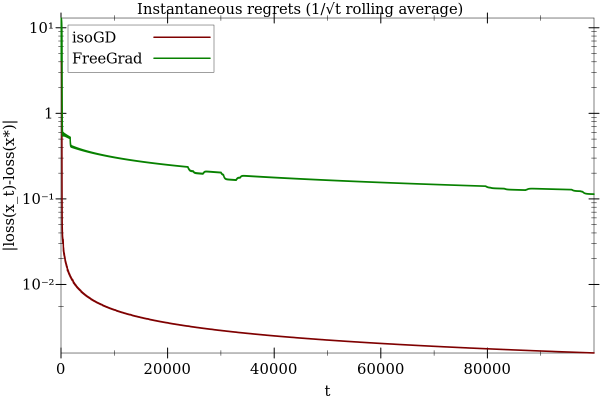}
\includegraphics[width=0.49\textwidth]{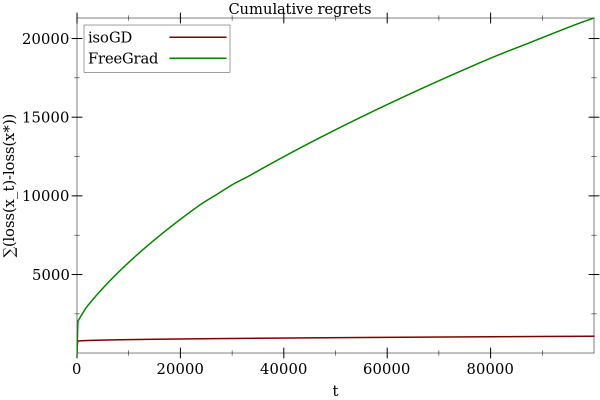}
\includegraphics[width=0.49\textwidth]{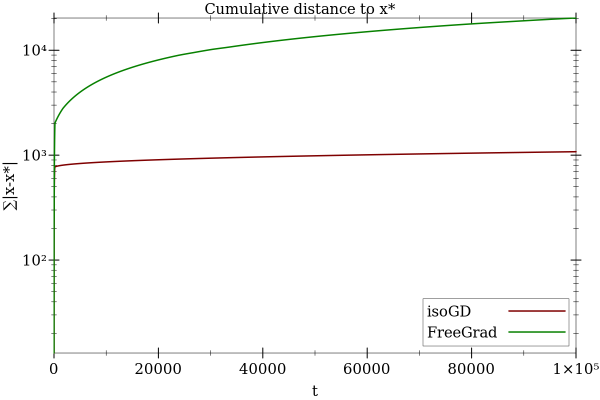}
\includegraphics[width=0.49\textwidth]{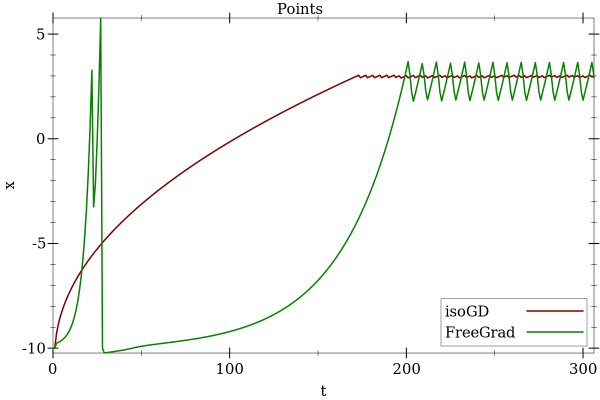}
\includegraphics[width=0.49\textwidth]{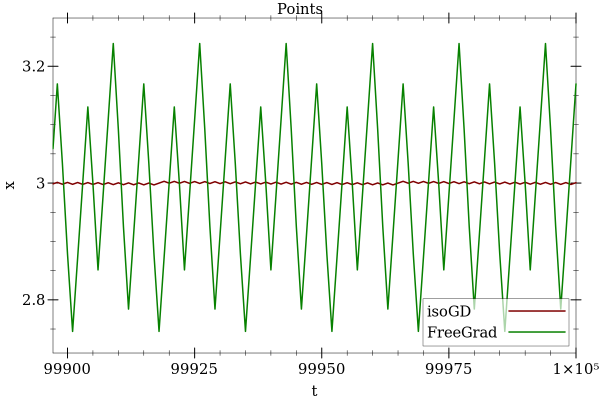}
\includegraphics[width=0.49\textwidth]{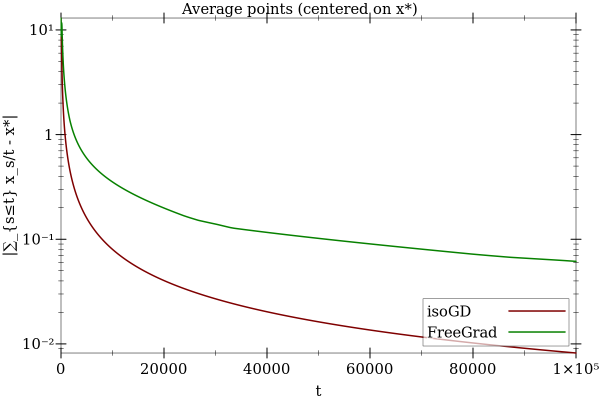}
    \caption{Behaviour of isoGD ($\QQ=1$) and FreeRange on performing gradient descent 
    on $\loss(\ptx)= \ptx^*-\ptx$ if $\ptx < \ptx^*$ and $\loss(\ptx) = \exp(\ptx-\ptx^*)-1$ otherwise.
    Note the log scales on some of the y-axis.}
    \label{fig:isogd_freerange}
\end{figure}

\begin{example}[MD on the simplex]\label{ex:md_simplex}
When the Bregman divergence is the the relative entropy $\breg_R(a, b) = \sum_{i\in[N]} a_i \ln (a_i/b_i)$, and $\ptxset=\probsimplex{N}$ is the probability simplex,
then 
$\|\cdot\| = \|\cdot\|_1$ and $\|\cdot\|_* = \|\cdot\|_{\infty}$,
$\phi_1 \leq \ln N$, 
and $\diam = 2$
so taking $\QQ = \ln N$ gives
\begin{align*}
    \regret_T(\ptx^*) \leq
    2\sqrt{\ln(N) \sum_{t\in [T]} \|\loss_t\|_\infty^2}
        + (4+ \sqrt{2\ln N})\max_{t\in[T]}\|\loss_t\|_\infty\,.
\end{align*}
See also \cref{apdx:adahedge} for a more adaptive so-called second order bound with excess losses like AdaHedge~\citep{derooij2014follow} but with a Mirror-Descent-style projection, and a loss translation lemma on the simplex
(\cref{lem:simplex_translate_losses}), which applies to the bound above.
\end{example}

\section{ISOML-PROD DETAILED PROOF}\label{apdx:mlu-prod}

\begin{proof}[Proof of \cref{thm:mlu-prod}]
Recall \cref{eq:mluprod_regret_simple}:
\begin{align*}
    \regret_{i, T} = \sum_{t\in[T]}(\bar\loss_t - \loss_{i, t}) \leq \left(1+\frac{\ln \ptx_{i, T+1}}{\ln N}\right)\Delta_{i, T}\,.
\end{align*}
We proceed to bound $\ptx_{i, T+1}$ and $\Delta_{i, T}$ in turn.

\paragraph{Bounding $\ptx_{i, T+1}$.}
For $t\notin\barT$, combining the online correction \cref{eq:generic_ob1_ocr}
with the update $\ptx_{i, t'}$ of \cref{eq:mlprod},
\begin{align}
    \sum_{j\in[N]} \ptx_{j, t+1} 
    &=
    \sum_{j\in[N]}\left[\ptx_{j, t'}
    \frac{\Delta_{j,t-1}}{\Delta_{j, t}}
    + \frac{\delta_{j, t}}{\Delta_{j, t}}\ptx_{j, 1}\right] \notag\\
    &=
    \sum_{j\in[N]}\left[\ptx_{j, t}(1+\rate_{j, t-1}(\bar\loss_t - \loss_{j,t}))
    \frac{\Delta_{j,t-1}}{\Delta_{j, t}}
    + \frac{\delta_{j, t}}{\Delta_{j, t}}\ptx_{j, 1}\right] \notag\\
    &\leq 
    \sum_{j\in[N]}\left[\ptx_{j, t}(1+\rate_{j, t-1}(\bar\loss_t - \loss_{j,t}))
    + \frac{\delta_{j, t}}{\Delta_{j, t}}\ptx_{j, 1}\right] \notag\\
    &= 
    \sum_{j\in[N]}\ptx_{j, t}
    + \underbrace{\sum_{j\in[N]}\ptx_{j, t}\rate_{j, t-1}\bar\loss_t - \sum_{j\in[N]}\ptx_{j, t}\rate_{j, t-1}\loss_{j,t}}_{=0 \text{ by \cref{eq:mlprod_barloss}}}
    + \sum_{j\in[N]}\frac{\delta_{j, t}}{\Delta_{j, t}} \ptx_{j, 1} \notag\\
    &\leq \sum_{j\in[N]}\ptx _{j, t} + \sum_{j\in[N]}\frac{\delta_{j, t}}{\Delta_{j, t}}\ptx_{j, 1}\,.
    \label{eq:mlprod_sum_xt_4281}
\end{align}
Observe that \cref{eq:mlprod_sum_xt_4281} also holds straightforwardly also for $t\in\barT$ since $\ptx_{t'}=\ptx_t$---this would not be the case if null updates were not performed on all coordinates at once.
Hence, by summing over $t\in[T]$ and telescoping we have, using $\ptx_{j, 1} = 1$,
\begin{align*}
    \sum_{j\in[N]}\ptx_{j, T+1} &\leq N + \sum_{j\in[N]} \sum_{t\in[T]} \frac{\delta_{j, t}}{\Delta_{j, t}}\,.
\end{align*}
(Note that if $\Delta_{j, t}=0$, then also $\delta_{j,t}=0$ and we take 
$\delta_{j, t}/\Delta_{j, t-1} = 0$, which is validated by \cref{eq:generic_ob1_ocr} since we have $\ptx_{t+1} = \ptx_{j, t'}$.)
Define $C_{j, T} = \sum_{t\in[T]} \frac{\delta_{j, t}}{\Delta_{j, t}}$, then 
\begin{align*}
    \ptx_{i, T+1} \leq \sum_{j\in[N]} \ptx_{j, T+1} \leq N+N\max_{j\in[N]}C_{j, T}\,.
\end{align*}

\paragraph{Bounding $\Delta_{i, T}$.}
Recall that $\tau = \max \barT$ is the last step where a null update is performed, and note that $\tau\geq 1$ when $T \geq 1$ unless the losses are 0.
Let $\mu\in[N]$ be such that $\rate_{\mu, \tau-1}|\bar\loss_t - \loss_{\mu, t}| > 1/2$ ($\mu$ necessarily exists by definition of $\barT$).
This implies that $\invrate_{\mu, \tau-1} < 2S$,
and thus $\Delta_{\mu, \tau-1} < 2S\ln N$.
Since null updates are performed on all coordinates at the same time with $\delta_{\cdot,t} = s_t$ for $t\in\barT$, then for all $i\in[N]$ we have
\begin{align}\label{eq:deltait_88}
    \sum_{t\in \barT} \delta_{i, t}
    \ =\ \sum_{t\in \barT} \delta_{\mu, t}
    \ \leq\ \Delta_{\mu, \tau}\ =\ \Delta_{\mu, \tau-1} + \delta_{\mu, \tau}\ \leq\ 2S\ln N + S\,.
\end{align}
Now let us focus on a particular coordinate $i$ (and we omit the index $i$ in $\hat{g}$ and other new definitions below). Define
\begin{align*}
    \forall t \notin\barT: \quad \forall y > 0:\hat{g}_t(y) &= \frac{\QQ r_{i, t}^2}{2y}\,,\quad \forall y \leq 0: \hat{g}_t(y) = \infty\,,&
    \forall t \in\barT: \hat{g}_t(.) &= 0\,.
\end{align*}
Let us define 
$\check\Delta_t$ to be like $\Delta_{i,t}$ but subtracting the terms $\delta_{i,\cdot}$ of the null update steps:
\begin{align*}
    \check\Delta_t &= \check\Delta_{t-1} + \delta_{i,t}\indicator{t\notin\barT}\,.
\end{align*}
Let $A_t = \sum_{s\in\barT, s\leq t }\delta_{i,s}$,
then $\check\Delta_t = \Delta_{i,t} - A_t$.
We proceed to express $\delta_{i, t}\indicator{t\notin\barT}$ as a function of $\check\Delta_t$.
For $t\in\barT$, we have $\delta_{i,t}\indicator{t\notin\barT} =0 = \hat{g}_t(\check\Delta_t)$.
For $t\notin\barT$, using \cref{cor:rate_log_approx} on the definition of $\delta_{i,t}$
(\cref{eq:isomlprod_deltat})
for $t\notin\barT$,
\begin{align*}
    \delta_{i,t} &\leq \frac{r_{i,t}^2/2}{\frac1{\rate_{t-1}}-|r_{i,t}|}
    \ =\  \frac{qr_{i,t}^2/2}{\Delta_{i, t-1}-q|r_{i,t}|} \\
    &=\  \hat{g}_t(\Delta_{i, t-1}-\QQ |r_{i, t}|)
    \ =\  \hat{g}_t(\Delta_{i, t}-\delta_{i,t}-\QQ |r_{i, t}|)
    \ =\ \hat{g}_t(\check\Delta_t+ A_t - \delta_{i,t} - \QQ |r_{i, t}|)
    \ \leq\ \hat{g}_t(\check\Delta_t - \delta_{i,t} - \QQ |r_{i, t}|)\,,
\end{align*}
where the last inequality is because $\hat{g}$ is monotone decreasing.
Therefore, by \cref{lem:isotuning_offset}, with the isotuning sequence $\tuple{\hat\Delta, \hat{g}}$,
\begin{equation*}
    \check\Delta_T \leq \hat\Delta_T + \max_{t\in[T]}\{\delta_{i,t} + \QQ |r_{i, t}|\}\,.
\end{equation*}
Then, using \cref{lem:Delta_sqrt_short} on $\hat\Delta_T$,
\begin{align*}
    \Delta_{i,T}& = \check\Delta_T + A_T \leq \sqrt{\QQ \sum_{t\notin\barT} r_{i, t}^2}
    + \max_{t\notin\barT}\{\delta_{i,t} + \QQ |r_{i, t}|\} + A_T
    \ \leq\ \sqrt{V_{i, T}\ln N}
    + S/5 + S\ln N  + 2S \ln N + S
\end{align*}
where $A_T$ is bounded in \cref{eq:deltait_88}, and 
$\delta_{i,t} = r_{i, t}\left(1-\frac{1}{u}\ln(1+u)\right)$,
with $u = \rate_{i, t-1}r_{i, t}$ and $|u| \leq 1/2$,
 can be bounded by $r_{i, t}/5\leq S/5$
since then $1-\frac1u\ln(1+u) \leq 1/5$.

\paragraph{Finally.}
We can now put it all together:
\begin{align*}
    \regret_{i, T} &\leq
    \left(
        1+\frac{\ln \left(N + N\max_{j\in[N]}C_{j, T}\right)}{\ln N}
    \right)\left(
        \sqrt{V_{i, T}\ln N} + 3S\ln N + \tfrac65 S
    \right) \\
    &\leq 
    \left(
        2+\frac{\ln \left(1 + \max_{j\in[N]}C_{j, T}\right)}{\ln N}
    \right)\left(
        \sqrt{V_{i, T}\ln N} + S(\tfrac65+3\ln N)
    \right)\,.
\end{align*}
Let $t_0 = \min\{t\in[T]:\Delta_{j, t} > 0\}$, then
for all $\theta\in\{t_0, \dots, T\}$,
\begin{align*}
    C_{j, T} = 
    \sum_{t\in[T]} \left(1-\frac{\Delta_{j, t-1}}{\Delta_{j, t}}\right)
     = \sum_{t=t_0+1}^T \left(1-\frac{\Delta_{j, t-1}}{\Delta_{j, t}}\right)
     = \sum_{t=t_0+1}^\theta \underbrace{\left(1-\frac{\Delta_{j, t-1}}{\Delta_{j, t}}\right)}_{\leq 1}
     + \sum_{t=\theta}^T \ln \frac{\Delta_{j, t}}{\Delta_{j, t-1}}
     \leq \theta - t_0 + \ln \frac{\Delta_{j, T}}{\Delta_{j, \theta}}
\end{align*}
where we used $1-1/x \leq \ln x$ and telescoping.
Hence,
\begin{align*}
    C_{j, T} \leq \min_{\theta\in\{t_0, \dots, T\}} \theta - t_0 + \ln \frac{\Delta_{j, T}}{\Delta_{j, \theta}}\,.
\end{align*}
From this we can deduce all the following simultaneously:
\begin{align*}
    C_{j, T} &\leq T\,, &&\text{with }\theta = T\,, \\
    C_{j, T} &\leq o(1) + \ln ST  &&\text{with } \theta = \argmin_{t\in[T]}\left\{\Delta_{j, t} \geq \frac{\Delta_{j, T}}{ST}\right\}
    \text{since } \Delta_{j, T} = O(\sqrt{ST}\ln T)\,, \\
    &&&\text{then } \frac{\Delta_{j, T}}{ST} = o(1), \text{ and thus } \theta = t_0 + o(1)\,, \\
    C_{j, T} &\leq \tau + \ln \frac{ST}{s_\tau}
    &&\text{ with } \theta = \tau = \max \barT
    \text{ since } \Delta_{j, \tau} \geq \delta_{j, \tau} = s_\tau \\
    &&&\text{ and } \Delta_{j, T} \leq ST \text{ since } \delta_{j, t} \leq S \text{ for all } t \in [T]\,.
    \qedhere
\end{align*}
\end{proof}

\begin{remark}\label{rmk:boa}
With a slight modification of \cref{alg:mlu-prod}, we can obtain a variant of 
the Bernstein Optimal Aggregation (BOA) algorithm \citep{wintenberger2016bernstein} (see \cref{foot:boa} in main text)
with online correction and null updates and prove the same bound as for isoML-Prod under the same constraints.
Replace \cref{eq:mlprod} with
\begin{align*}
    \forall i\in[N]: \ptx_{i, t'} 
    &= \ptx_{i, t}f(\rate_{i, t-1}r_{i, t})
    &\text{ with }
    f(y) &= \exp\left(y-\frac{y^2/2}{1-|y|}\right)\,.
\end{align*}
Recall that $\phi_i(\ptx) = -\ln \ptx_{i}$ and $\ptxset = [0, \infty)^N$.
To satisfy \cref{eq:generic_deltat_ob1} we take
\begin{align*}
    \delta_{i, t} = \delta^*_{i, t} = \positive{\frac{1}{\rate_{i, t-1}}(\phi_{i, t'} - \phi_{i, t}) + r_{i, t}}
    = \rate_{i, t-1}\frac{r_{i, t}^2/2}{1-\rate_{i, t-1}|r_{i, t}|}\,.
\end{align*}
We also need to show that \cref{eq:mlprod_sum_xt_4281} still holds:
We do this by reducing it to the isoML-Prod case by using $f(y) \leq 1+y$ for $|y| < 1$
(from \cref{lem:log_approx})
with $y=\rate_{i, t-1}r_{i, t}$;
though as for isoML-Prod we enforce $|y| \leq 1/2$ to bound $\max_t \delta_{i,t} \leq S$
(which is slightly worse than $S/5$ for isoML-Prod).
This constraint means that we need to perform null updates in the same conditions as for isoML-Prod, and we also still need null updates to be performed on all coordinates at the same time.
Therefore the bound of \cref{thm:mlu-prod} holds also for this variant of BOA.
Note that \cref{lem:log_approx} also saves a factor $\sqrt{2}$ compared to prior work
where $\exp(y - y^2)$ is used instead of $\exp(y-y^2/(2(1-|y|)))$.
\end{remark}

\begin{remark}\label{rmk:isoprod_x*}
It is straightforward to adapt the proof to allow for different initial weights $\ptx_{i, 1}$ that may depend on $i$, as well as extending to a growing number of experts~\citep{mourtada2017growing}.
For example, if we can ensure that $\sum_{i\in[N]} \ptx_{i, 1} \leq 1$ (and still $\ptx_{i, 1}\geq 0$), then we would have $\phi_{i,1} = \ln\frac{1}{\ptx_{i, 1}}$ (instead of 0 in \cref{eq:mluprod_regret_simple})
but from \cref{eq:mlprod_sum_xt_4281} we have by telescoping
\begin{align*}
    \ptx_{i, T+1} \leq \sum_{j\in[N]} \ptx_{j, 1} + \sum_{j\in[N]} \ptx_{j, 1} C_{j, T}
    \leq 1 + \innerprod{\ptx_1, C_{\cdot,T}}
\end{align*}
and setting $\QQ_i = \ln \frac{1}{\ptx_{i, 1}} = \phi_{i, 1}$ we obtain the regret bound,
using $\regret_T(\ptx^*) = \innerprod{\ptx^*,(\regret_{i, T})_{i\in[N]}}$ on the simplex,
\begin{align*}
    \regret_T(\ptx^*) &\leq 
    \sum_i \ptx^*_i
    \left(
        1+\frac{\phi_{i, 1}}{\QQ_i}+\frac{\ln (1 + \innerprod{\ptx_1, C_{\cdot,T}})}{\QQ_i}
    \right)\left(
        \sqrt{\QQ_i V_{i, T}} + 3S\QQ_i + 2S
    \right)  \\
    &\leq 
     \left(
        2+\frac{\ln (1 + \innerprod{\ptx_1, C_{\cdot,T}})}{\min_i \ln\frac{1}{\ptx_{i, 1}}}
    \right)\left(
        \sqrt{\text{H}(\ptx^*, \ptx_1)\sum_i \ptx^*_i V_{i, T}} + 3S\text{H}(\ptx^*, \ptx_1) + 2S
    \right) \,,
\end{align*}
where $\text{H}(\ptx^*, \ptx_1) = \sum_i \ptx^*_i\ln\frac{1}{\ptx_{i, 1}}$ is the cross-entropy of $\ptx^*$ and $\ptx_1$ (and is equal to the relative entropy $\RE{\ptx^*}{\ptx_1}$ plus the constant entropy term $\text{H}(\ptx^*, \ptx^*)$),
and we used Cauchy-Schwarz on $\sum_i \ptx^*_i \sqrt{\QQ_i V_{i, T}} = \sum_i \sqrt{\ptx^*_i \QQ_i} \sqrt{\ptx^*_i V_{i, T}}
\leq \sqrt{\sum_i \ptx^*_i \QQ_i}\sqrt{\sum_i \ptx^*_iV_{i, T}}$.
Finally, note that $C_{j, T}$ is independent of $\QQ_j$.
We recover exactly the bound of \cref{thm:mlu-prod} when $\ptx_{i, 1} = 1/N$.
\end{remark}

\begin{lemma}\label{lem:log_approx}
For all $x\in(-1, 1)$, 
\begin{equation*}
    \ln(1+x) \geq x - \frac{x^2}{2(1-|x|)}\,.
    \qedhere
\end{equation*}
\end{lemma}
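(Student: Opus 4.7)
The plan is to verify the inequality by direct Taylor expansion, splitting into the cases $x \in [0,1)$ and $x \in (-1,0)$. In both cases, I will compare the series expansion of $x - \ln(1+x)$ with the geometric expansion of $\frac{x^2}{2(1-|x|)}$. Specifically, for $|x|<1$, I would write
\[
x - \ln(1+x) = \sum_{k\geq 2} \frac{(-1)^k x^k}{k},
\qquad
\frac{x^2}{2(1-|x|)} = \frac12\sum_{k\geq 2}|x|^k,
\]
using standard convergent expansions.

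For $x\in[0,1)$, the series for $x-\ln(1+x)$ is alternating with decreasing absolute values, so in particular $x - \ln(1+x) \leq \sum_{k\geq 2} x^k/k$; then since $1/k \leq 1/2$ for $k\geq 2$, I would bound this further by $\sum_{k\geq 2}x^k/2 = \frac{x^2}{2(1-x)}$, which is exactly what is needed. For $x\in(-1,0)$, setting $y=-x\in(0,1)$ gives $x-\ln(1+x) = -y - \ln(1-y) = \sum_{k\geq 2} y^k/k$ (now with all positive terms), and the same termwise comparison $y^k/k \leq y^k/2$ yields $x-\ln(1+x) \leq \frac{y^2}{2(1-y)} = \frac{x^2}{2(1-|x|)}$. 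Rearranging gives the claim in both cases.

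There is essentially no hard step: the only thing to be mildly careful about is that the two cases use slightly different arguments (alternating-series truncation for $x\geq 0$ versus a direct termwise bound on a nonnegative series for $x<0$), but in both the key estimate $1/k \leq 1/2$ for $k\geq 2$ closes the gap. As an alternative, one could instead define $f(x) = \ln(1+x) - x + \frac{x^2}{2(1-|x|)}$, observe $f(0)=0$, and compute $f'$ to show it is nonnegative on $[0,1)$ (and nonpositive on $(-1,0]$ after the sign flip), but the Taylor route is cleaner and avoids case-splitting a derivative across $0$.
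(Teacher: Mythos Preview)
Your proof is correct and takes a genuinely different route from the paper. The paper proceeds by defining, for $x\in[0,1)$, the auxiliary function $g_+(x)=\tfrac12 x^2+(1-x)(\ln(1+x)-x)$, checking $g_+(0)=0$ and $g_+'(x)\geq 0$, and then doing the analogous computation with $g_-(x)=\tfrac12 x^2+(1+x)(\ln(1+x)-x)$ on $(-1,0]$. Your approach avoids derivatives entirely by comparing the Taylor series of $x-\ln(1+x)$ termwise against the geometric series $\tfrac12\sum_{k\geq 2}|x|^k$, using only the trivial bound $1/k\leq 1/2$ for $k\geq 2$. Both arguments split into the same two cases, but yours is shorter and more transparent; the paper's version has the small advantage of not appealing to convergence of infinite series.

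One minor remark on your write-up: in the $x\in[0,1)$ case, the phrase ``alternating with decreasing absolute values'' is not actually what justifies $\sum_{k\geq 2}(-1)^k x^k/k\leq\sum_{k\geq 2}x^k/k$; that inequality is just termwise comparison (each $(-1)^k\leq 1$). The alternating-series property would instead give you the stronger bound $x-\ln(1+x)\leq x^2/2$, which already suffices since $x^2/2\leq x^2/(2(1-x))$. Either way the conclusion stands.
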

\begin{proof}
For $x \in [0, 1)$, define $g_+(x) = \tfrac12x^2 + (1-x)(\ln(1+x) - x)$.
Then $g_+'(x) = x - (\ln(1+x) - x) + \frac{1-x}{1+x} -(1-x) \geq 0$,
hence $g_+$ is increasing on $[0, 1)$ and since $g_+(0)=0$ then $g_+(x) \geq 0$ for all $x \in[0, 1)$.

Similarly, for $x \in (-1, 0]$, define $g_-(x) = \tfrac12 x^2 + (1+x)(\ln(1+x) - x)$.
Then $g_-'(x) = x + (\ln(1+x) - x) + 1 -(1+x)=\ln(1+x)-x \leq 0$
and since $g_-(0)=0$ then $g_-(x) \geq 0$ for all $x\in(-1, 0]$, which concludes the proof.
\end{proof}

{
\renewcommand{\themytheorem}{\ref{lem:log_approx}a}
\begin{corollary}\label{cor:rate_log_approx}
For all $\rate x\in(-1, 1)$ with $\rate > 0$,
\begin{align*}
    x-\frac1{\rate}\ln(1+\rate x) \ \leq\  \frac{x^2/2}{\frac1{\rate}- |x|}\,.
    &\qedhere
\end{align*}
\end{corollary}
\begin{proof}
Follows straightforwardly from \cref{lem:log_approx}.
\end{proof}
}
\addtocounter{mytheorem}{-1}
\section{ISOHEDGE: ADAHEDGE, MIRROR-DESCENT STYLE}\label{apdx:adahedge}

As mentioned before, the present paper is building upon the balancing of the regret idea of \citet{derooij2014follow} for AdaHedge.
We show how to build a variant of AdaHedge that uses Mirror Descent-style projection with online correction based on \cref{alg:generic_ob1} without null updates,
and straightforwardly recover the same regret bound with our tools, with a slight generalization.

We assume $\ptxset = \probsimplex{N}$.
Recall that the loss of the algorithm is $\bar\loss_t = \sum_{i\in[N]}\ptx_{i, t}\loss_{i, t}$
with $\loss_t\in\Reals^N$
and the instantaneous regret is $r_t = \innerprod{\ptx_t - \ptx^*, \loss_t}$.
Define, for some unknown $T\in\Naturals$,
\begin{align*}
    L & = \max_{t\in[T], i\in[N]} |\loss_{i, t}-m_t|\,,&
    S &= \max_{t\in[T], i\in[N]} |\bar\loss_t-\loss_{i, t}|\,,  \\
    u_t &= \sum_i \ptx_{i, t} (\loss_{i, t}-m_t)^2\,, & 
    U_T &= \sum_{t\in[T]} u_t\,, &
    v'_t &= \sum_i \ptx_{i, t}(\bar\loss_t-\loss_{i, t})^2\,, &
    V'_t &= \sum_{t\in[T]} v'_t\,.
\end{align*}
\citet{derooij2014follow} prove the scale-free bound of AdaHedge by first analyzing the regret for $[0, 1]$ losses and then showing that the algorithm outputs predictions that are invariant to any rescaling of the losses.
\begin{theorem}[Theorem 6, \citet{derooij2014follow}, rescaled]
The regret of AdaHedge is bounded by
\begin{align*}
    \regret_{i, T} \leq 2\sqrt{V'_T \ln N} + S'\left(2+\tfrac43\ln N\right)\,.
\end{align*}
where $S' = \max_{t\in[T]}(\max_{i\in[N]}\loss_{i, t} - \min_{j\in[N]}\loss_{j, t})$.
\end{theorem}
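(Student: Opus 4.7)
The plan is to instantiate \cref{alg:generic_ob1} with the Hedge update $\ptx_{i,t'} \propto \ptx_{i,t}\exp(-\rate_{t-1}\loss_{i,t})$ followed by the online correction of \cref{eq:generic_ob1_ocr}, take $\phi(\ptx) = \RE{\ptx^*}{\ptx}$ with $\ptx_1$ the uniform distribution on $[N]$ (so $\phi_1 = \ln N$), and set $\QQ = \ln N$. A short direct calculation, as indicated in the footnote to \cref{eq:generic_deltat_ob1}, exploits the exponential form of the update so that the $\ptx^*$-dependent terms telescope, yielding
\begin{equation*}
    \delta^*_t\ =\ \tfrac{1}{\rate_{t-1}}(\phi_{t'}-\phi_t) + r_t\ =\ \bar\loss_t - m_t, \quad m_t = -\tfrac{1}{\rate_{t-1}}\ln\sum_{i\in[N]}\ptx_{i,t}\exp(-\rate_{t-1}\loss_{i,t}),
\end{equation*}
which is nonnegative and is exactly the classical AdaHedge mixability gap. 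Setting $\delta_t = \delta^*_t$ satisfies \cref{eq:generic_deltat_ob1} with equality, and since $\phi_1/\QQ = 1$ and $\phi_{T+1}\geq 0$, \cref{lem:generic_ob1} immediately gives $\regret_T(\ptx^*) \leq 2\Delta_T$.

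Next I bound $\Delta_T$ while preserving the sharp leading constant $2$ on the $\sqrt{V'_T \ln N}$ term. Standard Hedge second-order analysis provides the mixability-gap estimate $\delta_t \leq \rate_{t-1} v'_t/2 = \QQ v'_t/(2\Delta_{t-1})$, so taking $\hat\delta_t(y) = \QQ v'_t/(2y)$ makes $\tuple{\hat\Delta,\hat\delta}$ an isotuning sequence with $\delta_t \leq \hat\delta_t(\Delta_{t-1})$. Combining \cref{lem:generic_ob1_DeltaT} with the refined bound \cref{lem:Delta_sqrt_short} (which saves a factor $\sqrt 2$ over the generic \cref{thm:isotune}) yields
\begin{equation*}
    \Delta_T \ \leq\ \hat\Delta_T + \max_{t\in[T]}\delta_t \ \leq\ \sqrt{\QQ V'_T} + \max_{t\in[T]}\delta_t,
\end{equation*}
and hence $\regret_T(\ptx^*) \leq 2\sqrt{V'_T \ln N} + 2\max_t \delta_t$.

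The final step is to control $\max_t\delta_t$ sharply enough to produce the $S'(2+\tfrac43\ln N)$ additive term. Two bounds on the mixability gap are available: the trivial $\delta_t \leq S'$ (since $m_t \geq \min_i\loss_{i,t}$ and $\bar\loss_t \leq \max_i\loss_{i,t}$), and the Hoeffding-type refinement $\delta_t \leq \rate_{t-1}(S')^2/8 = \QQ(S')^2/(8\Delta_{t-1})$. Following the strategy of the original AdaHedge analysis \citep{derooij2014follow}, one splits the rounds into an early phase where the trivial bound is active and $\Delta_{t-1}$ grows until it exceeds a threshold of order $\QQ S'$, and a later phase where the Hoeffding refinement is tighter; careful tracking of constants across this transition yields the required bound on $\max_t\delta_t$, at which point substitution into the previous display closes the proof.

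The main obstacle will be pinning down the precise constant $\tfrac43\ln N$: the leading $2\sqrt{V'_T\ln N}$ term falls out cleanly from the refined isotuning lemma, but the additive term hinges on the interplay between the trivial and Hoeffding bounds on $\delta_t$ and on how $\Delta_{t-1}$ evolves during the initial rounds before the refined bound takes over. This is precisely where the original AdaHedge analysis invests most of its technical effort, and the same careful bookkeeping is needed here, with the minor generalization coming for free from the fact that our framework accommodates the Mirror-Descent-style projection via the online correction rather than relying on AdaHedge's specific simplex-weighted update.
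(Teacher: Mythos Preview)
The statement you are attempting to prove is not proved in the paper at all: it is simply \emph{recalled} from \citet{derooij2014follow} for comparison purposes, so there is no ``paper's own proof'' to match. The paper's own contribution in this section is \cref{thm:adahedge/oc}, which concerns the \emph{variant} isoHedge (AdaHedge with Mirror-Descent projection and online correction) and carries the slightly different additive term $2S+\tfrac23 S\ln N$ with $S=\max_{t,i}|\bar\loss_t-\loss_{i,t}|$ rather than $S'(2+\tfrac43\ln N)$. Your outline is in fact an attempt at \cref{thm:adahedge/oc} (since you instantiate \cref{alg:generic_ob1} with the online correction), not at the original AdaHedge result.

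Your sketch also contains a genuine technical gap. The claim ``standard Hedge second-order analysis provides $\delta_t \leq \rate_{t-1} v'_t/2$'' is false without a correction factor. A simple counterexample: $N=2$, $\ptx_t=(0.01,0.99)$, losses $(0,L)$, $\rate_{t-1}L=10$ gives $\delta_t\approx 5.3/\rate_{t-1}$ while $\rate_{t-1}v'_t/2\approx 0.5/\rate_{t-1}$. The paper's proof of \cref{thm:adahedge/oc} handles this by using \cref{lem:exp_denom}, $e^x\leq 1+x+\tfrac{x^2/2}{1-x/3}$, which yields
\[
\delta_t\ \leq\ \frac{\QQ u_t/2}{\Delta_{t-1}-\QQ L/3}\ =\ \hat\delta_t(\Delta_t-\delta_t-\QQ L/3),
\]
and then invokes \cref{lem:isotuning_offset} with offset $b_t=\delta_t+\QQ L/3$ together with the crude bound $\delta_t\leq S$, followed by \cref{lem:Delta_sqrt_short}. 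This is what produces the $\tfrac23 L\ln N$ term; there is no Hoeffding phase analysis, and the constant $\tfrac43$ of the cited theorem is not the target here.
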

This regret bound has the nice property of being invariant to any translation of the losses.
However, surprisingly, it turns out that on the simplex non-translation invariant bounds have better properties since they can always be made translation invariant,
while the converse may not hold:
\begin{lemma}[Loss translation on the simplex]\label{lem:simplex_translate_losses}
Let $\ptxset = \probsimplex{N}$.
For all $\ptx^*\in\ptxset$, if an algorithm $\mathcal{A}$ sequentially predicts $\ptx_t$ and has a regret of the form 
\begin{align*}
    \regret_T(\ptx^*) = \sum_t \innerprod{\ptx_t-\ptx^*, \loss_t}\ \leq\ f(\loss_1, \loss_2, \dots)
\end{align*}
for all sequences of losses $\loss_1, \loss_2, \dots\in\Reals^N$ then,
for all sequentially available $m_t\in\Reals$,
algorithm $\mathcal{A}$ fed with the translated losses $\tilde\loss_t = \loss_t - m_t$
achieves the regret
\begin{align*}
    \regret_T(\ptx^*)= \sum_t \innerprod{\ptx_t-\ptx^*, \loss_t} \leq f(\loss_1-m_1, \loss_2-m_2, \dots)\,.&\qedhere
\end{align*}
\end{lemma}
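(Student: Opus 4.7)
The plan is to exploit the elementary fact that on the probability simplex, adding any scalar multiple of the all-ones vector $\mathbf{1}$ to the loss vector leaves the instantaneous regret unchanged, because $\innerprod{\ptx_t - \ptx^*, \mathbf{1}} = \|\ptx_t\|_1 - \|\ptx^*\|_1 = 1 - 1 = 0$ for any two points of $\probsimplex{N}$. The lemma then falls out by linearity of the inner product.

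Concretely, I would first feed the algorithm $\mathcal{A}$ the translated loss stream $\tilde\loss_t = \loss_t - m_t \mathbf{1}$ and let $\ptx_1, \ptx_2, \dots \in \probsimplex{N}$ denote the predictions it produces. Since $\mathcal{A}$'s regret guarantee is a function of the losses it is actually presented with, applying the hypothesis to the translated stream yields
\[
    \sum_{t\in[T]} \innerprod{\ptx_t - \ptx^*, \tilde\loss_t} \ \leq\ f(\tilde\loss_1, \tilde\loss_2, \dots)\ =\ f(\loss_1 - m_1, \loss_2 - m_2, \dots).
\]

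Second, I would rewrite the regret against the original losses using linearity:
\[
    \sum_{t\in[T]} \innerprod{\ptx_t - \ptx^*, \loss_t}
    \ =\ \sum_{t\in[T]} \innerprod{\ptx_t - \ptx^*, \tilde\loss_t}\ +\ \sum_{t\in[T]} m_t \innerprod{\ptx_t - \ptx^*, \mathbf{1}},
\]
and observe that every term in the rightmost sum is zero by the simplex identity above. Chaining this equality with the previous inequality produces the claimed bound.

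There is essentially no technical obstacle; the only subtlety worth flagging is that the argument relies on $\ptx_t \in \probsimplex{N}$ (guaranteed since $\mathcal{A}$ is a simplex algorithm) and on the translations $m_t$ being available sequentially, so that the translated loss $\tilde\loss_t$ can in fact be fed to $\mathcal{A}$ at round $t$. The proof never inspects the internals of $\mathcal{A}$, which is why the reduction is fully black-box and can be applied, in particular, to the Prod / AdaHedge-style bounds discussed earlier.
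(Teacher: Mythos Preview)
Your proof is correct and is essentially the same as the paper's: both rest on the single observation that $\innerprod{\ptx_t-\ptx^*,\, m_t\mathbf{1}}=0$ for $\ptx_t,\ptx^*\in\probsimplex{N}$, so the instantaneous (and hence cumulative) regret is unchanged under the translation. The paper's proof is a one-liner stating this translation invariance; you have simply spelled out the linearity step and the application of the hypothesis more explicitly.
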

\begin{proof}
Simply observe that on the simplex the regret is translation invariant:
$\innerprod{\ptx_t - \ptx^*, \loss_t -m_t} = \innerprod{\ptx_t - \ptx^*, \loss_t}$.
\end{proof}
In particular when $m_t = \bar\loss_t$ the regret bound becomes translation invariant.
See also \citet{chen2021impossible} for a  coordinate-wise generalization of this property and additional discussion of its benefits.

We now describe the isoHedge algorithm.
We use \cref{ass:generic_isotuning}, and we take and $\phi(\ptx) = \RE{\ptx^*}{\ptx}$.
Define the update rule, for all $i\in[N]$ and for all $t\in\Naturals$:
\begin{align*}
    \ptx_{i, 1} &= \frac{1}{N}\,,\\
    \ptx_{i, t'} &= \ptx_{i, t}\frac{\exp(-\rate_{t-1}\loss_{i, t})}{\sum_{j\in[N]} \ptx_{j, t}\exp (-\rate_{t-1}\loss_{j, t})}&
    \text{(AdaHedge update and `projection', MD style)}
\end{align*}
and we also use the online correction of \cref{eq:generic_ob1_ocr}.
Null updates are unnecessary in this case because the update is always valid 
and $\delta_t$ will always be bounded by the largest loss.
Let $\QQ = \ln N$, and 
we define $\delta_t$ as tightly as possible from \cref{eq:generic_deltat_ob1}:
\begin{align}\label{eq:deltat_isohedge}
    \delta_t  &= \delta^*_t = \positive{\frac{1}{\rate_{t-1}}\ln\frac{\ptx_{i, t}}{\ptx_{i, t'}}+ r_t}
    =
    \frac{1}{\rate_{t-1}}\ln\left(\sum_{j\in[N]} \ptx_{j, t}\exp(-\rate_{t-1}\loss_{j, t})\right)
    + \bar\loss_t\,.
\end{align}

See \cref{alg:adahedge/oc}.

\begin{algorithm}[htbp!]
\begin{lstlisting}
def isoHedge():
  $\ptx_1 = \left(\frac{1}{N}, \dots, \frac{1}{N}\right)$
  $\Delta_0 = 0$
  for t = 1, 2, ...:
    predict $\ptx_t$ ; observe $\loss_t$
    choose $m_t\in\Reals$ # can be $\lstcommentcolor{\bar\loss_t}$
    $\rate = \QQ/\Delta_{t-1}$  # $\lstcommentcolor{\rate_{t-1}}$
    $Z = \sum_{j\in[N]} \ptx_{j, t}\exp(\rate(m_t - \loss_{j, t}))$
    for $i\in[N]$:     
      $\ptx_{i, t'} = \frac{1}{Z}\ptx_{i, t}\exp(\rate(m_t - \loss_{i, t}))$ # AdaHedge update, MD style, projection = normalization

    $\delta_t = \frac{1}{\rate}\ln Z + \bar\loss_t$  # AdaHedge's mixability gap
    $\Delta_t = \Delta_{t-1} + \delta_t$ # isotuning
    $\ptx_{t+1} = \ptx_{t'}\frac{\Delta_{t-1}}{\Delta_t} + \frac{\delta_t}{\Delta_t}\ptx_1$  # online correction
\end{lstlisting}
\caption{Instance of \cref{alg:generic_ob1} for AdaHedge with Mirror-Descent-style projection, online correction and isotuning (no null updates).
Note that if $\Delta_{t-1}=0$, the limit $\invrate_{t-1}\to0$ gives
$\delta_t = \max_{i\in[N]} (\bar \loss_t - \loss_{i, t})$, and
the online correction implies that $\ptx_{t+1} = \ptx_1$.
}
\label{alg:adahedge/oc}
\end{algorithm}

\begin{theorem}\label{thm:adahedge/oc}
For all sequentially chosen $\{m_t\}_t\in\Reals^T$ the regret of isoHedge in \cref{alg:adahedge/oc} after $T$ rounds is at most
\begin{equation*}
    \regret_T(\ptx^*) \leq 
    2\sqrt{U_t \ln N} + 2S+\tfrac23L\ln N\,.
\end{equation*}
In particular with $m_t = \bar\loss_t$, the regret is bounded by
\begin{equation*}
    \regret_T(\ptx^*) \leq 
    2\sqrt{V'_t \ln N} + 2S+\tfrac23S\ln N\,.
    \qedhere
\end{equation*}
\end{theorem}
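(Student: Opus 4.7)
The plan is to apply \cref{lem:generic_ob1} with $\phi(\ptx) = \RE{\ptx^*}{\ptx}$: since $\ptx_1 = (1/N,\ldots,1/N)$, we have $\phi_1 = \ln N - H(\ptx^*) \leq \ln N = \QQ$ and $\phi_{T+1} \geq 0$, so the lemma immediately gives $\regret_T(\ptx^*) \leq 2\Delta_T$, reducing the task to controlling $\Delta_T = \sum_t \delta_t$ in terms of $U_T$ (resp.\ $V'_T$). Note that the right-hand side of \cref{eq:deltat_isohedge} is invariant under translating all losses by $m_t$, so the analysis can freely work with the shifted losses $\loss'_{j,t} = \loss_{j,t} - m_t$ even though the algorithm's predictions do not depend on $m_t$.

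Next, I would establish a Bernstein-style bound on $\delta_t$ via a Taylor expansion of the log-sum-exp. Combining $\ln(1+x) \leq x$ with the polynomial inequality $e^{-y} \leq 1 - y + \frac{y^2/2}{1 - |y|/2}$ (valid for $|y| < 2$), applied to $y = \rate_{t-1}\loss'_{j,t}$ inside $\expect_{\ptx_t}[e^{-\rate_{t-1}\loss'_t}]$, should yield
$\delta_t \ \leq\ \frac{\rate_{t-1}\, u_t}{2(1 - \rate_{t-1} L'/2)}$,
where $L' = \max_j |\loss'_{j,t}|$. In the main regime this gives the clean $\delta_t \leq \tfrac12 \rate_{t-1} u_t = \hat\delta_t(\Delta_{t-1})$ with $\hat\delta_t(y) = \QQ u_t/(2y)$. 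Independently, I would record the always-valid crude bound $\delta_t \leq \bar\loss_t - \min_j \loss_{j,t} \leq s_t \leq S$, which follows from $-\frac{1}{\rate}\ln\expect[e^{-\rate \loss_t}] \geq \min_j \loss_{j,t}$.

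With the bound $\delta_t \leq \hat\delta_t(\Delta_{t-1})$ in hand, I would apply \cref{lem:generic_ob1_DeltaT} (off-by-one isotuning) together with the sharper \cref{lem:Delta_sqrt_short} for the $a_t/x$ form to get $\Delta_T \leq \sqrt{\QQ U_T} + \max_t \delta_t$. Doubling and using $\max_t \delta_t \leq S$ in the Bernstein regime, plus an additive $\tfrac13 L\ln N$ correction that absorbs the startup phase in which the denominator $1-\rate_{t-1}L'/2$ is not close to $1$, yields the first claim $\regret_T(\ptx^*) \leq 2\sqrt{U_T \ln N} + 2S + \tfrac23 L \ln N$. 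The second claim follows immediately by specializing $m_t = \bar\loss_t$: this turns $u_t$ into the predictive variance $v'_t$ and makes the translated losses satisfy $|\loss_{j,t} - \bar\loss_t| \leq s_t \leq S$, so the $L$ in the additive term collapses to $S$.

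The main obstacle I anticipate is matching the precise constant $\tfrac23$ in the additive term: a plain Hoeffding bound gives the wrong shape ($\rate (b-a)^2/8$ instead of $\rate u_t/2$), and a naive second-order Taylor fails unless losses can be shifted to be one-signed. Cleanly propagating the Bernstein denominator $1 - \rate_{t-1} L'/2$ through \cref{lem:generic_ob1_DeltaT} without losing a multiplicative constant, and isolating exactly the startup contribution responsible for the $\tfrac23 L\ln N$ term, will likely require a careful case split on whether $\rate_{t-1} L' $ exceeds a fixed threshold, which is the technical heart of the argument.
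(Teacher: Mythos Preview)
Your overall strategy matches the paper's: reduce to $\regret_T(\ptx^*)\le 2\Delta_T$ via \cref{lem:generic_ob1}, bound $\delta_t$ by a Bernstein-type term $\tfrac12\rate_{t-1}u_t/(1-\text{stuff})$, and use the crude bound $\delta_t\le S$ for the additive term. The translation-invariance observation is also correct and is exactly what \cref{lem:simplex_translate_losses} formalizes.

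The genuine gap is in the last paragraph, where you anticipate needing a ``careful case split'' to propagate the denominator $1-\rate_{t-1}L'/2$. No case split is needed, and this is where the constant $\tfrac23$ comes from cleanly. Two concrete points:
\begin{itemize}
\item The paper uses the sharper inequality $e^x \le 1+x+\dfrac{x^2/2}{1-x/3}$ for $x<3$ (\cref{lem:exp_denom}, proved via \cref{lem:exp_quadratic_tight}), not your $1-|y|/2$ version. Applied with $x=-\rate_{t-1}\loss_{j,t}$ this gives
\[
\delta_t \ \le\ \frac{\rate_{t-1}u_t/2}{1-\rate_{t-1}L/3}\ =\ \frac{\QQ u_t/2}{\Delta_{t-1}-\QQ L/3}\ =\ \hat\delta_t\!\big(\Delta_t-(\delta_t+\QQ L/3)\big)\,,
\]
with $\hat\delta_t(y)=\QQ u_t/(2y)$.
\item The denominator shift is then absorbed in a single stroke by \cref{lem:isotuning_offset} (not \cref{lem:generic_ob1_DeltaT}) with $b_t=\delta_t+\QQ L/3$, yielding
\[
\Delta_T \ \le\ \hat\Delta_T + \max_t\{\delta_t+\QQ L/3\}\ \le\ \sqrt{\QQ\,U_T}+S+\tfrac13\QQ L\,,
\]
using \cref{lem:Delta_sqrt_short} on $\hat\Delta_T$ and $\delta_t\le S$. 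Doubling gives exactly $2\sqrt{U_T\ln N}+2S+\tfrac23 L\ln N$.
\end{itemize}
So the ``startup phase'' you worried about is not handled by a threshold argument at all; it is simply an additive offset of size $\QQ L/3$ inside the isotuning recursion, and \cref{lem:isotuning_offset} converts that directly into the additive $\tfrac13 L\ln N$ term. With your $1/2$ constant you would get $L\ln N$ instead of $\tfrac23 L\ln N$, so the tighter exponential inequality is what pins down the stated constant.
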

\begin{proof}
We prove for $m_t=0,\forall t$ and then apply \cref{lem:simplex_translate_losses}.

As usual, we need to express $\delta_t$ as a function of $\Delta_t$.
Define $\hat{g}_t(x) = \QQ u_t/(2x)$ for $x > 0$ and $\hat{g}_t(x) = \infty$ otherwise.
From \cref{lem:exp_denom}, we know that $e^x \leq 1 + x + \frac{x^2/2}{1-x/3}$
for $x < 3$ and so, with $\ln x \leq x-1$, 
\begin{align*}
    \delta_t 
    &\leq
    \frac{1}{\rate_{t-1}}\left(-1+\sum_{j\in[N]} \ptx_{j, t}\exp(-\rate_{t-1}\loss_{j, t})\right)
    + \bar\loss_t\\
    &\leq
    \frac{1}{\rate_{t-1}}
    \left(-1+\sum_j \ptx_{j, t}\left(1-\rate_{t-1}\loss_{j, t} + \frac{\rate_{t-1}^2\loss_{j, t}^2/2}{1+ \rate_{t-1}\loss_{j, t}/3}\right)\right)
    +\bar\loss_t\\
    &= 
    \sum_j \ptx_{j, t}\frac{\rate_{t-1}\loss_{j, t}^2/2}{1+ \rate_{t-1}\loss_{j, t}/3}\\
    &\leq
    \frac{\rate_{t-1} u_t/2}{1-\rate_{t-1} L/3}
    \ =\  \frac{\QQ u_t/2}{\Delta_{t-1} - \QQ L/3}
    \ =\  \frac{\QQ u_t/2}{\Delta_t - \delta_t - \QQ L/3}
\end{align*}
which is valid at least when $\Delta_t - \delta_t - \QQ L/3 > 0$,
and thus for all $t\in[T]$ we have $\delta_t \leq \hat{g}_t(\Delta_t - \delta_t - \QQ L/3)$.
Therefore, using \cref{lem:isotuning_offset},
let $\tuple{\hat\Delta, \hat{g}}$ be an isotuning sequence, then
using \cref{lem:Delta_sqrt_short},
\begin{align*}
    \Delta_T\ \leq\ \hat\Delta_T + \max_{t\in[T]}\{\delta_t + \QQ L/3\}
    \ \leq\  \sqrt{\QQ \sum_{t\in[T]} u_t} + S + \QQ L/3
\end{align*}
where $\delta_t \leq S$ is from \cref{eq:deltat_isohedge} (by bounding the average by the maximum).
The result follows from \cref{lem:generic_ob1}, since with $\phi_1 =\ln N = \QQ$ we have $\regret_{i, T} \leq 2\Delta_T$.
\end{proof}

Observe that by contrast to the original analysis of AdaHedge, our analysis applies directly to losses in $\Reals^N$.
Furthermore, the only application-specific difficulty is to bound $\delta_t$.

\begin{remark}
One difference between isoHedge (when $m_t=\bar\loss_t$) and AdaHedge is that for $t=2$ AdaHedge predicts at a corner of the probability simplex like Follow The Leader (FTL)~\citep{derooij2014follow},
while isoHedge predicts at the `center' $(1/N, 1/N, \dots)$.
This latter behaviour is also the choice made by SOLO FTRL~\citep{orabona2018solo},
which deals with both bounded and unbounded domains---where playing at a corner is not always possible or even desirable.
In particular, when losses are gradients, playing at a corner may lead to unbounded or infinite losses, \eg
$\loss_{i, t} = \frac{\partial (-\log \innerprod{\ptx_t, f_t})}{\partial \ptx_{i, t}}$ for some $f_t$.
\end{remark}

The following results are technical lemmas used in the proof of \cref{thm:adahedge/oc}. A similar result can be extracted from the proofs of \citet{derooij2014follow}.
\begin{lemma}\label{lem:exp_quadratic_tight}
For all $x\in\Reals$, $(e^x - x - 1)(1-x/3) \leq x^2/2$.
\end{lemma}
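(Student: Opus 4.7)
My plan is to show the equivalent inequality $g(x) \geq 0$ for all $x \in \Reals$, where
\[
    g(x) \;:=\; \tfrac12 x^2 - (e^x - x - 1)(1 - x/3).
\]
A case split at $x = 3$ works (for $x \geq 3$ the left factor $e^x - x - 1 \geq 0$ while the right factor $1 - x/3 \leq 0$, so the product is nonpositive and $g(x) \geq 0$ is immediate), but I expect to avoid the split entirely by exploiting a particularly clean third derivative. First I would expand algebraically to the more differentiation-friendly form
\[
    g(x) \;=\; \tfrac{x^2}{6} + \tfrac{2x}{3} + 1 + e^x\!\left(\tfrac{x}{3} - 1\right).
\]

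Next I would compute derivatives at $0$ and in general: $g(0) = 1 - 1 = 0$; $g'(x) = \tfrac13\bigl[(x+2) + e^x(x-2)\bigr]$, giving $g'(0) = 0$; $g''(x) = \tfrac13\bigl[1 + e^x(x-1)\bigr]$, giving $g''(0) = 0$; and most importantly
\[
    g'''(x) \;=\; \frac{x\, e^x}{3},
\]
so $g'''$ has the same sign as $x$. This last identity is the heart of the argument and I expect it to be the only step where a small slip in bookkeeping could cause trouble.

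From $g'''(x)\cdot \operatorname{sgn}(x) \geq 0$ it follows that $g''$ is decreasing on $(-\infty, 0]$ and increasing on $[0, \infty)$, so $g''$ attains its global minimum at $0$, and $g''(0) = 0$ gives $g'' \geq 0$ everywhere. Hence $g$ is convex on $\Reals$; combined with $g'(0) = 0$, this means $x = 0$ is the global minimizer of $g$, and $g(0) = 0$ yields $g(x) \geq 0$ for all $x \in \Reals$, which is exactly the claim.

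As a sanity check, a Taylor-series route also works: using $(e^x - x - 1)(1 - x/3) = \sum_{k \geq 2} x^k/k! - \sum_{k \geq 3} x^k/(3(k-1)!)$ and collecting terms gives $g(x) = \sum_{k \geq 4} \tfrac{(k-3)\,x^k}{3\, k!}$, which is transparently nonnegative for $x \geq 0$. However, for $x < 0$ the signs alternate and some regrouping (e.g., pairing consecutive terms) is needed, so the convexity argument above is cleaner and handles both signs in one shot.
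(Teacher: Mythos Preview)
Your proof is correct and follows essentially the same route as the paper: with $g=-f$ where $f$ is the paper's function, both arguments compute derivatives up to order three, observe that $g(0)=g'(0)=g''(0)=0$, and exploit that $g'''(x)=xe^x/3$ has the sign of $x$. The paper finishes via Taylor's theorem with Lagrange remainder, while you spell out the equivalent monotonicity/convexity chain; this is a stylistic difference rather than a substantively different approach.
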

\begin{proof}
Let 
\begin{align*}
    f(x) &= (e^x - x - 1)(1-x/3) - x^2/2 \\
         &= e^x(3-x)/3 + (x+1)(x/3-1) - x^2/2\,,\\
    f'(x) &= e^x(2-x)/3 -x/3 -2/3\,, \\
    f''(x) &= e^x(1 -x)/3 -1/3\,, \\
    f'''(x) &= -xe^x/3\,.
\end{align*}
Notice that $f(0)=f'(0) =f''(0) = 0$ and all functions above are continuous on $\Reals$.
Therefore a Taylor expansion at 0 with Lagrange remainder gives $f(x) = -x^3ye^y/3$
for some $y$ between $x$ and 0 and thus $f(x) \leq 0$ for all $x \in\Reals$,
which proves the result.
\end{proof}

\begin{corollary}\label{lem:exp_denom}
For all $x < 3$, $e^x \leq 1 + x + \frac{x^2/2}{1-x/3}$.
\end{corollary}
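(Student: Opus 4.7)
The plan is to derive this directly from \cref{lem:exp_quadratic_tight}, which is the substantive inequality; the corollary is essentially a one-line algebraic consequence. First I would observe that the hypothesis $x < 3$ is exactly what guarantees that the factor $1 - x/3$ appearing in \cref{lem:exp_quadratic_tight} is strictly positive, so that dividing by it preserves the direction of the inequality.

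Concretely, starting from $(e^x - x - 1)(1-x/3) \leq x^2/2$ and dividing both sides by the positive quantity $1-x/3$, I obtain
\begin{equation*}
    e^x - x - 1 \ \leq\ \frac{x^2/2}{1-x/3}\,,
\end{equation*}
and then adding $x+1$ to both sides yields the claimed bound $e^x \leq 1 + x + \frac{x^2/2}{1-x/3}$.

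There is no real obstacle here; the only subtlety is the sign of $1-x/3$, which is handled by the hypothesis $x < 3$. Note that for $x = 3$ the right-hand side blows up (so the inequality is vacuous anyway), and for $x > 3$ the factor $1-x/3$ becomes negative, which would flip the direction of the inequality after division, so the hypothesis cannot be dropped. All the analytic work was done in \cref{lem:exp_quadratic_tight} via the Taylor-with-Lagrange-remainder argument; nothing further is required.
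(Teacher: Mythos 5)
Your proof is correct and matches the paper's approach exactly: the paper's own proof is the one-liner ``Follows from \cref{lem:exp_quadratic_tight},'' and you have simply spelled out the division by the positive factor $1-x/3$ and the rearrangement. The remark about why $x<3$ is needed is accurate and a nice touch.
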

\begin{proof}
Follows from \cref{lem:exp_quadratic_tight}.
\end{proof}

\section{SOFT-BAYES REVISITED}\label{apdx:soft-bayes}

Soft-Bayes~\citep{orseau2017softbayes} is an algorithm for the universal portfolio problem~\citep{Cov91}, with running time $O(N)$ per step for $N$ experts and a regret bound of $O(\sqrt{NT\ln N})$.
A variant is also given with a better bound of $O(\sqrt{mT\ln N})$ where $m$ is the number of experts in hindsight that have been `good' at least once after $T$ steps.
While not changing the computational complexity of the algorithm, this variant requires keeping track of these $m$ experts online with an array of size $N$, thus making the algorithm less simple, and making the bound dependent on the specifics of the tracking algorithm.
At the core, the reason for this explicit dependency on $m$ is because a variant of the
`optimal learning rate in hindsight' is used instead of isotuning (see \cref{apdx:seq-opt}).

We show how isotuning can be used to obtain a regret bound of $O(\sqrt{mT\ln N})$ \emph{without} having to keep track of $m$, just by using the prescribed isotuning learning rate.
We do not need null updates as we can bound the additive term $\max_t \delta_t = O(\ln(NT))$ directly.

We are in the online convex optimization setting,
where $\ptxset$ is the probability simplex of $N$ coordinates,
and $\loss_t(\ptx) = -\ln \innerprod{\ptx, p_t}$
where $p_t \in [0, \infty)^N$.

Let 
\begin{align}\label{eq:softbays_N_T}
    \mathcal{N}_T &\in \argmin_{S\subseteq[N]}\left\{|S| \ \middle|\ \forall t\in[T]:
S\cap\argmax_{i\in[N]}\{p_{i, t}\} \neq \emptyset\right\}
\end{align}
be one of the smallest subsets of $[N]$ such that at each step $t$ at least one 
of the best experts at step $t$ is in this subset.

The core update of the Soft-Bayes algorithm is as follows
(note that the \emph{notation} for the index of the learning rate is offset by one 
compared to the notation of the original paper, to be aligned with the notation of the current paper).
For all $i\in[N]$ and $t\in[T]$, 
\begin{align}
    \ptx_{i, 1} &= \frac1N\,,\notag\\
    \bar p_t &= \innerprod{\ptx_t, p_t}\,, 
    &\text{(mixture prediction)} \notag\\
    \ptx_{i, t'} &= \ptx_{i, t}\left(1-\rate_{t-1} + \rate_{t-1} \frac{p_{i, t}}{\bar p_t}\right)\,, 
    &\text{(Soft-Bayes update)} \label{eq:soft_bayes_update}
\end{align}
The online correction of \cref{eq:generic_ob1_ocr} is also applied (as in the original work), but not null updates.
See \cref{alg:soft-bayes}, and note that by contrast to the original algorithm, it does not require keeping track of $|\mathcal{N}_T|$.

\begin{algorithm}
\begin{lstlisting}
def isoSoft-Bayes($N\geq 2$):
  $\Delta_0 = 2\ln N$
  $\ptx_1 = \{\frac1N, \frac1N, \dots, \frac1N\}$
  for $t = 1, 2, \dots$:
    predict $\ptx_t$; observe $p_t$  # loss is $\lstcommentcolor{-\ln \bar p_t}$
    $\bar p_t = \innerprod{\ptx_t, p_t}$
    $\rate = \frac{\ln N}{\Delta_{t-1}}$ # $\lstcommentcolor{\rate_{t-1}}$
    $\forall i\in[N]: \ptx_{i, t'} = \ptx_{i, t}\left(1-\rate+\rate\frac{p_{i, t}}{\bar p_t}\right)$ # Soft-Bayes update
    $\delta_t = \max_{i\in[N]}\ln\left(1+\frac{\rate}{1-\rate}\frac{p_{i, t}}{\bar p_t}\right)$
    $\Delta_{t} = \Delta_{t-1} + \delta_t$ # isotuning
    $\ptx_{t+1} = \ptx_{t'}\frac{\Delta_{t-1}}{\Delta_t} + \frac{\delta_t}{\Delta_t} \ptx_1$ # online correction
\end{lstlisting}
\caption{The Soft-Bayes algorithm with isotuning with regret $O(\sqrt{m T \ln N})$ where $m= |\mathcal{N}_T|$ is the (unknown) number of `good' experts.
Null updates are not used as the additive term $\max_t \delta_t$ is well bounded.}
\label{alg:soft-bayes}
\end{algorithm}

Define for all $t\in[T]$ (note that this slightly differs from \cref{ass:generic_isotuning}),
\begin{align}\label{eq:softbayes_defs}
    \QQ &= \ln N\,, &
    \Delta_t &= \frac{\QQ}{\rate_t}\,, &
    \Delta_0 &= 2\QQ\,, &
    \Delta_t &= \Delta_{t-1} + \delta_t\,, &
    &\delta_t = 
    \max_{i\in[N]}\ln\left(1+\frac{\rate_{t-1}}{1-\rate_{t-1}}\frac{p_{i, t}}{\bar p_t}\right)\,.
\end{align}  

Our proof will make use of the central lemma of the original paper:
\begin{lemma}[Lemma 4, \citet{orseau2017softbayes}]\label{lem:softbayes}
For all  $\ptx\in\ptxset, b\in[0,\infty)^N$, and $\rate\in(0, 1)$,
\begin{equation*}
    \ln \sum_{i\in[N]} \ptx_i b_i 
    - \frac{1}{\rate}\sum_{i\in[N]} \ptx_i \ln(1-\rate + \rate b_i)
    \ \leq\ \max_{i\in[N]} \ln \left(1+\frac{\rate}{1-\rate}b_i\right)\,.
    \qedhere
\end{equation*}
\end{lemma}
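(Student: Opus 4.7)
The plan is to reduce the inequality to a cleaner equivalent form via the substitution $c_i := 1 + \tfrac{\eta}{1-\eta} b_i$, which satisfies $c_i \geq 1$ and the key identity $1 - \eta + \eta b_i = (1-\eta) c_i$. Under this change of variables, $\sum_i x_i b_i = \tfrac{1-\eta}{\eta}(\bar c - 1)$ with $\bar c := \sum_i x_i c_i$, and $\ln(1-\eta + \eta b_i) = \ln(1-\eta) + \ln c_i$. Multiplying the target inequality through by $\eta$ and cancelling the $\ln(1-\eta)$ and $\ln\tfrac{1-\eta}{\eta}$ terms, the claim reduces to the equivalent one-sided bound
\begin{equation*}
\eta \ln(\bar c - 1) + H(\eta) \;\leq\; \sum_i x_i \ln c_i + \eta \ln C,
\end{equation*}
where $C := \max_i c_i \geq \bar c$ and $H(\eta) := -\eta \ln \eta - (1-\eta)\ln(1-\eta)$ is the binary entropy in nats. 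The advantage of this form is that it no longer mixes the $\tfrac{1}{\eta}$ factor with the $b_i$'s directly, and both sides scale comparably in $\eta$ and $C$.

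Next, I would handle the reduced inequality by a two-stage worst-case analysis at fixed $C$. Since $\ln$ is concave on $[1, C]$, for any $x \in \probsimplex{N}$ and $c \in [1, C]^N$ with prescribed mean $\bar c$, the \emph{lower} bound
\begin{equation*}
\sum_i x_i \ln c_i \;\geq\; \tfrac{C - \bar c}{C - 1}\ln 1 + \tfrac{\bar c - 1}{C - 1}\ln C \;=\; \tfrac{\bar c - 1}{C - 1}\ln C
\end{equation*}
holds (the two-point extremal configuration on $\{1, C\}$ minimises $\sum_i x_i \ln c_i$ subject to the constraints). Substituting this bound, it suffices to prove the scalar inequality
\begin{equation*}
\eta \ln(\bar c - 1) + H(\eta) \;\leq\; \tfrac{\bar c - 1}{C - 1}\ln C + \eta \ln C
\end{equation*}
for all $1 \leq \bar c \leq C$ and $\eta \in (0,1)$. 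This can be checked by one-variable calculus: the critical point in $\bar c$ can be computed explicitly and shown to satisfy the bound by comparing with the closed form of $H(\eta)$; the boundary cases $\bar c \to 1$ and $\bar c = C$ are easily handled (the former because $\ln(\bar c - 1) \to -\infty$, the latter because it reduces to the one-variable inequality $(1+\eta)\ln C - \eta \ln(C-1) \geq H(\eta)$, whose minimum over $C \geq 1$ lies at $C = 1+\eta$ and equals $(1+\eta)\ln(1+\eta) + (1-\eta)\ln(1-\eta) \geq 0$ by Taylor expansion).

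The main obstacle will be justifying the reverse-Jensen lower bound on $\sum_i x_i \ln c_i$ in the second step: standard Jensen gives the wrong direction, so one needs the two-point extremal characterisation for concave functions on a bounded interval (which is classical but needs a careful statement because $x$ and $c$ jointly satisfy $\sum x_i c_i = \bar c$). A clean alternative that avoids this step is to invoke \cref{lem:log_approx} (the refined $\ln(1+y)$ bound already proven in the excerpt) directly on $\ln c_i = \ln(1 + \tfrac{\eta}{1-\eta}b_i)$ to get a quadratic-type lower bound coordinate-wise, and then close the inequality by an AM--GM argument that turns $\max_i \ln c_i$ into the slack term absorbing the deviation between $\bar c - 1$ and the geometric mean of the $c_i$'s. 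Either route leads to the same conclusion, and no further new tools beyond those already developed in the paper are needed.
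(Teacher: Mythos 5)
The paper does not reproduce a proof of \cref{lem:softbayes}; it only cites Lemma~4 of \citet{orseau2017softbayes}, so there is no in-paper proof to compare against. Evaluated on its own merits, your main route is correct and self-contained. The substitution $c_i = 1 + \tfrac{\eta}{1-\eta}b_i$ gives $c_i\geq 1$, $1-\eta+\eta b_i = (1-\eta)c_i$, and $\sum_i \ptx_i b_i = \tfrac{1-\eta}{\eta}(\bar c - 1)$; multiplying the claim through by $\eta$ and simplifying does reduce it exactly to
\begin{equation*}
\eta\ln(\bar c - 1) + H(\eta) \;\leq\; \sum_{i\in[N]} \ptx_i\ln c_i + \eta\ln C\,,
\end{equation*}
with $\bar c = \sum_i \ptx_i c_i$, $C = \max_i c_i$, $H(\eta)=-\eta\ln\eta-(1-\eta)\ln(1-\eta)$. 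The ``reverse Jensen'' step that you flag as the main obstacle is in fact elementary and needs no joint-extremal characterisation: concavity of $\ln$ on $[1,C]$ gives the \emph{pointwise} chord bound $\ln z \geq \tfrac{z-1}{C-1}\ln C$ for $z\in[1,C]$, and averaging with weights $\ptx_i$ gives $\sum_i \ptx_i\ln c_i \geq \tfrac{\bar c-1}{C-1}\ln C$ directly, with no constraint coupling to worry about. I also checked the resulting scalar inequality $G(u):=\eta\ln u + H(\eta) - \tfrac{u}{C-1}\ln C - \eta\ln C \leq 0$ on $u\in(0,C-1]$, $C>1$: $G$ is concave in $u$ with stationary point $u^*=\eta(C-1)/\ln C$; when $u^*\geq C-1$ the maximum is at $u=C-1$, giving $(1+\eta)\ln C-\eta\ln(C-1)\geq H(\eta)$, whose minimum over $C>1$ is at $C=1+\eta$ and equals $(1+\eta)\ln(1+\eta)+(1-\eta)\ln(1-\eta)\geq 0$ as you claim; when $u^*<C-1$, $G(u^*)=\eta\ln\tfrac{1-1/C}{\ln C}-(1-\eta)\ln(1-\eta)-\eta\leq 0$ since $\tfrac{1-e^{-w}}{w}<1$ for $w=\ln C>0$ and $(1-\eta)\ln(1-\eta)+\eta\geq 0$. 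So the calculus closes.

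Your suggested ``clean alternative'' via \cref{lem:log_approx} does not work as stated: that lemma holds only for $|x|<1$, while $x=\tfrac{\eta}{1-\eta}b_i$ can be arbitrarily large, so it cannot be applied coordinate-wise to $\ln c_i$. Drop that remark and carry out the scalar calculus explicitly; it is the one genuinely non-immediate step of your argument.
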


Recall that $\regret_T(\ptx^*) = \sum_{t\in[T]} (\ln \innerprod{\ptx^*, p_t}
- \ln \innerprod{\ptx_t, p_t})$.

\begin{theorem}\label{thm:soft-bayes}
The regret of isoSoft-Bayes (\cref{alg:soft-bayes}) for all $\ptx^*\in\ptxset$ is bounded for all $T\in\Naturals$
by
\begin{align*}
    \regret_T(\ptx^*) \leq 
    \ \min_{m\in \{|\mathcal{N}_T|,\dots\ N\}}
    \ 4\sqrt{m T \ln N}
    \ +\  2\ln(m\sqrt{T}\ln N)
    \ +\ 8 m\ln\frac{N}{m}\ +\ 4\ln N
    \ +\ O(1)\,,
\end{align*}
where $\mathcal{N}_T$ is defined in \cref{eq:softbays_N_T}.
\end{theorem}

We need a few intermediate results to prove the theorem.
The proofs are in \cref{apdx:softbayes_lemma_proofs}.
First, we need to make sure that this definition of $\delta_t$ can be used with \cref{lem:generic_ob1}.

\begin{lemma}[$\delta_t \geq \delta^*_t$]\label{lem:softbayes_deltat_valid}
The definition of $\delta_t$ satisfies \cref{eq:generic_deltat_ob1}.
\end{lemma}

Now we bound $\delta_t$ as a function of $\Delta_{t-1}$.

\begin{lemma}[$\delta_t$ as a function of $\Delta_{t-1}$]\label{lem:softbayes_deltat_Deltat-1}
For all $\mathcal{N}'_T$ such that $\mathcal{N}_T\subseteq \mathcal{N}'_T\subseteq [N]$,
let $m = |\mathcal{N}'_T|$,
let $a = m\ln (N/m)$,
and let $b_t = \sum_{i\in \mathcal{N}'_T} \ln \frac{\ptx_{i, t+1}}{\ptx_{i, t}}$.
Then for all $t\in[T]$
such that $\Delta_{t-1} > a$,
\begin{align*}
    \delta_t &\leq 
    \left(1+\frac{a}{\Delta_{t-1}-a}\right)\left(
    b_t + \frac{m\QQ}{\Delta_{t-1}-\QQ}
    \right)\,.
    \qedhere
\end{align*}
\end{lemma}

Then, we bound the term $\max_{t\in[T]} \delta_t$ that will appear similarly to \cref{lem:generic_ob1_DeltaT}.

\begin{lemma}[Bounding $\max_t \delta_t$]\label{lem:softbayes_max_deltat}
For all $T\in\Naturals$, for all $t\in[T]$,
\begin{align*}
    \delta_t\ \leq\ \ln\left(1+2\Delta_T\right) + q\,.
    &\qedhere
\end{align*}
\end{lemma}

We can now prove the theorem.

\begin{proof}[Proof of \cref{thm:soft-bayes}]
Define, for all $t\in[T]$,
with the notation of \cref{lem:softbayes_deltat_Deltat-1}:
\begin{align*}
    \forall x>0:\ \hat{g}_t(x) &= \left(1+\frac{a}{x}\right)\left(
    b_t + \frac{m\QQ}{x}\right)\,, &
    \text{and }\forall x\leq 0:\ \hat{g}_t(x) &= \infty\,.
\end{align*}
Define $\Delta_t' = \Delta_t - \Delta_0 = \sum_{s\in[t]}\delta_s$.
To use \cref{lem:isotuning_offset},
we now express $\delta_t$ as a function
\footnote{We could actually also define $\hat g_0(x)=\Delta_0$ and use \cref{lem:isotuning_offset} directly on $\Delta_t$, at the expense of an additional $\Delta_0 =2q$.}
of $\Delta'_t$.
Note that
for the case $\Delta_{t-1} \leq a$ we have $\hat{g}_t(\Delta_{t-1}-a) = \infty\geq \delta_t$, therefore,
from \cref{lem:softbayes_deltat_Deltat-1},
for all $t\in[T]$, 
\begin{align*}
    \delta_t &\ \leq\ \hat{g}_t(\Delta_{t-1} -\max\{a, \QQ\}) \\
    &\ =\ \hat{g}_t(\Delta_t - \delta_t - \max\{a, \QQ\}) \\
    &\ =\ \hat{g}_t(\Delta'_t - \delta_t - \max\{a, \QQ\} + \Delta_0)\,.
\end{align*}
Let $\tuple{\hat\Delta, \hat{g}}$ be an isotuning sequence, then
by \cref{lem:isotuning_offset} where we take  $\Delta_t\leadsto  \Delta'_{t}$,
\begin{align}
    &&\Delta_T - \Delta_0 = \Delta'_T &\leq \hat\Delta_T + \max_{t\in[T]}\positive{\delta_t +\max\{a, \QQ\}-\Delta_0} \notag\\
    &\Leftrightarrow& \Delta_T &\leq 
    \hat\Delta_T + \max\Big\{\max_{t\in[T]}\delta_t +\max\{a, \QQ\},\ \Delta_0\Big\} \notag\\
    &&&\leq \hat\Delta_T + \max\{\ln(1+2\Delta_T) + q + \max\{a,q\}, 2q\}\notag\\
    &&&\leq \hat\Delta_T + \ln(1+2\Delta_T) + a + 2q\,,
    \label{eq:softbayes_DeltaT_hatDeltaT}
\end{align}
where we used \cref{lem:softbayes_max_deltat} on the third line.
By \cref{thm:isotune} (see \cref{eq:isotuning_upper_bound}),
with $B_T = \sum_{t\in[T]} b_t$,
\begin{align*}
    \hat\Delta_T \ \leq\ \inf_{y > 0}
    \left\{
    y + \sum_{t\in[T]}
    \left(1+\frac{a}{y}\right)
    \left(b_t + \frac{m\QQ}{y}\right)
    \right\}
    \ = \ \inf_{y > 0}
    \left\{
    y + 
    \frac{aB_T}{y} +\frac{m\QQ T}{y} +
    \frac{am\QQ T}{y^2}
    \right\}
    + B_T
    \ \leq\  2\sqrt{m\QQ T} + 3a \,,
\end{align*}
where the last inequality is because
by telescoping we have $B_T = \sum_{i\in \mathcal{N}'_T} \ln \frac{\ptx_{i, T+1}}{\ptx_{i, 1}} \leq m\ln\frac{N}{m} = a$,
and choosing $y =\max\{a, \sqrt{m\QQ T}\}$.
Finally, combining with \cref{eq:softbayes_DeltaT_hatDeltaT} gives:
\begin{align*}
    \Delta_T \leq 2\sqrt{m T \ln N} + \ln(1+2\Delta_T) + 
    4 m\ln\frac{N}{m} + 2\ln N\,,
\end{align*}
where $\ln(1+\Delta_T) \leq \ln(m\sqrt{T}\ln N) +O(1)$,
and the result follows by $\regret_T(\ptx^*)\leq 2\Delta_T$ from \cref{lem:generic_ob1}
(for which \cref{lem:softbayes_deltat_valid} satisfies a condition)
with $\QQ = \ln N \geq \phi_1 = \RE{\ptx^*}{\ptx_1}$.
\end{proof}

\begin{remark}[Dealing with the case $\Delta_{t-1} \leq a$]
(i) One may want to avoid the case $\Delta_{t-1} \leq a$ altogether
 by initializing $\Delta_0 = a+2q$ for example,
 but recall that $a = m\ln (N/m)$ is unknown to the algorithm, 
so it is not possible to initialize $\Delta_0$ in this way.
(ii) Setting $\Delta_0 = N \ln N > a$ is possible, but leads to an additive
$N\ln N$ term in the regret bound, which is not desirable for cases where $m \ll N$.
(iii) The next intuitive idea is to find the first step $\tau$ such that $\Delta_{\tau-1} > a$,
then use $\Delta_T = \Delta_{\tau-1} + \sum_{t=\tau}^T \delta_t$,
use $\Delta_{\tau-1}=\Delta_{\tau-2}+\delta_{\tau-1} \leq a + \max_t \delta_t$, and then proceed to bound $\sum_{t=\tau}^T \delta_t$ --- which would make bounding $\sum_{t=\tau}^T b_t$ difficult as it is not bounded by $ a$ in general, by contrast to $\sum_{t=1}^T b_t = B_T$.
(iv) But observe instead how `vacuously' bounding $\delta_t \leq \infty$ when $\Delta_{t-1} \leq a$ simplifies the whole process!
\end{remark}

\begin{remark}
It is possible to take $m < |\mathcal{N}_T|$ if some of the `good' experts
are good only a few times, and bound the corresponding $\delta_t$ outliers separately,
since $\delta_t \leq \ln O(NT)$.
\end{remark}

\begin{remark}
The leading factor 4 is likely due to using the convenient \cref{thm:isotune}, but it seems plausible that a more careful analysis would remove a factor $\sqrt{2}$.
\end{remark}

\subsection{Proofs of the Lemmas}\label{apdx:softbayes_lemma_proofs}

\begin{proof}[Proof of \cref{lem:softbayes_deltat_valid}]
Define $\phi(\ptx) = \sum_{i\in[N]}\ptx^*_i\ln\frac{\ptx^*_i}{\ptx_i}$
and thus $\phi_1 \leq \ln N = q$.
\begin{align*}
    \delta^*_t 
    &= \positive{\frac1{\rate_{t-1}}\sum_{i\in[N]}\ptx^*_i\ln\frac{\ptx_{i, t'}}{\ptx_{i, t}}
    + \ln\frac{\innerprod{\ptx^*, p_t}}{\bar p_t}} \\
    &= \positive{-\frac{1}{\rate_{t-1}}
    \sum_{i\in[N]}\ptx^*_i\ln\left(1-\rate_{t-1} + \rate_{t-1} \frac{p_{i, t}}{\bar p_t}\right)
    +
    \ln \sum_{i\in[N]}\ptx^*_i\frac{p_{i, t}}{\bar p_t}} \\
    &\leq \max_{i\in[N]}\ln\left(1+\frac{\rate_{t-1}}{1-\rate_{t-1}}\frac{p_{i, t}}{\bar p_t}\right) = \delta_t
\end{align*}
where we used \cref{lem:softbayes}
with $\ptx\leadsto\ptx^*$ and $b_i\leadsto\frac{p_{i,t}}{\bar p_t}$
on the last line.
\end{proof}

\begin{proof}[Proof of \cref{lem:softbayes_deltat_Deltat-1}]
The following holds for every $\mathcal{N}'_T\subseteq [N]$ such that $\mathcal{N}_T \subseteq \mathcal{N}'_T$.
Let $m = |\mathcal{N}'_T|$ 
and $a = m\ln \frac N m$.

Now, we seek to bound $\delta_t$ as a function of $\Delta_t$.
From \cref{lem:ocr_convexity} and the concavity of the log function, then using the update rule \cref{eq:soft_bayes_update} on the second line
along with $\ln(1+\rate - \rate x)= \ln(1+\frac{\rate}{1-\rate}x) +\ln(1-\rate)$, for all $i\in[N]$ we have:
\begin{align*}
    && \Delta_t \ln \ptx_{i, t+1} &\geq \Delta_{t-1}\ln \ptx_{i, t'} + \delta_t \ln \ptx_{i, 1} \\
    &\Leftrightarrow&
    \Delta_{t-1} \ln \ptx_{i, t+1} + \delta_t\ln \ptx_{i, t+1}
    &\geq  \Delta_{t-1}\ln \ptx_{i, t} +  \Delta_{t-1}\ln\left(1+\frac{\rate_{t-1}}{1-\rate_{t-1}}\frac{p_{i, t}}{\bar p_t}\right)
    + \Delta_{t-1}\ln(1-\rate_{t-1}) + \delta_t\ln \ptx_{i, 1} \\
    &\Leftrightarrow&
    \ln\left(1+\frac{\rate_{t-1}}{1-\rate_{t-1}}\frac{p_{i, t}}{\bar p_t}\right)
    &\leq
    \ln \frac{\ptx_{i, t+1}}{\ptx_{i, t}}
    +\frac{\delta_t}{\Delta_{t-1}}\ln\frac{\ptx_{i, t+1}}{\ptx_{i, 1}}
    -\ln(1-\rate_{t-1})
    \,, 
\end{align*}
where the last line follows by dividing both sides by $\Delta_{t-1}$ and reorganizing.
Observe that
$-\ln(1-\rate_{t-1}) = \ln(1+ \frac{\rate_{t-1}}{1-\rate_{t-1}}) \leq \frac{\rate_{t-1}}{1-\rate_{t-1}} = \frac{\QQ}{\Delta_{t-1}-\QQ}$,
and recall that $\Delta_{t-1} - q > 0$.
Hence,
using the definitions of $\delta_t$, of $\mathcal{N}'_T$ and of $\mathcal{N}_T$,
and loosely upper bounding $\delta_t$:
\begin{align}
    \delta_t\ \leq\ \sum_{i\in \mathcal{N}'_T}
    \ln\left(1+\frac{\rate_{t-1}}{1-\rate_{t-1}}\frac{p_{i, t}}{\bar p_t}\right)
    \ \leq 
    \ 
    \underbrace{\sum_{i\in \mathcal{N}'_T}
    \ln \frac{\ptx_{i, t+1}}{\ptx_{i, t}}}_{b_t}
    + 
    \frac{\delta_t}{\Delta_{t-1}} 
    \underbrace{\sum_{i\in \mathcal{N}'_T}\ln\frac{\ptx_{i, t+1}}{\ptx_{i, 1}}}_{\leq m\ln\frac{N}{m} = a}
    + m \frac{\QQ}{\Delta_{t-1}-\QQ}\,. \label{eq:sb_deltat_122}
\end{align}
Then 
\begin{align*}
    \delta_t\ \leq\ b_t + \frac{\delta_t}{\Delta_{t-1}}a + m \frac{q}{\Delta_{t-1}-q}
\end{align*}
that is,
\begin{align*}
    \delta_t\left(1-\frac{a}{\Delta_{t-1}}\right)
    &\leq  b_t + \frac{m\QQ}{\Delta_{t-1}-\QQ}\,.
\end{align*}
and since $1- a/\Delta_{t-1} > 0$  by assumption, the result follows by rearranging.
\end{proof}

\begin{proof}[Proof of \cref{lem:softbayes_max_deltat}]
Recall the definition of $\delta_t$ in \cref{eq:softbayes_defs}.
Using $\bar p_t \geq \ptx_{i, t} p_{i, t}$ for all $i$,
and $\rate_t /(1-\rate_t) \leq 2\rate_t$ (since $\rate_0=\Delta_0/\QQ = \tfrac12$), then 
\begin{align*}
    \delta_t 
    \leq \max_i \ln\left(1+\frac{\rate_{t-1}}{1-\rate_{t-1}}\frac{1}{\ptx_{i, t}}\right)
    \leq \max_i \ln\left(1+2\frac{\QQ}{\Delta_{t-1}}\frac{1}{\ptx_{i, t}}\right)
\end{align*}
Furthermore, since $\max_i p_{i,t} / \bar p_t \geq 1$, then
$\delta_t \geq \ln(1+\rate_{t-1}/(1-\rate_{t-1}))\geq \rate_{t-1}$.
Also, from the online correction \cref{eq:generic_ob1_ocr},
multiplying by $\Delta_t$ on both side, we obtain
\begin{align}\label{eq:sb_Deltax}
    \Delta_t\ptx_{i, t+1}\ =\ \Delta_{t-1}\ptx_{i, t'} + \delta_t\ptx_{i, 1}
    \ \geq\ \delta_t\ptx_{i, 1}\ \geq\ \rate_{t-1}\ptx_{i, 1} 
    \ =\ \frac{\QQ}{N\Delta_{t-1}}\,,
\end{align}
and by reordering we have
$\frac{\QQ}{\Delta_t\ptx_{i,t+1}} \leq N\Delta_{t-1}\,,$
and so also $\frac{\QQ}{\Delta_{t-1}\ptx_{i,t}} \leq N\Delta_{t-2}$.
Then
\begin{align*}
    \delta_t\ \leq\ \ln\left(1+2N\Delta_{t-2}\right)\ \leq\ \ln(1+2\Delta_T) + \ln N\,.
    &\qedhere
\end{align*}
\end{proof}

\section{STRENGTHENING EXISTING RESULTS}\label{apdx:campolongo}

\citet[Lemma 6.1]{campolongo2020temporal} prove the following lemma (slightly reformulated, but equivalent):
\begin{lemma}[{\citet[Lemma 6.1]{campolongo2020temporal}}]
Let $T\in\Naturals$, and $\Delta_0 = 0$c.
For all $t\in[T]$,
let $a_t \geq 0$ and $\Delta_t = \Delta_{t-1} + \delta_t$
with $\delta_t \leq \min\{b a_t, ca_t^2/(2\Delta_{t-1})\}$.
Then $\Delta_T \leq \sqrt{(b^2+c)\sum_{t\in[T]}a_t^2}$.
\end{lemma}
The original proof is simple enough, but the result is a somewhat loose.
Using the isotuning tools, we can readily tighten the bound to
$\Delta_T \leq \sqrt{c\sum_{t\in[T]}a_t^2} + \max_{t\in[T]}ba_t$
--- see \cref{lem:campolongo_tighter} below.
With $A= \max_t a_t$,
this improves the bound from $\Delta_T = O((b+\sqrt{c})A\sqrt{T})$
to $\Delta_T = O(A\sqrt{cT} + bA)$.

Let us take this opportunity to generalize the bound as well
(note that the meanings of $a_t$ and $b_t$ differ from above).
\begin{lemma}\label{lem:campolongo_tighter}
Let $T\in\Naturals$, and $\Delta_0 = 0$.
For all $t\in[T]$,
let $a_t, b_t \geq 0$ and $\Delta_t = \Delta_{t-1} + \delta_t$
with $\delta_t \leq \min\{b_t, a_t/\Delta_{t-1}\}$.
Then $\Delta_T \leq \sqrt{2\sum_{t\in[T]}a_t} + \max_{t\in[T]}b_t$.
\end{lemma}
\begin{proof}
For all $t\in[T]$, define $\hat g_t(x) = a_t/x$ for $x > 0$ 
and $\hat g_t(x) = \infty$ for $x\leq 0$,
and let $\tuple{\hat\Delta, \hat g}$ be an isotuning sequence.
Then since $\delta_t \leq \hat g_t(\Delta_{t-1})$,
by \cref{lem:generic_ob1_DeltaT},
$\Delta_T \leq \hat\Delta_T + \max_{t\in[T]} \delta_t$.
The result follows by using \cref{lem:Delta_sqrt_short} on $\hat\Delta_T$,
and the upper bound on $\delta_t$.
\end{proof}

Similarly, \Cref{lem:campolongo_tighter} can also be used to improve the analysis of AdaFTRL \citep[Appendix B]{orabona2018solo} --- without changing the algorithm --- to
\begin{align*}
    \regret_T(\ptx^*)\ \leq\ (1+R(\ptx^*))\left(\sqrt{\frac1{\lambda}\sum_{t\in[T]} \|\loss_t\|_*^2} + \diam \max_{t\in[T]} \|\loss_t\|_*\right)
\end{align*}
instead of the original bound
\begin{align*}
    \regret_T(\ptx^*)\ \leq\ \sqrt{3}(1+R(\ptx^*))\max\left\{\diam, \frac1{\sqrt{2\lambda}}\right\}\sqrt{\sum_{t\in[T]} \|\loss_t\|_*^2}\ .
\end{align*}
Indeed, the authors prove that $\regret_T(\ptx^*)\leq (1+R(\ptx^*))\Delta_T$
with 
\begin{align*}
    \Delta_t \leq \Delta_{t-1} + \min\left\{\diam\|\loss_t\|_*, \ \frac{\|\loss_t\|_*^2}{2\lambda}\right\}
\end{align*}
so, now, applying \cref{lem:campolongo_tighter} gives the result immediately.

\clearpage
\section*{Table of Notation}\label{sec:notation_table}

\begin{tabular}{l|l}
Notation & Meaning \\
\hline
$\Nonnegints$ & \{0, 1,\dots\} \\
$\Naturals$ & \{1, 2,\dots\} \\
$[n]$ & $\{1, 2, \dots, n\}$ \\
$\positive{x}$ & $\max\{0, x\}$ \\
$\indicator{test}$ & indicator function \\
$\iso(\cdot, \cdot) $ & iso operator (\cref{sec:isotuning})\\
$T$ & some (unknown) number of rounds to play\\
$t$ & some round index, usually $t \in[T]$\\
$N $     & number of coordinates/experts\\
$\ptxset $   & convex subset of $\Reals^N$ \\
$\probsimplex{N}$ & probability simplex with $N$ coordinates \\
$\ptx^* $     & some competitor point in $\ptxset$ \\
$\ptx_t $     & point in $\ptxset$ played by the algorithm \\
$\ptx_{t'} $     &  algorithm-specific update of $\ptx_t$ before online correction\\
$\loss_t(\ptx)$ & OCO: loss function at round $t$\\
$\nabla_t$ & OCO: $\nabla \loss_t$ gradient of the loss function at round $t$\\
$\loss_t$ &  OLO: in $\Reals^N$, loss vector at round $t$; corresponds to $\nabla_t$ in OCO\\
$\loss_{i, t}$ &  in $\Reals$, loss of coordinate/expert $i$ at round $t$\\
$\bar\loss_t$  & $\sum_i \ptx_{i, t}\loss_{i, t}$ loss of the expert algorithm at round $t$\\
$G $ & OCO: $\max_{t\in[T]} \|\nabla_t\|_*$, largest observed gradient\\
$L $ & OLO: $\max_{t\in[T]} \|\loss_t\|_*$, largest observed loss \\
$L_t$ & FTRL: $\sum_{s\in[t]} \loss_s$, cumulative loss\\
$\regret_T(\ptx^*) $     & cumulative regret of the algorithm compared to $\ptx^*\in\ptxset$ \\
$r_t $     &  instantaneous regret of the algorithm, usually $\innerprod{\ptx_t-\ptx^*, \loss_t}$ (OLO) or $\loss_t(\ptx_t) - \loss_t(\ptx^*)$ (OCO)\\
$r_{i, t}$ & $\bar\loss_t - \loss_{i, t}$ instantaneous regret of the algorithm against expert $i$ at round $t$ \\
$s_t$ & $\max_{i\in[N]} |r_{i, t}|$ \\
$S$ & $\max_{t\in[T]} s_t$ \\
$V_{i, T}$ & $\sum_{t\in[T]} r_{i, t}^2$  \\ 
$\diam $     & $\max_{(a, b)\in\ptxset^2} \|a-b\|$, diameter of $\ptxset$ according to $\|\cdot\|$\\
$\rate$ & learning rate, $\geq 0$\\
$\QQ$ & positive parameter of the algorithms\\
$\Delta_t $     & $\Delta_0 + \sum_{s\in[t]}\delta_s$; usually $\Delta_0 = 0$ \\
$\delta^*_t$    & algorithm-specific quantity, \cref{eq:delta*t}\\
$\delta_t $     & algorithm-specific quantity, usually satisfies \cref{eq:generic_deltat_ob1} \\
$\hat{g}_t(\cdot)$ & non-increasing continuous positive function such that usually $\delta_t \leq \hat{g}_t(\Delta_{t-1})$ \\
$\hat\Delta_t$ & defined such that $\tuple{\hat\Delta,\hat{g}}$ is an isotuning sequence \\
$\|\cdot\| $     & some norm\\
$\|\cdot\|_*$     & dual norm of $\|\cdot\|$ \\
$R $     &  regularizer, usually 1-strongly convex w.r.t. some norm $\|\cdot\|$ and non-negative \\
$\breg_R $     & Bregman divergence \\
$\RE{\cdot}{\cdot} $     & relative entropy \\
$\phi $     & some convex function \\
$\barT $ & subset of $[T]$ where a null update is performed (\cref{sec:null_updates})\\
$\tau $ & $\max \barT$, last round before $T$ a null update is performed (\cref{sec:null_updates})\\
\end{tabular} 
\end{document}